\newtheorem{theorem}{Theorem}
\newtheorem{lemma}{Lemma}[section]
\newtheorem{claim}{Claim}[lemma]
\theoremstyle{remark}
\newtheorem{remark}{Remark}
\theoremstyle{definition}
\newtheorem{definition}{Definition}
\newtheorem{problem}{Problem}
\newtheorem{exmp}[lemma]{Example}
\DeclarePairedDelimiter\floor{\lfloor}{\rfloor}
\def\FullBox{\hbox{\vrule width 8pt height 8pt depth 0pt}}
\def\qed{\ifmmode\qquad\FullBox\else{\unskip\nobreak\hfil
\penalty50\hskip1em\null\nobreak\hfill$\Box$
\parfillskip=0pt\finalhyphendemerits=0\endgraf}\fi}
\def\qedsketch{\ifmmode\Box\else{\unskip\nobreak\hfil
\penalty50\hskip1em\null\nobreak\hfil$\Box$
\parfillskip=0pt\finalhyphendemerits=0\endgraf}\fi}
\newcommand{\by}{\mathbf{y}}
\newcommand{\bx}{\mathbf{x}}
\newcommand{\bb}{\mathbf{b}}
\newcommand{\bz}{\mathbf{z}}
\newcommand{\bv}{\mathbf{v}}
\newcommand{\bI}{\mathbf{I}}
\newcommand{\bu}{\mathbf{u}}
\newcommand{\bs}{\mathbf{s}}
\newcommand{\ATA}{A^\top A}
\newcommand{\ATb}{A^\top\bb}
\newcommand{\Dt}{\Delta t}
\newcommand{\bmu}{\boldsymbol{\mu}}
\newcommand{\OPT}{\mathbf{OPT}}
\DeclareMathOperator*{\argmin}{arg\,min}
\newcommand{\ie} {{\it i.e.,\ }}
\newcommand{\R}{{\mathbb R}} 
\newcommand{\N}{{\mathbb{N}}} 
\newcommand{\poly}{{\mathrm{poly}}}
\newcommand{\polylog}{{\mathrm{polylog}}}
\newcommand{\class}[1]{\mathbf{#1}}
\newcommand{\NP}{\class{NP}}
\title{On the Algorithmic Power of Spiking Neural Networks}
\author{Chi-Ning Chou\thanks{School of Engineering and Applied Sciences, Harvard University, USA. Supported by NSF awards CCF 1565264 and CNS 1618026. Email: \texttt{chiningchou@g.harvard.edu}.}}
\author{Kai-Min Chung\thanks{Institute of Information Science, Academia Sinica, Taipei, Taiwan.}}
\author{Chi-Jen Lu\thanks{Institute of Information Science, Academia Sinica, Taipei, Taiwan.}}
\affil[]{}
\date{\today} 
\begin{document}
	
\maketitle

\begin{abstract}
\emph{Spiking Neural Networks} (SNN) are mathematical models in neuroscience to describe the dynamics among a set of neurons that interact with each other by firing instantaneous signals, \emph{a.k.a.}, \emph{spikes}. 
Interestingly, a recent advance in neuroscience [Barrett-Den\`{e}ve-Machens, NIPS 2013] showed that the neurons' \emph{firing rate}, \emph{i.e.,} the average number of spikes fired per unit of time, can be characterized by the optimal solution of a quadratic program defined by the parameters of the dynamics. This indicated that SNN potentially has the computational power to solve non-trivial quadratic programs. However, the results were justified empirically without rigorous analysis. 

We put this into the context of \emph{natural algorithms} and aim to investigate the algorithmic power of SNN. Especially, we emphasize on giving rigorous asymptotic analysis on the performance of SNN in solving optimization problems. To enforce a theoretical study, we first identify a simplified SNN model that is tractable for analysis. Next, we confirm the empirical observation in the work of Barrett et al. by giving an upper bound on the convergence rate of SNN in solving the quadratic program. Further, we observe that in the case where there are infinitely many optimal solutions, SNN tends to converge to the one with smaller $\ell_1$ norm. We give an affirmative answer to our finding by showing that SNN can solve the $\ell_1$ minimization problem under some regular conditions.

Our main technical insight is a \emph{dual view} of the SNN dynamics, under which SNN can be viewed as a new natural primal-dual algorithm for the $\ell_1$ minimization problem. We believe that the dual view is of independent interest and may potentially find interesting interpretation in neuroscience. 
\end{abstract}

\vspace{10pt} 
	
\section{Introduction}\label{sec:intro}
The theory of \textit{natural algorithms} is a framework that bridges the algorithmic thinking in computer science and the mathematical models in biology. 
Under this framework, biological systems are viewed as \textit{algorithms} to \emph{efficiently} solve specific \textit{computational problems}.
Seminal works such as bird flocking~\cite{chazelle2009natural,chazelle2012natural}, slime systems~\cite{nakagaki2000intelligence, tero2007mathematical, bonifaci2012physarum}, and evolution \cite{livnat2014satisfiability,livnat2016sex} successfully provide algorithmic explanations for different natural objects. These works give rigorous theoretical results to confirm empirical observations, shed new light on the biological systems through computational lens, and sometimes lead to new biologically inspired algorithms.

In this work, we investigate \textit{Spiking Neural Networks (SNNs)} as natural algorithms for solving convex optimization problems. SNNs are mathematical models for biological neural networks where a network of neurons transmit information by \textit{firing spikes} through their synaptic connections (\textit{i.e.,} edges between two neurons).
Our starting point is a seminal work of Barrett, Den\`{e}ve, and Machens \cite{barrett2013firing}, where they showed that the \emph{firing rate} (\textit{i.e.,} the average number of spikes fired by each neuron) of a certain class of \textit{integrate-and-fire} SNNs can be characterized by the optimal solutions of a quadratic program defined by the parameters of SNN. Thus, the SNN can be viewed as a natural algorithm for the corresponding quadratic program. However, no rigorous analysis was given in their work. 

We bridge the gap by showing that the firing rate converges to an optimal solution of the corresponding quadratic program with an explicit polynomial bound on the convergent rate. Thus, the SNN indeed gives an \textit{efficient algorithm} for solving the quadratic program. To the best of our knowledge, this is the first result with an explicit bound on the convergent rate. Previous works~\cite{shapero2013configurable,shapero2014optimal,tang2017sparse} on related SNN models for optimization problems are either heuristic or only proving convergence results when the time goes to infinity (see Section~\ref{sec:intro related} for full discussion on related works).

We take one step further to ask what other optimization problems can SNNs efficiently solve. As our main result, we show that when configured properly, SNNs can solve the \textit{$\ell_1$ minimization problem}\footnote{The problem is defined as given matrix $A\in\R^{m\times n}$, vector $\bb\in\R^m$, and guaranteed that there is a solution to $A\bx=\bb$. The goal is finding a solution $\bx$ with the smallest $\ell_1$ norm. See Section~\ref{sec:preliminaries} for formal definition.} in polynomial time\footnote{The running time is polynomial in a parameter depending on the inputs. In some cases, this parameter might cause the running to be quasi-polynomial or sub-exponential. See Section~\ref{sec:nice} for more details.}. Our main technical insight is interpreting the dynamics of SNNs in a \textit{dual space}. In this way, SNNs can be viewed as a new primal-dual algorithm for solving the \textit{$\ell_1$ minimization problem}. 

In the rest of the introduction, we will first briefly introduce the background of spiking neural networks (SNNs) and formally define the mathematical model we are working on. Next, our results will be presented and compared with other related works. Finally, we wrap up this section with potential future research directions and perspectives.

\subsection{Spiking Neural Networks}\label{sec:intro SNN model}

Spiking neural networks (SNNs) are mathematical models for the dynamics of biological neural networks.
An SNN consists of neurons, and each of them is associated with an intrinsic electrical charge called \textit{membrane potential}. When the potential of a neuron reaches a certain level, it will fire an instantaneous signal, \textit{i.e.,} \textit{spike}, to other neurons and increase or decrease their potentials.

Mathematically, the dynamic of neuron's membrane potential in an SNN is typically described by a \textit{differential equation}, and there are many well-studied models such as the \textit{integrate-and-fire model}~\cite{lapicque1907recherches}, the \textit{Hodgkin-Huxley model}~\cite{hodgkin1952quantitative}, and their variants~\cite{fitzhugh1961impulses,stein1965theoretical,morris1981voltage,hindmarsh1984model,gerstner1995time,kistler1997reduction,brunel2003firing,fourcaud2003spike,izhikevich2003simple,teka2014neuronal}. 
In this work, we focus on the integrate-and-fire model defined as follows.
Let $n$ be the number of neurons and $\bu(t)\in\R^n$ be the vector of membrane potentials where $\bu_i(t)$ is the potential of neuron $i$ at time $t$ for any $i\in[n]$ and $t\geq0$. The dynamics of $\bu(t)$ can be described by the following differential equation: for each $i\in[n]$ and $t\geq0$
\begin{equation}\label{eq:SNN IAF differential equation}
\frac{d}{dt}\bu_i(t) = \sum_{j\in[n]}-C_{ji}(t)\bs_j(t) + \bI_i(t)
\end{equation}
where the initial value of the potentials are set to 0, \ie $\bu_i(0)=0$ for each $i\in[n]$.
There are two terms that determine the dynamics of membrane potentials as shown in~\eqref{eq:SNN IAF differential equation}. The simpler term is the input charging\footnote{Also known as \textit{input signal} or \textit{input current}.} $\bI(t)\in\R^n$, which can be thought of as an external effect on each neuron. The other term models the instantaneous spike effect among neurons. Specifically, the $-C_{ji}(t)\bs_j(t)$ term models the effect on the potential of neuron $i$ when neuron $j$ fires a spike. Here $C(t)\in\R^{n\times n}$ is the connectivity matrix that encodes the \textit{synapses} between neurons, where $C_{ji}(t)$ describes the connection strength from neuron $j$ to neuron $i$. $\bs(t)\in\R^n$ is the \textit{spike train} that records the spikes of each neuron, and $\bs_i(t)$ can be thought of as indicating whether neuron $i$ fires a spike at time $t$. To sum up, the $-C_{ji}(t)\bs(t)$ term decreases\footnote{If $C_{ji}(t^*)<0$, then the potential of neuron $i$ actually \textit{increases by $|C_{ji}(t^*)|$.}} the potential of neuron $i$ by $C_{ji}(t^*)$ whenever neuron $j$ fires a spike at time $t^*$.

The spike train $\bs(t)$ is determined by the spike events, which are in turn determined by the spiking rule. A typical spiking rule is the threshold rule. Specifically, let $\eta>0$ be the spiking threshold, the threshold rule simply says that neuron $i$ fires a spike at time $t$ if and only if $\bu_i(t)>\eta$. Next, record the timings when neuron $i$ fires a spike as $0\leq t_1^{(i)}<t_2^{(i)}<\dots$ and let $k_i(t)$ be the number of spikes within time $[0,t]$. An important statistics of the dynamics is the \textit{firing rate} defined as $\bx_i(t) := k_i(t) / t$ for neuron $i\in[n]$ at time $t$, namely, the average number of spikes of neuron $i$ up to time $t$. The last thing we need for specifying $\bs(t)$ is the \textit{spike shape}, which can be modeled as a function $\delta:\R_{\geq0}\rightarrow\R$. Intuitively, the spike shape describes the effect of a spike, and standard choices of $\delta$ could be the Dirac delta function or a pulse function with an exponential tail. Now we can define $\bs_i(t)=\sum_{1\leq s\leq k_i(t)}\delta(t-t_s^{(i)})$ to be the \textit{spike train} of neuron $i$ at time $t$.


We provide the following concrete example to illustrate the SNN dynamics introduced above.

\begin{exmp}
	Let $n=2$, $\eta=1$, and $\delta$ be the Dirac delta function such that for any $\epsilon>0$, $\int_0^\epsilon\delta(t)dt=1$ and $\delta(t)\geq0$ for any $t\geq0$. Let both input charging and connectivity matrix be \textit{static}, \ie $\bI(t)=\bI$ and $C(t)=C$ for any $t\geq0$, and consider
	\[
	C = \begin{pmatrix}1&0\\-0.1&1
	\end{pmatrix},\ \bI=\begin{pmatrix}0.1\\0
	\end{pmatrix},\text{ and }\bu(0)=\begin{pmatrix}0\\0\end{pmatrix}.
	\]
	
	In Figure~\ref{fig:example}, we simulate this SNN for 500 seconds. 
	We can see that neuron 1 fires a spike every ten seconds while neuron 2 fires a spike every one hundred seconds. As a result, the firing rate of neuron 1 will gradually converge to 0.1 and that of neuron 2 will go to 0.01.
	
	\begin{figure}[h]
		\centering
		\includegraphics[width=14cm]{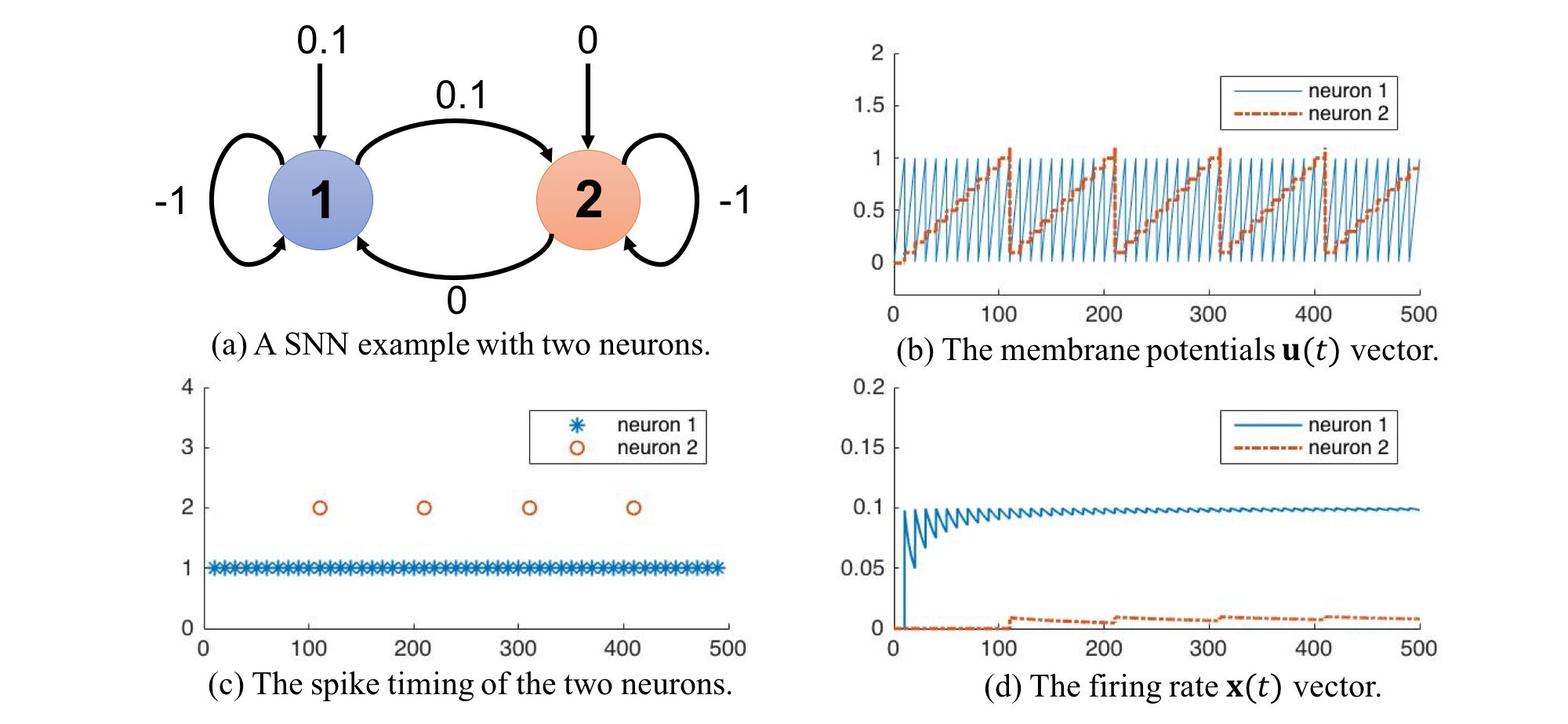}
		\caption{The example of SNN with two neurons. In (a), we describe the dynamic of this SNN. Note that the effect of spikes is the negation of the synapse encoded in the connectivity matrix $C$. In (b), we plot the membrane potential vectors $\bu(t)$. In (c), we plot the timings when neurons fire a spike. One can see that neuron 1 fires a spike every ten seconds while neuron 2 fires a spike every one hundred seconds. In (d), we plot the firing rate vector $\bx(t)$. One can see that the firing rate of neuron 1 will gradually converge to 0.1 and that of neuron 2 will go to 0.01.}
		\label{fig:example}
	\end{figure}
\end{exmp}

In general, both the input charging vector $\bI(t)$ and the connectivity matrix $C(t)$ can evolve over time, in which the change of $\bI(t)$ models the variation of the environment and the change of $C_{ji}(t)$ captures the adaptive \textit{learning} behavior of the neurons to the environmental change. Understanding how synapses evolve over time (\ie synapse plasticity) is a very important subject in neuroscience.
However, in this work, we follow the choice of Barrett et al.~\cite{barrett2013firing} and consider \emph{static} SNN dynamics, where both the input charging $\bI(t)$ and the synapses $C(t)$ are constants. Although this is a special case compared to the general model in~\eqref{eq:SNN IAF differential equation}, we justify the choice of static SNN by showing that SNN already exhibits non-trivial computational power even in this restricted model.

As in Barrett et al.~\cite{barrett2013firing}, we focus on static SNN and view it as a natural algorithm for optimization problems. Specifically, given an instance to the optimization problem, the goal is to configure a static SNN (by setting its parameters) so that the firing rate converge to an optimal solution efficiently. In this sense, the result of Barrett et al.~\cite{barrett2013firing} can be interpreted as a natural algorithm for certain quadratic programs. In our eyes, the solution being encoded as the firing rate is an interesting and peculiar feature of the SNN dynamics. 
Also, the dynamics of a static SNN can be viewed as a simple distributed algorithm with a simple communication pattern. Specifically, once the dynamics is set up, each neuron only needs to keep track of its potential and communicate with each other through spikes.

\subsection{Our Results}

Barrett et al.~\cite{barrett2013firing} gave a clean characterization of the firing rates by the network connectivity and input signal. Concretely, they considered \textit{static} SNN where both the connectivity matrix $C\in\R^{n\times n}$ and the external charging $\bI\in\R^n$ do not change with time. They argued that the firing rate would converge to the solution of the following quadratic program.
\begin{equation}\label{op:quadratic program}
\begin{aligned}
& \underset{\bx\in\R^n}{\text{minimize}}
& & \|C\bx-\bI\|_2^2 \\
& \text{subject to}
& & \bx_i\geq0,\ \forall i\in[n].
\end{aligned}
\end{equation}
They supported this observation by giving simulations on the so called \textit{tightly balanced networks} and yielded pretty accurate predictions in practice. 
Also, they heuristically explained the reason how they came up with the quadratic program. 
However, no rigorous theorem had been proved on the convergence of firing rate to the solution of this quadratic program.\\

To give a theoretical explanation for the discovery of~\cite{barrett2013firing}, we start with a simpler SNN model to enable the analysis.

\paragraph{The simple SNN model}
In the simple SNN model, we make two simplifications on the general model in~\eqref{eq:SNN IAF differential equation}. 

First, we pick the shape of spike to be the Dirac delta function. That is, let $\delta(t) = \mathbf{1}_{t=0}$ and thus $\bs_i(t)=\mathbf{1}_{\bu_i(t)>\eta}$. This simplification saves us from complicated calculation while the Dirac delta function still captures the instantaneous behavior of a spike.

Second, we consider the connectivity matrix $C$ in the form $C=\alpha\cdot A^\top A$ where $\alpha>0$ is the spiking strength and $A\in\R^{m\times n}$ is the Cholesky decomposition of $C$. The reason for introducing $\alpha$ is to model the height of the Dirac delta function. Mathematically, it is redundant to have both $\alpha$ and $C$ since the model remains the same when combining $\alpha$ with $C$. However, as we will see in the next subsection, separating $\alpha$ and $C$ is meaningful as $C$ corresponds to the \textit{input} of the computational problem and $\alpha$ is the parameter that one can choose to configure an SNN to solve the problem.

In this work, we focus on the algorithmic power of SNN in the following sense. Given a problem instance, one configures a SNN and sets the firing rate $\bx(t)$ to be the output at time $t$. We say this SNN solves the problem if $\bx(t)$ converges to the solution of the problem.

\paragraph{Simple SNN solves the non-negative least squares.} As mentioned, Barrett et al.~\cite{barrett2013firing} identified a connection between the firing rate of SNN with integrate-and-fire neurons and a quadratic programming problem~\eqref{op:quadratic program}. They gave empirical evidence for the correctness of this connection, however, no theoretical guarantee had been provided. Our first result confirms their observation by giving the first theoretical analysis. Specifically, when $C=A^\top A$ and $\bI=A^\top\bb$, the firing rate will converge to the solution of the following \textit{non-negative least squares problem}.

\begin{equation}\label{op:NNLS}
\begin{aligned}
& \underset{\bx\in\R^n}{\text{minimize}}
& & \|A\bx-\bb\|_2^2 \\
& \text{subject to}
& & \bx_i\geq0,\ \forall i\in[n].
\end{aligned}
\end{equation}

\begin{theorem}[informal]\label{thm:quadratic program informal}
	Given $A\in\R^{m\times n}$, $\bb\in\R^m$, and $\epsilon>0$. Suppose $A$ satisfies some regular conditions\footnote{More details about the regular conditions will be discussed in Section~\ref{sec:nice}.}. Let $\bx(t)$ be the firing rate of the simple SNN with $0<\alpha\leq\alpha(A)$ where $\alpha(A)$ is a function depending on $A$. When $t\geq\Omega(\frac{\sqrt{n}}{\epsilon\cdot\|\bb\|_2})$,\footnote{The $\Omega(\cdot)$ and the $O(\cdot)$ later both hide the dependency on some parameters of $A$. See Section~\ref{sec:nice}.} $\bx(t)$ is an $\epsilon$-approximate solution\footnote{See Definition~\ref{def:non-negative least squares eps approx sol} for the formal definition of $\epsilon$-approximate solution.} for the non-negative least squares problem of $(A,\bb)$.
\end{theorem}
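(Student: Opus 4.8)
The plan is to use the \emph{integrated form} of the SNN dynamics, which expresses the membrane potential as a multiple of the gradient of the least-squares objective at the current firing rate, and then to read off approximate KKT conditions for~\eqref{op:NNLS} from the sign and size of the potential. Concretely, integrating~\eqref{eq:SNN IAF differential equation} from $0$ to $t$ with the Dirac-delta spike shape uses $\int_0^t \bs_j(\tau)\,d\tau = k_j(t)$, and since $C=\alpha A^\top A$ and $\bI$ (a suitable normalization of $A^\top\bb$) are constant in time, it yields the fundamental identity
\[ \bu(t) \;=\; t\bigl(\bI - C\bx(t)\bigr) \;=\; -\,\alpha\,t\,\nabla f\bigl(\bx(t)\bigr), \qquad f(\bx)\eqdef \tfrac12\|A\bx-\bb\|_2^2, \]
with $\bx(t)=\bk(t)/t$ the firing rate. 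Two facts are then immediate: $\bx(t)\ge 0$ for every $t$ (a vector of nonnegative counts divided by $t$), so the output is \emph{always primal feasible} for~\eqref{op:NNLS}; and $\nabla_i f(\bx(t)) = -\bu_i(t)/(\alpha t)$, so bounding the potential is the same as bounding the gradient. It therefore suffices to control $\bu(t)$.

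The core of the argument is two potential bounds, which is where the regular conditions on $A$ and the hypothesis $\alpha\le\alpha(A)$ enter. First, an \emph{upper bound} $\bu_i(t)\le\eta+\gamma$ for all $i$ and $t$: the threshold rule forces a spike (and a drop of $C_{ii}=\alpha\|A_i\|_2^2$) whenever $\bu_i$ exceeds $\eta$, so the only way to sit above threshold is via a one-shot jump $-C_{ji}$ caused by another neuron's spike, of magnitude at most $\alpha\max_{j\ne i}|\langle A_j,A_i\rangle|$; choosing $\alpha$ small keeps $\gamma$ bounded. Second, a \emph{lower bound on active neurons}: right after a spike, $\bu_i > \eta - C_{ii}$, and over any bounded time window the cumulative depression $\sum_{j\ne i}C_{ji}(k_j(t)-k_j(t'))$ from other neurons is bounded once the firing rates $\|\bx(t)\|_1$ are known to be bounded, so $\bu_i(t)\ge -\gamma'$ whenever neuron $i$ has fired recently; a neuron that fires only finitely often has $k_i(t)$ frozen while $\bu_i(t)$ grows at most linearly, so $k_i(t)\bu_i(t)$ is still $O(t)$. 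Combining the two, $\nabla_i f(\bx(t)) \ge -O(1/t)$ for all $i$ (approximate dual feasibility) and $\langle\bx(t),\nabla f(\bx(t))\rangle = -\langle\bx(t),\bu(t)\rangle/(\alpha t) \le O(1/t)$ (approximate complementary slackness), with constants depending on $A$, $\eta$, $\alpha$, $n$.

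These estimates plus primal feasibility close the argument by one line of convexity: for convex $f$ and any feasible $\bx\ge 0$ with $\nabla f(\bx)\ge-\eps_1\mathbf 1$ and $\langle\bx,\nabla f(\bx)\rangle\le\eps_2$,
\[ f(\bx)-f(\bx^*) \;\le\; \langle\nabla f(\bx),\bx-\bx^*\rangle \;=\; \langle\nabla f(\bx),\bx\rangle - \langle\nabla f(\bx),\bx^*\rangle \;\le\; \eps_2 + \eps_1\|\bx^*\|_1. \]
Substituting $\eps_1,\eps_2 = O(1/t)$ gives $f(\bx(t)) - \OPT = O(1/t)$; chasing the constants — the $\sqrt n$ from passing between $\ell_\infty$ and $\ell_2/\ell_1$ norms, the $\|\bb\|_2$ from the scale of $\bI=\alpha A^\top\bb$ and of $\OPT$ — rearranges to the claimed threshold $t = \Omega(\sqrt n/(\eps\|\bb\|_2))$, after which a last step converts the objective gap into the paper's $\eps$-approximate solution (Definition~\ref{def:non-negative least squares eps approx sol}).

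The main obstacle is the lower bound on the potentials of active neurons: it is a genuinely dynamical statement — it does \emph{not} follow from the integrated identity, which becomes tautological the moment one feeds it back into a norm bound on $\bx(t)$ — and it is precisely where one must use full column rank of $A$, a non-degeneracy/strict-complementarity condition on the NNLS optimum, and the smallness of $\alpha$ to rule out a neuron firing, pausing for an unboundedly long stretch while its potential sinks, and firing again. Getting this with explicit constants is also what pins down $\alpha(A)$ and the polynomial convergence rate.
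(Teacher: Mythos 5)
Your setup (the integrated identity $\bu(t)=-\alpha t\,\nabla f(\bx(t))$, automatic primal feasibility, and the easy upper bound $\bu_i(t)\le\eta+O(\alpha)$, which is just dual feasibility in disguise) is fine, but the proof has a genuine hole exactly where you flag it, and it is not a technicality one can defer: the approximate complementary slackness $\langle\bx(t),\nabla f(\bx(t))\rangle\le O(1/t)$, i.e.\ the claim $\langle\bk(t),\bu(t)\rangle\ge -O(t)$, does not follow from your ``fired recently'' reasoning. The dangerous scenario is not a neuron that pauses and fires again, but a neuron outside the optimal support that fires at a constant rate during an initial transient of length $T_0$ and then goes silent while its potential sinks linearly (this sinking really happens: in the dual view $\bu_i(t)=A_i^\top\bv^+(t)+t\,A_i^\top(\bb-\bb^+)$ with $A_i^\top(\bb-\bb^+)<0$ off the optimal support). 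At time $t=2T_0$ such a neuron still has $\bk_i(t)/t=\Theta(1)$ while $\bu_i(t)=-\Theta(t)$, so $\bk_i(t)\bu_i(t)=-\Theta(t^2)$ and your complementary-slackness error is $\Theta(1)$, not $O(1/t)$. Ruling this out requires showing that ``wrong'' neurons fire only $o(t)$ times in total, which is a global statement about the whole trajectory; nothing in your sketch provides it. This is precisely what the paper's argument supplies by different means: it works entirely in the dual space and proves by induction (Lemma~\ref{lem:quadratic dual bounded}, built on Lemma~\ref{lem:ideal SNN unchaged}) that the shifted dual trajectory $\bv^+(t)=t\,(\bb^+-A\bx(t))$ stays for all $t$ inside the intersection of the dual polytope $\{\bv:\ A_i^\top\bv\le 1\}$ with the cone spanned by the columns of $A$, hence $\|t\,(\bb^+-A\bx(t))\|_2\le\sqrt{\lambda_{\max}n}/\lambda_{\min}$ pointwise in $t$; the theorem then follows in one line without ever passing through KKT residuals for the primal iterate.

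There is also a quantitative mismatch even if you filled the gap. Your convexity step bounds the objective gap, $f(\bx(t))-\OPT\le\eps_2+\eps_1\|\bx^*\|_1=O(1/t)$, but the theorem's error measure (Definition~\ref{def:non-negative least squares eps approx sol}) is $\|A\bx(t)-A\bx^*\|_2\le\eps\|\bb\|_2$, and converting the gap into this distance via $\|A\bx-A\bx^*\|_2^2\le 2\bigl(f(\bx)-\OPT\bigr)$ costs a square root: you would only get $\|A\bx(t)-A\bx^*\|_2=O(1/\sqrt{t})$, i.e.\ a threshold $t=\Omega(1/(\eps^2\|\bb\|_2^2))$, strictly weaker than the claimed $t=\Omega(\sqrt{n}/(\eps\|\bb\|_2))$. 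The paper's invariant bounds the residual $\|A\bx(t)-\bb^+\|_2$ directly at rate $O(\sqrt{n}/t)$ and therefore achieves the stated $1/\eps$ dependence; to match it along your route you would need complementary slackness at rate $O(1/t^2)$, which your estimates cannot give.
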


See Theorem~\ref{thm:quadratic program} in Section~\ref{sec:quadratic program} for the formal statement of this theorem. To the best of our knowledge, this is the first\footnote{See Section~\ref{sec:intro related} for comparisons with related works.} theoretical result on the analysis of SNN with an explicit bound on the convergence rate and shows that SNN can be implemented as an efficient algorithm for an optimization problem. 

\paragraph{Simple SNN solves the $\ell_1$ minimization problem.} 
In addition to solving the non-negative least squares problem, as our main result, we also show that the simple SNN is able to solve the \textit{$\ell_1$ minimization problem}, which is defined as minimizing the $\ell_1$ norm of the solutions of $A\bx=\bb$.
$\ell_1$ minimization problem is also known as the {\it basis pursuit} problem proposed by Chen et al. \cite{chen2001atomic}. The problem is widely used for recovering sparse solution in compressed sensing, signal processing, face recognition etc. 

Before the discussion on $\ell_1$ minimization, let us start with a digression on the \textit{two-sided} simple SNN for the convenience of future analysis.
$$
\frac{d}{dt}\bu(t) = -\alpha\cdot A^\top A\bs(t) +A^\top\bb
$$
where $\bs_i(t)=\mathbf{1}_{\bu_i(t)>\eta}-\mathbf{1}_{\bu_i(t)<-\eta}$.
Note that the two-sided SNN is a special case of the one-sided SNN in the sense that one can use the one-sided SNN to simulate the two-sided SNN as follows. Given a two-sided SNN described above with connectivity matrix $C=A^\top A$ and external charging $\bI=A^\top\bb$. Let $C'=\bigl( \begin{smallmatrix}A^\top A & -A^\top A \\ -A^\top A & A^\top A\end{smallmatrix}\bigr)$ and $\bI'=\bigl( \begin{smallmatrix} A^\top\bb \\ -A^\top\bb\end{smallmatrix}\bigr)$. Intuitively, this can be thought of as duplicating each neuron and flip its connectivities with other neurons.

To solve the $\ell_1$ minimization problem, we simply configure a two-sided SNN as follows. Given an input $(A,\bb)$, let $C=A^\top A$ and $\bI=A^\top\bb$. Now, we have the following theorem.

\begin{theorem}[informal]\label{thm: l1 informal}
	Given $A\in\R^{m\times n}$, $\bb\in\R^m$, and $\epsilon>0$. Suppose $A$ satisfies some regular conditions. Let $\bx(t)$ be the firing rate of the two-sided simple SNN with $0<\alpha\leq\alpha(A)$ where $\alpha(A)$ is a function depending on $A$. When $t\geq\Omega(\frac{n^3}{\epsilon^2})$, $\bx(t)$ is an $\epsilon$-approximate solution\footnote{See Definition~\ref{def:l1 min eps approx sol} for the formal definition of $\epsilon$-approximate solution.} for the $\ell_1$ minimization problem of $(A,\bb)$.
\end{theorem}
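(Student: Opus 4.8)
The plan is to develop and exploit the \emph{dual view} of the SNN dynamics promised in the introduction. I would work directly with the two-sided simple SNN (the natural fit, since the $\ell_1$ program optimizes over all of $\R^n$ rather than the nonnegative orthant; one could instead invoke the one-sided reduction of the introduction, but the two-sided dynamics is cleaner to analyze). First I would integrate its defining differential equation. Writing $\bk(t)\in\R^n$ for the vector of signed spike counts, so that the firing rate is $\bx(t)=\bk(t)/t$, and using $\bu(0)=0$, integration yields the \emph{conservation law}
\[
\bu(t)=A^\top\bigl(t\bb-\alpha A\bk(t)\bigr),\qquad\text{equivalently}\qquad \tfrac1t\bu(t)=A^\top\bigl(\bb-\alpha A\bx(t)\bigr).
\]
Two structural facts flow from this. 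First, because a spike of neuron $i$ resets $\bu_i$ by $\alpha(A^\top A)_{ii}$ (and cross-spikes move $\bu_j$ by $\alpha(A^\top A)_{ji}$), the threshold rule keeps $\|\bu(t)\|_\infty\le\eta(1+o(1))$ for all $t$ once $\alpha\le\alpha(A)$ controls overshoot and spike cascades; this is one place the regularity hypotheses on $A$ are used. Hence $\bu(t)/t\to 0$, so $\alpha A\bx(t)$ becomes feasible for $A\bx=\bb$ at rate $O(1/t)$ --- the \emph{primal} side. Second, since $\bu(t)$ lies in the row space of $A$, the vector $\by(t):=\tfrac1\eta\bigl(t\bb-\alpha A\bk(t)\bigr)\in\R^m$ satisfies $A^\top\by(t)=\bu(t)/\eta$, whence $\|A^\top\by(t)\|_\infty\le 1+o(1)$: $\by(t)$, and its time-average $\bar{\by}(t):=\tfrac1t\int_0^t\by(\tau)\,d\tau$, is (approximately) feasible for the LP dual $\max\{\bb^\top\by:\|A^\top\by\|_\infty\le1\}$ of the $\ell_1$ program --- the \emph{dual} side. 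So the SNN runs a primal--dual pair, and by weak duality it suffices to show the primal $\ell_1$ value and the averaged dual value come together.

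The crux is therefore to upper bound $\|\alpha\bx(t)\|_1=\tfrac{\alpha}{t}\|\bk(t)\|_1$ by $\bb^\top\bar{\by}(t)$ up to vanishing error. I would do this with a Lyapunov-type computation on the quadratic form $\Psi(t):=\tfrac12\,\bu(t)^\top(A^\top A)^+\bu(t)$. Between spikes $\dot\bu=A^\top\bb$, so $\dot\Psi=\langle\bu,\bw_0\rangle$ with $\bw_0:=A^+\bb$ the minimum-$\ell_2$-norm feasible point (using $(A^\top A)^+A^\top=A^+$ and $\bu\in\mathrm{row}(A)$); a spike of neuron $i$ with sign $\sigma$ changes $\Psi$ by exactly $-\alpha\sigma\bu_i+\tfrac{\alpha^2}{2}\|a_i\|_2^2$ (again using $\bu\in\mathrm{row}(A)$ and the pseudoinverse identity $MM^+M=M$). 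Summing over $[0,t]$, noting $\Psi(0)=0$, $\Psi(t)\ge0$, and that every spike occurs with $\sigma\bu_i(t_s)\ge\eta$ (up to overshoot), this identity rearranges to
\[
\tfrac{\alpha}{t}\|\bk(t)\|_1\ \le\ \frac1{\eta t}\int_0^t\langle\bu(\tau),\bw_0\rangle\,d\tau\ +\ \frac{\alpha^2}{2\eta t}\sum_i(N_i^++N_i^-)\|a_i\|_2^2 .
\]
The first term is exactly $\bb^\top\bar{\by}(t)$ (since $\langle\bw_0,\bu(\tau)\rangle=\eta\,\bb^\top\by(\tau)$, valid because $A\bw_0=\bb$); the second is controlled by bounding the total spike count $N_{\mathrm{total}}=O(t/\alpha)$ from the same identity, and is $O(\alpha)$ up to parameters of $A$. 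Combining with $\bb^\top\bar{\by}(t)\le(1+o(1))\OPT$ (approximate dual feasibility, $\OPT$ the dual optimum by LP duality) and with $\|\alpha\bx(t)\|_1\ge\OPT-O(\sqrt n/t)$ (feasibility of $\alpha\bx(t)$ is only $O(1/t)$ off, and projecting onto $\{A\bx=\bb\}$ costs $O(\sqrt n/t)$ in $\ell_1$), the value $\|\alpha\bx(t)\|_1$ is squeezed to $\OPT$ up to an error vanishing as $t\to\infty$ (and $\alpha\to0$). Propagating the $\ell_1$-versus-$\ell_2$ norm conversions, the bound on $\|(A^\top A)^+\|$, and the admissible window for $\alpha$ through this squeeze produces the stated threshold $t\ge\Omega(n^3/\epsilon^2)$, after which one reads off that $\bx(t)$ is an $\epsilon$-approximate solution in the sense of Definition~\ref{def:l1 min eps approx sol}.

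I expect the main obstacle to be the potential-boundedness / Lyapunov step rather than the primal--dual packaging: one must make rigorous sense of the jump terms (Dirac spikes multiplying the very potential they reset), quantify or exclude overshoot and multi-neuron cascades --- which is precisely what pins down $\alpha(A)$ and the regular conditions on $A$ --- and then carry out the norm bookkeeping tightly enough that the error terms genuinely tend to $0$ rather than leaving an $\Omega(1)$ gap, and that the polynomial dependence on $n$ and $1/\epsilon$ comes out as claimed.
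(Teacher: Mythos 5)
Your dual view is the same one the paper uses, but your proof route is genuinely different: instead of the paper's ideal-coupling machinery you run a direct duality-gap/Lyapunov argument, and the core computations you sketch are sound --- the per-spike change of $\Psi(t)=\tfrac12\bu(t)^\top(\ATA)^{+}\bu(t)$ is indeed $-\alpha\sigma\bu_i+\tfrac{\alpha^2}{2}\|A_i\|_2^2$ for $\bu(t)$ in the row space of $A$, and $\tfrac{1}{\eta t}\int_0^t\langle\bu(\tau),A^{+}\bb\rangle\,d\tau=\bb^\top\bar{\by}(t)$ since $A A^{+}\bb=\bb$. However, there are two concrete gaps between this plan and Theorem~\ref{thm: l1 informal}. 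First, the step you defer --- ``the threshold rule keeps $\|\bu(t)\|_\infty\le\eta(1+o(1))$ once $\alpha\le\alpha(A)$'' --- is not a side condition but the technical heart of the problem: controlling overshoot and multi-neuron cascades is exactly where the paper spends its effort (the partition/ideal coupling of Definition~\ref{def:ideal coupling}, Lemma~\ref{lem:ideal SNN unchaged}, and the $\gamma$-niceness of Definition~\ref{def:nice}), and without it you have neither approximate dual feasibility of $\bar{\by}(t)$ nor the spike-count bound $N_{\mathrm{total}}=O(t/\alpha)$, so the rest of the squeeze has nothing to stand on. Second, and more fundamentally, even granting boundedness your analysis has an error \emph{floor} of order $\alpha$: since each spike fires with $\sigma\bu_i\geq\eta$ and contributes $+\tfrac{\alpha^2}{2}\|A_i\|_2^2$ to $\Psi$, and $N_{\mathrm{total}}=\Theta(t\cdot\OPT^{\ell_1}/\alpha)$, the quadratic term adds $\Theta(\alpha)\cdot\OPT^{\ell_1}$ to your upper bound on $\tfrac{\alpha}{t}\|\bk(t)\|_1$, and the dual overshoot adds another multiplicative $(1+O(\alpha))$ to $\bb^\top\bar{\by}(t)$. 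Neither vanishes as $t\to\infty$ --- which is why you write ``(and $\alpha\to0$)''. But the theorem fixes $0<\alpha\le\alpha(A)$ as a function of $A$ alone and claims $\epsilon$-accuracy for \emph{every} $\epsilon$ once $t\ge\Omega(n^3/\epsilon^2)$; your argument only delivers this if $\alpha$ is additionally taken $O(\epsilon)$, which is a strictly weaker statement with a different quantifier structure.

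The paper escapes the $\alpha$-floor precisely by exploiting the finite geometry of the dual polytope rather than generic weak duality: the dual trajectory is coupled to an ideal point in the shrunken polytope $\mathcal{P}_{A,1-\tau}$, auxiliary SNNs yield a potential whose improvement is proportional to the residual (Lemma~\ref{lem:strict improvement}), the vertex separation and non-degeneracy encoded in $\gamma(A)$ guarantee that eventually only the neurons of the optimal support are active, and the $\ell_1$ error of the associated ideal solution is bounded by $\sqrt{n/\lambda_{\min}}$ times the $\ell_2$ residual via KKT plus the perturbation theorem (Lemma~\ref{lemma:idealalgorithm-OPTbounds}); all error terms then decay in $t$ at fixed $\alpha$. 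If you want to salvage your route, you would need an analogous localization step --- e.g., showing that once $\bv(t)$ enters a $\gamma(A)$-dependent neighborhood of the optimal dual vertex only the optimal support fires --- so that the per-spike quadratic term stops costing you a $\Theta(\alpha)$ gap in the objective value; as written, the proposal proves convergence of the value to $\OPT^{\ell_1}$ only up to $O(\alpha)$, not the claimed statement.
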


See Theorem~\ref{thm:l1} for the formal statement of this theorem.
As we will discuss in the next subsection, under the dual view of the SNN dynamics, the simple two sided SNN can be interpreted as a new natural primal-dual algorithm for the $\ell_1$ minimization problem.

\subsection{A Dual View of the SNN Dynamics} \label{subsec:dual-view}

The main techniques in this work is the discovery of a \textit{dual view} of SNN. Recall that the dynamics of a static SNN can be described by the following differential equation.
$$
\frac{d}{dt}\bu(t) = -\alpha\cdot C\bs(t) + \bI
$$
where $\bu(0)=\mathbf{0}$ the parameters $C$ and $\bI$ can be represented as $C=A^\top A$ and $\bI=A^\top\bb$ for some $A\in\R^{m\times n}$ and $\bb\in\R^m$. For simplicity, we pick the firing threshold $\eta=1$ here. Let us call the dynamics of $\bu(t)$ the \textit{primal SNN}. Now, the \textit{dual SNN}, can be defined as follows.
$$
\frac{d}{dt}\bv(t) = -\alpha\cdot A\bs(t) + \bb
$$
where $\bv(0)=\mathbf{0}$ and $\bs(t)$ defined as the usual way. At first glance, this merely looks like a simple linear transformation, Nevertheless, the dual SNN provides a nice \textit{geometric view} for the SNN dynamics as follows.

\begin{figure}[h]
	\centering
	\subfloat[An example of one neuron.]{%
		\includegraphics[width=0.45\textwidth]{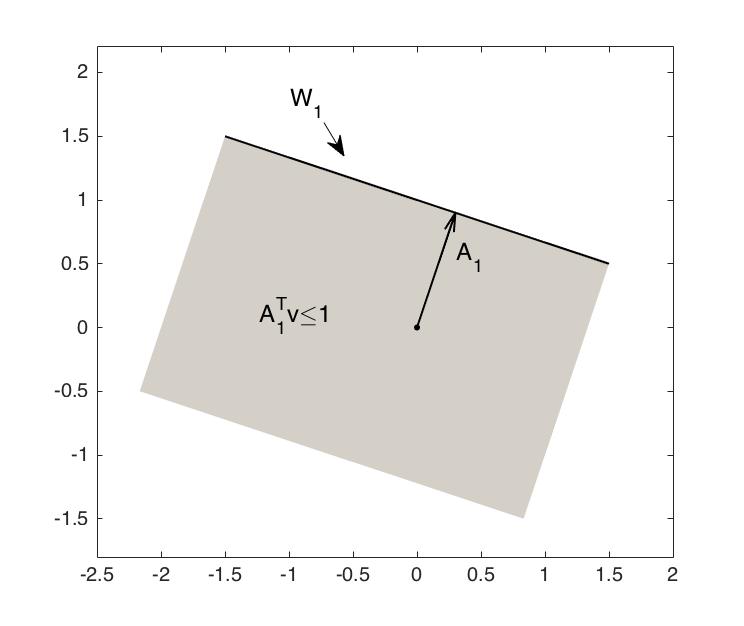}%
		\label{fig:1neuron intro}%
	}\qquad
	\subfloat[The effect of both the external charging and spikes on dual SNN.]{%
		\includegraphics[width=0.45\textwidth]{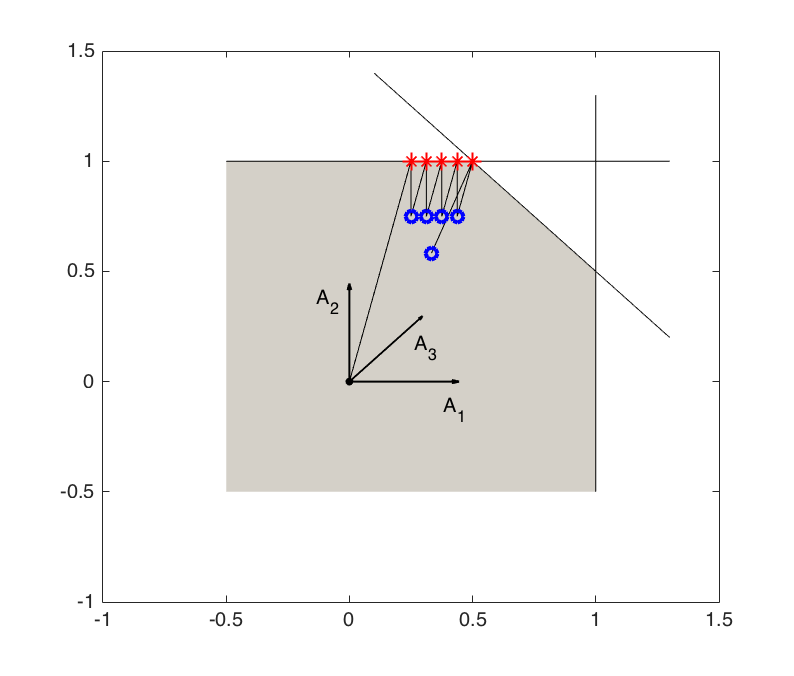}%
		\label{fig:dualSNN-2 intro}%
	}
	\caption{These are examples of the geometric interpretation of the dual SNN. In (a), we have one neuron where $A_1=[\frac{1}{2}\ 1]^\top$. In this case, neuron $i$ would not fire as long as the dual SNN $\bv(t)$ stays in the gray area.  In (b), we consider a SNN with 3 neurons where $A_1=[1\ 0]^\top$, $A_2=[0\ 1]^\top$, and $A_3=[\frac{2}{3}\ \frac{2}{3}]^\top$. One can see that the effect of spikes on dual SNN is a jump in the direction of the normal vector of the wall(s).}
\end{figure}
\vspace{3mm}

At each update in the dynamics, there are two terms affecting the dual SNN $\bv(t)$: the external charging $\bb\cdot dt$ and the spiking effect $-\alpha\cdot A\bs(t)$. First, one can see that the external charging $\bb\cdot dt$ can be thought of as a constant force that drags that dual SNN in the direction $\bb$.

To explain the effect of spikes in the dual view, let us start with an geometric view for the spiking rule. Recall that neuron $i$ fires a spike at time $t$ if and only if $\bu_i(t)>1$. In the language of dual SNN, this condition is equivalent to $A^\top_i\bv(t)>1$. Let $W_i=\{\bv\in\R^m:\ A_i^\top\bv=1 \}$ be the \textit{wall} of neuron $i$, the above observation is saying that neuron $i$ will fire a spike once it penetrates the wall $W_i$ from the half-space $\{\bv\in\R^m:\ A_i^\top\bv\leq1 \}$. See Figure~\ref{fig:1neuron intro} for an example. After neuron $i$ fires a spike, the spiking effect on the dual SNN $\bv(t)$ would be a $-\alpha\cdot A_i$ term, which corresponds to a jump in the \textit{normal direction} of $W_i$. See Figure~\ref{fig:dualSNN-2 intro} for an example.

The geometric interpretation described above is the main advantage of using dual SNN. Specifically, this gives us a clear picture of how spikes affect the SNN dynamics. Namely, neuron $i$ fires a spike if and only if the dual SNN $\bv(t)$ penetrates the $W_i$ and then $\bv(t)$ jumps back in the normal direction of $W_i$. Note that this connection would not hold in the primal SNN. In primal SNN $\bu(t)$, neuron $i$ fires a spike if and only if $\bu_i(t)>1$ while the effect on $\bu(t)$ is moving in the direction $-A^\top A_i$. See Table~\ref{table:geometric} for a comparison.

\begin{table}[h]
	\centering
	\begin{tabular}{|l|l|l|}
		\hline
		&&\\[-0.9em]
		& \multicolumn{1}{c|}{Primal SNN $\bu(t)$} & \multicolumn{1}{c|}{Dual SNN $\bv(t)$} \\[0.3em] \hline
		&&\\[-0.9em]
		Spiking rule   & $\bu_i(t)>1$                             & $A_i^\top\bv(t)>1$                     \\ [0.3em]\hline
		&&\\[-0.9em]
		Spiking effect & $-\alpha\cdot A^\top A_i$                & $-\alpha\cdot A_i$                      \\[0.3em] \hline
	\end{tabular}
	\caption{Comparison of the geometric view of primal and dual SNNs.}
	\label{table:geometric}
\end{table}

\paragraph{Dual view of SNN as a primal-dual algorithm for $\ell_1$ minimization problem}
First, let us write down the $\ell_1$ minimization problem and its dual. 

\vspace{3mm}
\begin{minipage}{\linewidth}
	\begin{minipage}{0.45\linewidth}
		\begin{equation*}
		\begin{aligned}
		& \underset{\bx\in\R^n}{\text{minimize}}
		& & \|\bx\|_1 \\
		& \text{subject to}
		& & A\bx=\bb.
		\end{aligned}
		\end{equation*}
	\end{minipage}
	\begin{minipage}{0.45\linewidth}
		\begin{equation*}
		\begin{aligned}
		& \underset{\bv\in\R^m}{\text{maximize}}
		& & \bb^{\top}\bv \\
		& \text{subject to}
		& & \|A^{\top}\bv\|_{\infty}\leq1.
		\end{aligned}
		\end{equation*}
	\end{minipage}
\end{minipage}
\vspace{3mm}

Now we observe that the dual dynamics can be viewed as a variant of the projected gradient descent algorithm to solve the dual program. Before the explanation, recall that for the $\ell_1$ minimization problem, we are considering the two-sided SNN for convenience. Indeed, without the spiking term, $\bv(t)$ simply moves towards the gradient direction $\bb$ of the dual objective function $\bb^\top\bv$. For the spike term $- \alpha \cdot A\bs(t)$, note that $\bs_i(t) \neq 0$ (\ie neuron $i$ fires) if and only if $|A^\top_i \bv(t)|=|\bu_i(t)| > 1$, which means that $\bv(t)$ is outside the feasible polytope $\{\bv:\ \|A^\top\bv\|_\infty\leq1\}$ of the dual program. Therefore, one can view the role of the spike term as \textit{projecting} $\bv(t)$ back to the feasible polytope. That is, when the dual SNN $\bv(t)$ becomes infeasible, it triggers some spikes, which maintains the dual feasibility and updates the primal solution (the firing rate). To sum up, we can interpret the simple SNN as performing a non-standard projected gradient descent algorithm for the dual program of $\ell_1$ minimization in the dual view of SNN.

With this primal-dual view in mind, we analyze the SNN algorithm by combining tools from convex geometry and perturbation theory as well as several non-trivial structural lemmas on the geometry of the dual program of $\ell_1$ minimization. One of the key ingredients here is identifying a \textit{potential function} that (i) upper bounds the error of solving $\ell_1$ minimization problem and (ii) monotonously converges to $0$. More details will be provided in Section~\ref{sec:SNN-simple-l1min}.

\subsection{Related Work}\label{sec:intro related}
We compare this research with other related works in the following four aspects.

\paragraph{Computational power of SNN}
Recognized as the third generation of neural networks~\cite{maass1997networks},the theoretical foundation for the computability of SNN had been built in the pioneering works of Maass et al.~\cite{maass1996lower,maass1997networks,maass19992,maass2001pulsed} in which SNN was shown to be able to simulate standard computational models such as Turing machines, random access machines (RAM), and threshold circuits. 

However, this line of works focused on the universality of the computational power and did not consider the efficiency of SNN in solving specific computational problems. 
In recent years, a line of exciting research have reported the efficiency of SNN in solving specific computational problems such as sparse coding~\cite{zylberberg2011sparse,tang2016convergence,tang2017sparse}, dictionary learning~\cite{lin2018dictionary}, pattern recognition~\cite{diehl2015unsupervised,kheradpisheh2016bio,bengio2017stdp}, and quadratic programming~\cite{barrett2013firing}. These works indicated the advantage of SNN in handling \textit{sparsity} as well as being \textit{energy efficient} and inspired real-world applications~\cite{beck2009fast}. However, to the best of our knowledge, no theoretical guarantee on the efficiency of SNN had been provided. 
For instance, Tang et al.~\cite{tang2016convergence,tang2017sparse} only proved the \textit{convergence in the limit} result for SNN solving sparse coding problem instead of giving an explicit convergence rate analysis.
The main contribution in this work is giving a rigorous guarantee on the convergence rate of the computational power of SNN.

\paragraph{The number of spikes versus the timing of spikes}
In this work, we mainly focused on the firing rate of SNN. That is, we only study the computational power with respect to the \textit{number} of spikes. Another important property of SNN is the \textit{timing} of spikes. 

The power of the timing of spikes had been reported since the 90s from some experimental evidences indicating that  neural systems might use the timing of spikes to encode information~\cite{abeles1991corticonics,hopfield1995pattern,rieke1999spikes}. From then on, a bunch of works have been focused on the aspect of time as a basis of information coding both from theoretical~\cite{olshausen1996emergence,maass1997networks,maass2001pulsed,thorpe2001spike} and experimental~\cite{heiligenberg1991neural,bialek1991reading,kuwabara1993delay} sides. It is generally believed that the timing of spikes is more powerful then the firing rate~\cite{thorpe1996speed,rullen2001rate,paugam2012computing}. Other than the capacity of encoding information, the timing of spikes has also been studied in the context of computational power~\cite{thorpe1996speed,maass1997fast,maass1997networks,gollisch2008rapid} and learning~\cite{booij2005gradient,banerjee2016learning,shrestha2017robust}. See the survey by Paugam et al.~\cite{paugam2012computing} for a thorough discussion.

While the timing of spikes is conceived as an important source of the power of SNN, in this work we simply focus on the firing rate and already yield some non-trivial findings in terms of the computational power.
We believe that our work is still in the very beginning stage of the study of the computational power of SNN.
Investigating how does the timing of spikes play a role is an interesting and important future direction. Immediate open questions here would be how could the timing of spikes fit into our study? What's the dual view of the timing of spikes? Can the timing of spikes solve the optimization problems more efficiently? Can the timing of spikes solve more difficult problems?

\paragraph{SNN with randomness}
While most of the literature focus on deterministic SNN, there is also an active line of works studying the SNN model with randomness\footnote{SNN model with noise is also known as stochastic SNN or noisy SNN depending on how the randomness involves in the model.}~\cite{allen1994evaluation,shadlen1994noise,faisal2008noise,buesing2011neural,jonke2014theoretical,maass2015spike,jonke2016solving,LMP17BDA,LMP17ITCS,LMP17DISC,LM18}.

Buesing et al.~\cite{buesing2011neural} used \textit{noisy SNN} to implement MCMC sampling and Jonke et al.~\cite{jonke2014theoretical,maass2015spike,jonke2016solving} further instantiated the idea to attack $\NP$-hard problems such as \textit{traveling salesman problem (TSP)} and \textit{constraint satisfaction problem (CSP)}. Concretely, their noisy SNN has a randomized spiking rule and the firing pattern would form a distribution over the solution space whereas the closer a solution is to the optimal solution, the higher the probability it is sampled. They got nice experimental performance in terms of solving empirical instance approximately. They also pointed out that their noisy SNN has the potential to be implemented energy-efficiently in practice.

Lynch, Musco, and Parter~\cite{LMP17ITCS} studied the \textit{stochastic SNNs} with a focus on the Winner-Take-All (WTA) problem. Their sequence of works~\cite{LMP17BDA,LMP17ITCS,LMP17DISC,LM18} gave the first asymptotic analysis for stochastic SNN in solving WTA, similarity testing, and neural coding. They view SNNs as distributed algorithms and derived computational tradeoff in running time and network size.

In this work, we consider the SNN model without randomness and thus is incomparable with the above SNN models with randomness. It is an interesting direction to apply the dual view of deterministic SNN to SNN with randomness.

\paragraph{Locally competitive algorithms}
Inspired by the dynamics of biological neural networks, Ruzell et al. designed the \textit{locally competitive algorithms} (LCA)~\cite{rozell2008sparse} for solving the Lasso (least absolute shrinkage and selection operator) optimization problem\footnote{Note that Lasso is equivalent to the Basis Pursuit De-Noising (BPDN) program under certain parameters transformation.}, which is widely used in statistical modeling. Roughly speaking, LCA is also a dynamics among a set of \textit{artificial neurons} which continuously signal their potential values (or a function of the values) to their neighboring neurons. There are two main differences between SNN and LCA. First, the neuron in SNN fires discrete spikes while the artificial neuron in LCA produces continuous signal. Next,  the neurons' potentials in LCA will converge to a fixed  value, which is the output of the algorithm. In contrast, in SNN, only the neurons' firing rates may converge instead of their potentials.

Nevertheless, there is a \textit{spikified} version of LCA introduced by Shapero et al.~\cite{shapero2013configurable,shapero2014optimal} called \textit{spike LCA (S-LCA)} in which the continuous signals are replaced with discrete spikes. S-LCA is almost the same as the SNN we are considering except a shrinkage term\footnote{That is, the potential of each neuron will drop with rate proportional to the current potential value.}.
Recently, Tang et al.~\cite{tang2017sparse} showed that the firing rate of S-LCA indeed converges to a variant of Lasso problem\footnote{In this variant, all the entries in matrix $A$ is non-negative.} in the limit. These works also experimentally demonstrated the efficient convergence of S-LCA and its advantage of fast identifying sparse solutions with potentially competitive practical performance to other Lasso algorithms (e.g., FISTA~\cite{beck2009fast}). However, there is no proof of convergence rate, and thus no explicit complexity bound of S-LCA.

\subsection{Future Works and Perspectives}\label{sec:future}
In this work, we give a theoretical study on the algorithmic power of SNN. Specifically, we focus on the firing rate of SNN and confirm an empirical analysis by Barrett et al.~\cite{barrett2013firing} with a convergence theorem (\textit{i.e.,} Theorem~\ref{thm:quadratic program informal}). Furthermore, we discover a dual view of SNN and show that SNN is able to solve the $\ell_1$ minimization problem (\textit{i.e.,} Theorem~\ref{thm: l1 informal}). In the following, we give interpretations to our results and point out future research directions.

First, how to interpret the dual dynamics of SNN? In this work, we discover the dual SNN based on mathematical convenience. Is there any biological interpretation?

Second, push further the analysis of simple SNN. We believe the parameters we get in the main theorems are not optimal. Is it possible to further sharpen the upper bound? We think this is both theoretically and practically interesting because both non-negative least squares and $\ell_1$ minimization are important problems that have been well-studied studied in the literature. Comparing the running time complexity or parallel time complexity of SNN algorithm with other algorithms could also be of theoretical interest and might inspire new algorithm with better complexity. Also, for practical purpose, having better parameters would give more confidence in implementing SNN as a natural algorithm.

Third, further investigate the potential of SNN dynamics as natural algorithms. The question is two-folded: (i) What algorithms can SNN implement? (ii) What computational problems can SNN solve? It seems that SNN is good at dealing with sparsity. Could it be helpful in related computational tasks such as fast Fourier transform (FFT) or sparse matrix-vector multiplication? It is interesting to identify optimization problems and class of instances where SNN algorithm can outperform other algorithms.

Finally, explore the practical advantage of SNN dynamics as natural algorithms. The potential practical time efficiency, energy efficiency, and simplicity for hardware implementation have been suggested in several works~\cite{mostafa2015event,binas2015spiking,biswas2016simple}. It would be exciting to see whether SNN has nice performance on practical applications such as compressed sensing, Lasso, and etc.

\section{Preliminaries}\label{sec:preliminaries}
In Section~\ref{sec:notations}, we build up some notations for the rest of the paper. In Section~\ref{sec:problems}, we define two optimization problems and the corresponding convergence guarantees.

\subsection{Notations}\label{sec:notations}
For any $n\in\N$, denote $[n]=\{1,2,\dots,n\}$ and $[\pm n]=\{\pm1,\pm2,\dots,\pm n\}$. Let $\bx,\by\in\R^n$ be two vectors. $|\bx|\in\R^n$ denotes the entry-wise absolute value of $\bx$, \ie $|\bx|_i=|\bx_i|$ for any $i\in[n]$. $\bx\preceq\by$ refers to entry-wise comparison, \ie $\bx_i\leq\by_i$ $\forall i\in[n]$. 

Let $A$ be an $m\times n$ real matrix. For any $i\in[n]$, denote the $i$th column of $A$ as $A_i$ and its negation to be $A_{-i}$, \ie $A_{-i}=-A_i$. When $A$ is positive semidefinite, we define the $A$-norm of a vector $\bx\in\R^n$ to be $\|\bx\|_A:=\sqrt{\bx^{\top}A\bx}$. Let $A^{\dagger}$ to be the pseudo-inverse of $A$. Define the maximum eigenvalue of $A$ as $\lambda_{\max}(A) := \max_{\bx\in\R^n:\ \|\bx\|_2=1}\|\bx\|_A$, the minimum non-zero eigenvalue of $A$ to be $\lambda_{\min}(A):=1/(\max_{\bx\in\R^n:\ \|\bx\|_2=1}\|\bx\|_{A^{\dagger}})$, and the condition number of $A$ to be $\kappa(A) := \lambda_{\max}(A)/\lambda_{\min}(A)$. If we do not specified, the following $\lambda_{\max},\lambda_{\min}$, and $\kappa$ are the eigenvalues and condition number of the connectivity matrix $C=\ATA$. For any $\bb\in\R^m$, we denote $\bb_A$ to be the projection of $\bb$ on the range space of $A$.

\subsection{Optimization problems}\label{sec:problems}
In this subsection, we are going to introduce two optimization problems: \textit{non-negative least squares} and \textit{$\ell_1$ minimization}.

\subsubsection{Non-negative least squares}
\begin{problem}[non-negative least squares]\label{prob:leastsquare}
	Let $m,n\in\N$. Given $A\in\R^{m\times n}$ and vector $\bb\in\R^m$, find $\bx\in\R^n$ that minimizes $\|\bb-A\bx\|_2^2/2$ subject to $\bx_i\geq0$ for all $i\in[n]$.
\end{problem}

\begin{remark}
	Recall that the least squares problem is defined as finding $\bx$ that minimize $\|\bb-A\bx\|_2$. That is, the non-negative least squares is a restricted version of the least squares problem. Nevertheless, one can use a non-negative least squares solver to solve the least squares problem by setting $A'=\bigl( \begin{smallmatrix}A^\top A & -A^\top A \\ -A^\top A & A^\top A\end{smallmatrix}\bigr)$ and $\bb'=\bigl( \begin{smallmatrix} \bb \\ -\bb\end{smallmatrix}\bigr)$ where $(A,\bb)$ is the instance of least squares and $(A',\bb')$ is the instance of non-negative least squares.
\end{remark}

The SNN algorithm might not solve the non-negative least squares problem exactly and thus we define the following notion of solving the non-negative least squares problem \textit{approximately}.

\begin{definition}[$\epsilon$-approximate solution to non-negative least squares]\label{def:non-negative least squares eps approx sol}
	Let $m,n\in\N$ and $\epsilon>0$. Given $A\in\R^{m\times n}$ and $\bb\in\R^m$. We say $\bx$ is an $\epsilon$-approximate solution to the non-negative least squares problem of $(A,\bb)$ if $\|A\bx-A\bx^*\|_2\leq\epsilon\|\bb\|_2$ where $\bx^*$ is an optimal solution.
\end{definition}

\subsubsection{$\ell_1$ minimization}
\begin{problem}[$\ell_1$ minimization]\label{prob:l1min}
	Let $m,n\in\N$. Given $A\in\R^{m\times n}$ and $\bb\in\R^m$ such that there exists a solution to $A\bx=\bb$. The goal of $\ell_1$ minimization is to solve the following optimization problem.
	\begin{equation*}
	\begin{aligned}
	& \underset{\bx\in\R^n}{\text{minimize}}
	& & \|\bx\|_1 \\
	& \text{subject to}
	& & A\bx=\bb.
	\end{aligned}
	\end{equation*}
\end{problem}

Similarly, we do not expect SNN algorithm to solve the $\ell_1$ minimization exactly. Thus, we define the notion of solving the $\ell_1$ minimization problem \textit{approximately} as follows.

\begin{definition}[$\epsilon$-approximate solution to $\ell_1$ minimization]\label{def:l1 min eps approx sol}
	Let $m,n\in\N$ and $\epsilon>0$. Given $A\in\R^{m\times n}$ and $\bb\in\R^m$. Let $\OPT^{\ell_1}$ denote the optimal value of the $\ell_1$ minimization problem of $(A,\bb)$. We say $\bx\in\R^n$ is an $\epsilon$-approximate solution of the $\ell_1$ minimization problem of $(A,\bb)$ if $\|\bb-A\bx\|_2\leq\epsilon\cdot\|\bb\|_2$ and $\|\bx\|_1-\OPT^{\ell_1}\leq\epsilon\cdot\OPT^{\ell_1}$.
\end{definition}

\subsection{Karush-Kuhn-Tucker conditions}\label{sec:KKT}
Karush-Kuhn-Tucker (KKT) conditions are necessary and sufficient conditions for the optimality of optimization problems under some regular assumptions. Consider the following optimization program.
\begin{equation}\label{op:convex}
\begin{aligned}
& \underset{\bx\in\R^n}{\text{minimize}}
& & f(\bx) \\
& \text{subject to}
& & g_i(\bx)\leq0,&\forall i=1,2,\dots m,\\
& & & h_j(\bx)=0,&\forall j=1,2,\dots,k,
\end{aligned}
\end{equation}
where $f,g_1,\dots,g_m,h_1,\dots,h_k$ are convex and differentiable. Let $\bv\in\R^m$ and $\bmu\in\R^k$ be the dual variables. KKT conditions give necessary and sufficient conditions for $(\bx,\bv,\bmu)$ be a pair of primal and dual optimal solutions.

\begin{theorem}[KKT conditions]\label{thm:KKT}
	$(\bx,\bv,\bmu)$ are a pair of primal and dual optimal solutions for~\eqref{op:convex} if and only if the following conditions hold.
	\begin{itemize}
		\item $\bx$ is primal feasible, \textit{i.e.,} $g_i(\bx)\leq$ and $h_j(\bx)=0$ for all $i\in[m]$ and $j\in[k]$.
		\item $(\bv,\bmu)$ is dual feasible, \textit{i.e.,} $\bv_i\geq0$ for all $i\in[m]$.
		\item The Lagrange multiplier vanishes, \textit{i.e.,} $\nabla f(\bx) + \sum_{i\in[m]}\bv_i\nabla g_i(\bx) + \sum_{j\in[k]}\bmu_j\nabla h_j(\bx)=0$.
		\item $(\bx,\bv,\bmu)$ satisfy complementary slackness, \textit{i.e.,} $\bv_i f_i(\bx)\geq0$ for all $i\in[m]$.
	\end{itemize}
\end{theorem}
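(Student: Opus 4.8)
The plan is to route everything through the Lagrangian $L(\bx,\bv,\bmu) := f(\bx) + \sum_{i\in[m]}\bv_i g_i(\bx) + \sum_{j\in[k]}\bmu_j h_j(\bx)$ and the dual objective $q(\bv,\bmu) := \inf_{\bx\in\R^n} L(\bx,\bv,\bmu)$, using strong duality in one direction and weak duality in the other.

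For the ``only if'' direction (necessity of KKT), I would first invoke strong duality, which holds under the regular assumptions referred to in the statement (Slater's condition — existence of a strictly feasible point — together with convexity; for the applications in this paper all constraints are affine, so strong duality is automatic). Strong duality furnishes a dual optimal $(\bv^*,\bmu^*)$ with $\bv^*\succeq\mathbf{0}$ and $f(\bx^*) = q(\bv^*,\bmu^*)$. Then I would write the chain
\[
 f(\bx^*) = \inf_{\bx} L(\bx,\bv^*,\bmu^*) \;\le\; L(\bx^*,\bv^*,\bmu^*) = f(\bx^*) + \sum_{i}\bv_i^* g_i(\bx^*) + \sum_{j}\bmu_j^* h_j(\bx^*) \;\le\; f(\bx^*),
\]
where the final inequality uses primal feasibility ($g_i(\bx^*)\le 0$, $h_j(\bx^*)=0$) and $\bv_i^*\ge 0$. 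Since the two ends agree, both inequalities are equalities: tightness of the second one forces $\sum_i \bv_i^* g_i(\bx^*)=0$, and as each summand is $\le 0$ this gives complementary slackness $\bv_i^* g_i(\bx^*)=0$ for every $i$; tightness of the first says $\bx^*$ globally minimizes the differentiable function $\bx\mapsto L(\bx,\bv^*,\bmu^*)$, so its gradient vanishes at $\bx^*$, which is exactly the stationarity condition $\nabla f(\bx^*) + \sum_i \bv_i^*\nabla g_i(\bx^*) + \sum_j \bmu_j^*\nabla h_j(\bx^*)=\mathbf{0}$. Primal and dual feasibility are immediate from optimality.

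For the ``if'' direction (sufficiency), suppose $(\bx,\bv,\bmu)$ satisfies the four conditions. Because $\bv\succeq\mathbf{0}$, the $g_i$ are convex and the $h_j$ are affine, the map $\bx\mapsto L(\bx,\bv,\bmu)$ is convex and differentiable, so the vanishing-gradient condition implies $\bx$ is a global minimizer of it. Hence $q(\bv,\bmu) = L(\bx,\bv,\bmu) = f(\bx) + \sum_i \bv_i g_i(\bx) + \sum_j \bmu_j h_j(\bx) = f(\bx)$, using complementary slackness and $h_j(\bx)=0$. Since $\bx$ is primal feasible and weak duality gives $q(\bv,\bmu)\le f(\bx')$ for every primal feasible $\bx'$, we conclude $\bx$ is primal optimal and $(\bv,\bmu)$ dual optimal.

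The only genuinely delicate point — and the main obstacle — is the necessity direction: KKT necessity fails for general convex programs without a constraint qualification, so the proof must make explicit which regularity assumption is being used (Slater, or affineness of the constraints as in our instances). The sufficiency direction is a routine Lagrangian manipulation requiring no qualification. I would also flag the evident typo in the stated complementary-slackness condition, which should read $\bv_i g_i(\bx)=0$ for all $i\in[m]$ rather than $\bv_i f_i(\bx)\ge 0$.
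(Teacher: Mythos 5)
Your proposal is correct, and it is essentially the standard strong-duality/weak-duality Lagrangian argument from Chapter 5.5.3 of Boyd--Vandenberghe, which is exactly the reference this paper defers to (the paper itself states Theorem~\ref{thm:KKT} without proof). You also rightly flag the statement's typos (complementary slackness should read $\bv_i g_i(\bx)=0$, and primal feasibility should read $g_i(\bx)\leq 0$) and correctly identify that the necessity direction needs a constraint qualification (Slater or affine constraints, the latter holding in all of the paper's applications), which is the content of the paper's ``under some regular assumptions'' caveat.
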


For more details about KKT conditions, please refer to standard textbook such as Chapter 5.5.3 in~\cite{boyd2004convex}.

\subsection{Perturbation theory}\label{sec:perturbation}
Perturbation theory, sometimes known as sensitivity analysis, for optimization problems concerns the situation where the optimization program is perturbed and the goal is to give a good estimation for the optimal solution. See a nice survey by Bonnans and Shapiro~\cite{bonnans1998optimization}. In the following we state a special case for convex optimization program with strong duality.

\begin{theorem}[perturbation, Chapter 5.6 in~\cite{boyd2004convex}\footnote{Note that we switch the original and perturbed programs in the statement in~\cite{boyd2004convex}.}]\label{thm:perturbation}
	Given the following two optimization programs where the strong duality holds and there exists feasible dual solution.\\
	\begin{minipage}{\linewidth}
		\begin{minipage}{0.5\linewidth}
			\begin{equation}\label{op:prelim perturbation-original}
			\begin{aligned}
			& \underset{\bx}{\text{minimize}}
			& & f(\bx) \\
			& \text{subject to}
			& & g_i(\bx)\leq0,&\forall i=1,2,\dots,m,\\
			& & & h_j(\bx)=0,&\forall j=1,2,\dots,k.
			\end{aligned}
			\end{equation}
		\end{minipage}
		\begin{minipage}{0.5\linewidth}
			\begin{equation}\label{op:prelim perturbation-perturbed}
			\begin{aligned}
			& \underset{\bx}{\text{minimize}}
			& & f(\bx) \\
			& \text{subject to}
			& & g_i(\bx)\leq \mathbf{a}_i,&\forall i=1,2,\dots,m,\\
			& & & h_j(\bx)=\mathbf{b}_j,&\forall j=1,2,\dots,k.
			\end{aligned}
			\end{equation}
		\end{minipage}
	\end{minipage}
	
	\vspace{1em}
	
	Let $\OPT^{\text{original}}$ be the optimal value of the original program~\eqref{op:prelim perturbation-original} and $\OPT^{\text{perturbed}}$ be the optimal value of the perturbed program~\eqref{op:prelim perturbation-perturbed}. Let $(\bv^*,\bmu^*)\in\R^m\times\R^k$ be the optimal dual solution of the perturbed program~\eqref{op:prelim perturbation-perturbed}. We have
	\begin{equation*}
	\OPT^{\text{original}}\geq\OPT^{\text{perturbed}}+\mathbf{a}^\top\bv^*+\mathbf{b}^\top\bmu^*.
	\end{equation*}
\end{theorem}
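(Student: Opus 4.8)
The plan is to reduce everything to two facts about Lagrange duals: weak duality for the original program~\eqref{op:prelim perturbation-original} and strong duality for the perturbed program~\eqref{op:prelim perturbation-perturbed}. Write
\[
L(\bx,\bv,\bmu) \;=\; f(\bx) + \sum_{i=1}^{m}\bv_i g_i(\bx) + \sum_{j=1}^{k}\bmu_j h_j(\bx)
\]
for the Lagrangian of~\eqref{op:prelim perturbation-original}, and let $q(\bv,\bmu) = \inf_{\bx} L(\bx,\bv,\bmu)$ be its dual function. The one calculation that drives the proof is that the Lagrangian of the perturbed program~\eqref{op:prelim perturbation-perturbed} is $L(\bx,\bv,\bmu) - \mathbf{a}^\top\bv - \mathbf{b}^\top\bmu$; since the extra term is independent of $\bx$, the dual function of~\eqref{op:prelim perturbation-perturbed} is exactly $q(\bv,\bmu) - \mathbf{a}^\top\bv - \mathbf{b}^\top\bmu$. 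In particular the two dual problems have the same feasible set (namely $\bv_i\geq 0$ for all $i$, together with the effective domain of $q$).

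With this in hand, first apply strong duality to the perturbed program: since $(\bv^*,\bmu^*)$ is its optimal dual solution,
\[
\OPT^{\text{perturbed}} \;=\; q(\bv^*,\bmu^*) - \mathbf{a}^\top\bv^* - \mathbf{b}^\top\bmu^*,
\]
and in particular $q(\bv^*,\bmu^*)$ is finite and $\bv^*_i\geq0$ for all $i$, so $(\bv^*,\bmu^*)$ is a feasible dual point for the original program as well. Then weak duality for the original program gives $\OPT^{\text{original}} \geq q(\bv^*,\bmu^*)$. Substituting the value of $q(\bv^*,\bmu^*)$ from the previous display yields $\OPT^{\text{original}} \geq \OPT^{\text{perturbed}} + \mathbf{a}^\top\bv^* + \mathbf{b}^\top\bmu^*$, as claimed.

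I do not expect a real obstacle: the argument is short and standard. The only points needing care are (i) verifying that $(\bv^*,\bmu^*)$ is admissible in the original dual — which is immediate because perturbing the constraints only shifts the dual objective by the affine term $-\mathbf{a}^\top\bv - \mathbf{b}^\top\bmu$ and leaves the dual feasible set untouched — and (ii) keeping the roles straight: weak duality is the tool for the \emph{original} program (it always holds, so no hypotheses are needed there), while the assumption that strong duality holds is used only for the \emph{perturbed} program, and swapping the two would flip the inequality. It is worth remarking that the statement uses nothing about the original program beyond weak duality, and from the perturbed program only that it attains its dual optimum with strong duality.
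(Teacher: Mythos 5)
Your argument is correct and is essentially the standard proof of this result: the paper itself gives no proof but cites Chapter 5.6 of Boyd--Vandenberghe, whose derivation is exactly your observation that perturbing the constraints shifts the dual function by the affine term $-\mathbf{a}^\top\bv-\mathbf{b}^\top\bmu$ while leaving the dual feasible set unchanged, combined with strong duality for the program whose dual optimum $(\bv^*,\bmu^*)$ is used and weak duality for the other. You also correctly handle the paper's footnoted swap of ``original'' and ``perturbed'' relative to the textbook, so nothing further is needed.
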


\section{A simple SNN algorithm for $\ell_1$ minimization}\label{sec:SNN-simple-l1min}
In this section, we focus on the discovery of the dual view of simple SNN and how it can be viewed as a \emph{primal-dual algorithm} for solving the $\ell_1$ minimization problem.

Recall that for the $\ell_1$ minimization problem, we are working on the \textit{two-sided} simple SNN for the convenience of future analysis. That is,
$$
\frac{d}{dt}\bu(t) = -\alpha\cdot A^\top A\bs(t) +A^\top\bb,
$$
where $\bs_i(t)=\mathbf{1}_{\bu_i(t)>\eta}-\mathbf{1}_{\bu_i(t)<-\eta}$. To solve the $\ell_1$ minimization problem, we configure a two-sided simple SNN as follows. Given an input $(A,\bb)$, let $C=A^\top A$ and $\bI=A^\top\bb$. However, currently it is unclear how does the above simple SNN dynamics relate to the $\ell_1$ minimization problem.
\begin{equation}\label{op:basispursuit}
\begin{aligned}
& \underset{\bx\in\R^n}{\text{minimize}}
& & \|\bx\|_1 \\
& \text{subject to}
& & A\bx=\bb.
\end{aligned}
\end{equation}

Interesting, the connection between simple SNN and the $\ell_1$ minimization problem happens in the \textit{dual program} of the $\ell_1$ minimization problem. Before we formally explain this connection, let us write down the dual program of~\eqref{op:basispursuit}.
\begin{equation}\label{op:basispursuit-dual}
\begin{aligned}
& \underset{\bv\in\R^m}{\text{maximize}}
& & \bb^{\top}\bv \\
& \text{subject to}
& & \|A^{\top}\bv\|_{\infty}\leq1.
\end{aligned}
\end{equation}

Let us try to make some geometric observations on~\eqref{op:basispursuit-dual}. First, the objective of the dual program is to maximize the inner product with $\bb$, which is quite related to the external charging of SNN since we take $\bI=A^\top\bb$. Next, the feasible region of the dual program is a polytope (or a polyhedron) defined by the intersection of half-spaces $\{\bv\in\R^m:\ A_i^\top\bv\leq1\}$ and $\{\bv\in\R^m:\ -A_i^\top\bv\leq1\}$ for each $i\in[n]$ where $A_i$ denotes the $i^\text{th}$ column of $A$.

It will be convenient to introduce the following notation before we move on. For $i\in [n]$, let $A_{-i} = -A_i$. Let $[\pm n] = \{\pm1,\pm2,\dots,\pm n\}$. Thus, the feasible polytope of the dual program is defined by the intersection of half-spaces defined by $A^\top_j \bv \leq 1$ for all $j\in [\pm n]$. We call this polytope the \textit{dual polytope}\footnote{In the case where the feasible region of the dual program is not bounded, it is a dual \textit{polyhedron}. For the convenience of the presentation, we usually assume the feasible region is bounded.}. Moreover, for each $j\in [\pm n]$, we call the hyperplane $ \{\bv: A^\top_j \bv = 1 \}$ the \emph{wall} $W_j$ of the dual polytope. See Figure~\ref{fig:3neuron} for examples.

\begin{figure}[h]
	\centering
	\subfloat[An example of one neuron.]{%
		\includegraphics[width=0.45\textwidth]{pic/1neuron.jpg}%
		\label{fig:1neuron}%
	}\qquad
	\subfloat[An example of three neurons.]{%
		\includegraphics[width=0.45\textwidth]{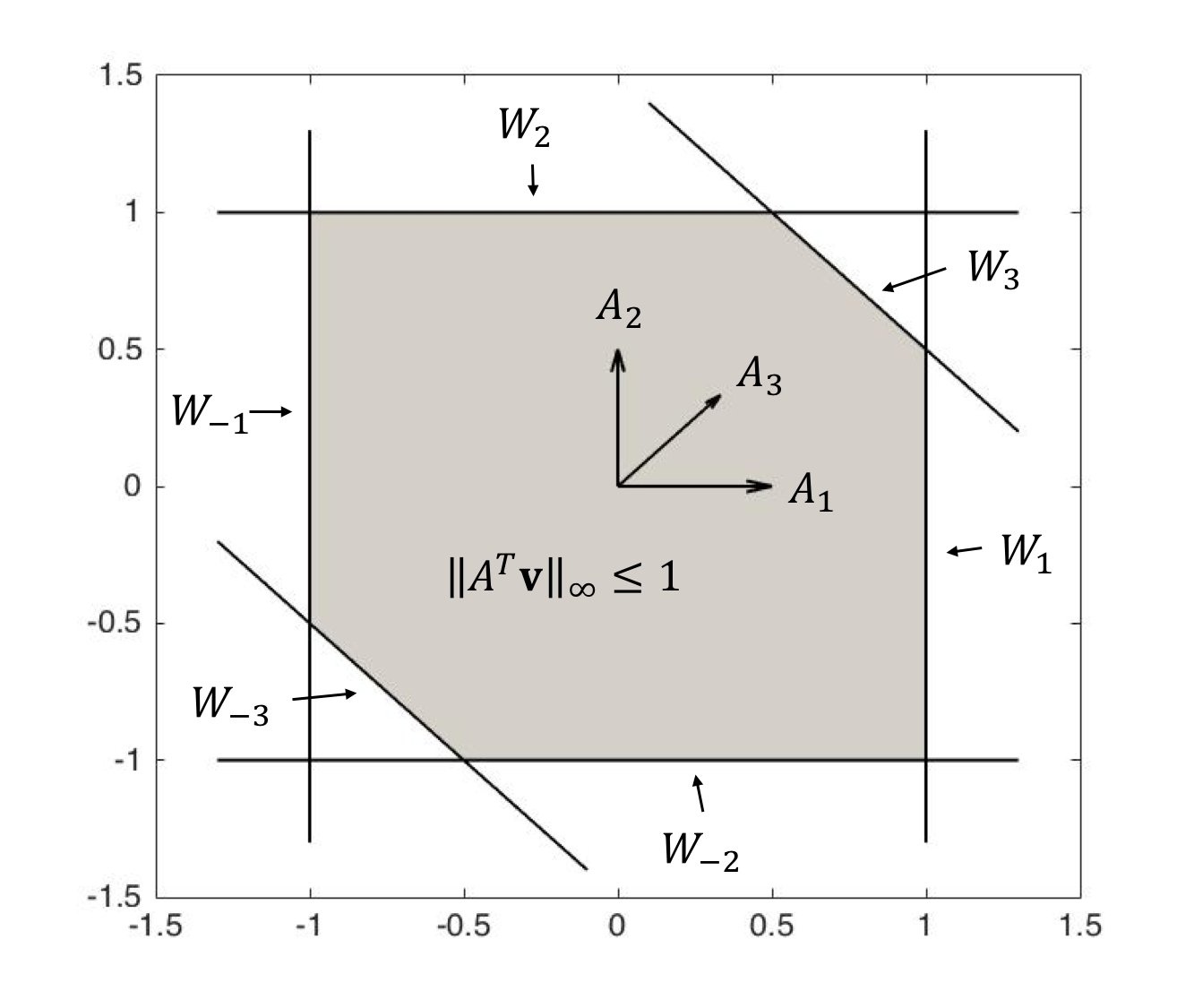}%
		\label{fig:3neuron}%
	}
	\caption{This is examples of the geometric interpretation of the dual program of $\ell_1$ minimization problem. In (a), we have $n=1$ where $A_1=[\frac{1}{3}\ 1]^\top$. In this case, the gray area, \textit{i.e.,} the feasible region of the dual program, is unbounded. In (b), we have $n=3$ where $A_1=[1\ 0]^\top$, $A_2=[0\ 1]^\top$, and $A_3=[\frac{2}{3}\ \frac{2}{3}]^\top$. In this case, the gray area is bounded and thus called dual polytope.}
\end{figure}
\vspace{3mm}

Now, the key observation is that by a linear transformation, the dynamics of simple SNN has a natural interpretation in the dual space. We call it the \emph{dual SNN} defined as follows.

\subsection{Dual SNN} \label{sec:dual SNN}
We first recall the simple SNN dynamics which we call the \textit{primal SNN} from now on. For convenience, we set the threshold parameter $\eta = 1$ (and make the spiking strength parameter $\alpha$ explicit). For any $t\geq0$,
\begin{equation}\label{eq:l1 SNNdynamics}
\bu(t+dt) = \bu(t)-  \alpha \cdot \ATA \cdot \bs(t)+ \ATb \cdot dt.
\end{equation}
Now, we define the dual SNN $\bv(t) \in \R^m$ as follows. Let $\bv(0)=\mathbf{0}$ and for each $t\geq0$, define
\begin{equation}\label{eq:l1 SNNdynamics-dual}
\bv(t+dt) = \bv(t) - \alpha \cdot A\bs(t) + \bb\cdot dt.
\end{equation}
Let us make some remarks about the connection between the primal and dual SNNs. First, it can be immediately seen that $\bu(t)=A^\top\bv(t)$ for each $t\in\N$ from~\eqref{eq:l1 SNNdynamics} and~\eqref{eq:l1 SNNdynamics-dual}. That is, given $\bv(t)$, it is easy to get $\bu(t)$ by multiplying $\bu(t)$ with $A^\top$ on the left. It turns out that the other direction also holds. For each $t\in\N$, we have $\bv(t) = (A^\top )^{\dagger}\bu(t)$, where $(A^\top)^{\dagger}$ is the pseudo-inverse of $A^\top$. The reason is because the primal SNN $\bu(t)$ lies in the column space of $A$. Thus, the two dynamics are in fact \emph{isomorphic} to each other.

Now let us understand the dynamics of dual SNN in the dual space $\R^m$. At each timestep, there are two terms, \textit{i.e.,} the external charging $\bb\cdot dt$ and the spiking effect $-\alpha A\bs(t)$, that affect the dual SNN $\bv(t)$. The external charging can be thought of as a constant force that drags that dual SNN in the direction $\bb$. See Figure~\ref{fig:dualSNN-1}. This coincides with the objective function of the dual program~\eqref{op:basispursuit-dual} and thus the external charging can then be viewed as taking a \textit{gradient} step towards solving~\eqref{op:basispursuit-dual}.

Nevertheless, to solve~\eqref{op:basispursuit-dual}, one need to make sure the solution $\bv$ is feasible, \textit{i.e.,} $\bv$ should lie in the dual polytope. Interestingly, this is exactly what the spike is doing!
Recall that neuron $i$ fires a spike if $|u_i(t)| > 1$ (recall that we set $\eta = 1$), which corresponds to $|A^\top_i \bv(t)| > 1$ in the dual space. Thus, the spike term has the following nice geometric interpretation: if $\bv(t)$ ``exceeds'' the wall $W_j$ for some $j\in [\pm n]$, then neuron $|j|$ fires a spike and $\bv(t)$ is ``bounced back'' in the normal direction of the wall $W_j$ in the sense that $\bv(t)$ is subtracted by $\alpha \cdot A_j$. See Figure~\ref{fig:dualSNN-2} for example.

\begin{figure}[h]
	\centering
	\subfloat[The effect of external charging on dual SNN.]{%
		\includegraphics[width=0.45\textwidth]{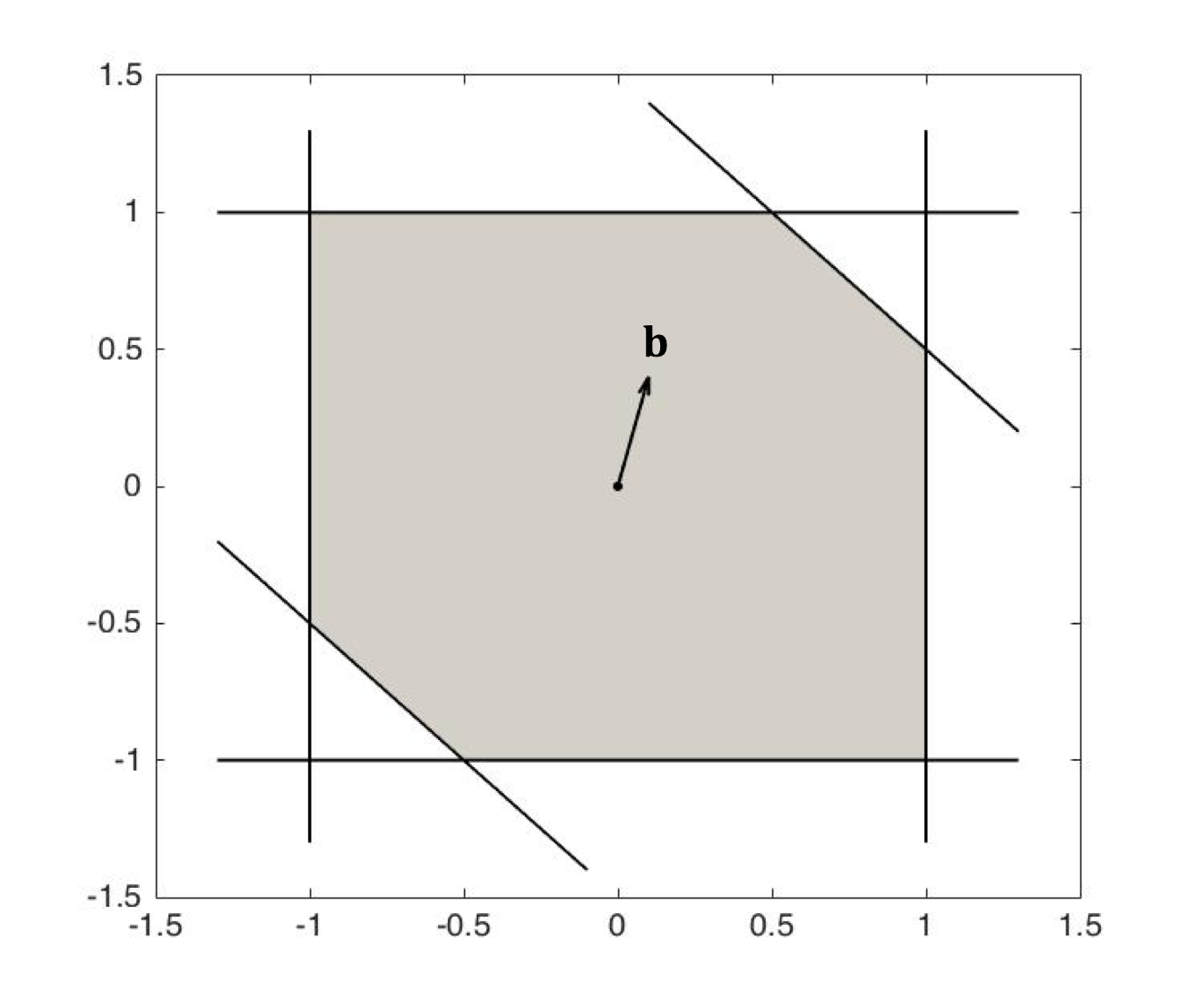}%
		\label{fig:dualSNN-1}%
	}\qquad
	\subfloat[The effect of both the external charging and spikes on dual SNN.]{%
		\includegraphics[width=0.45\textwidth]{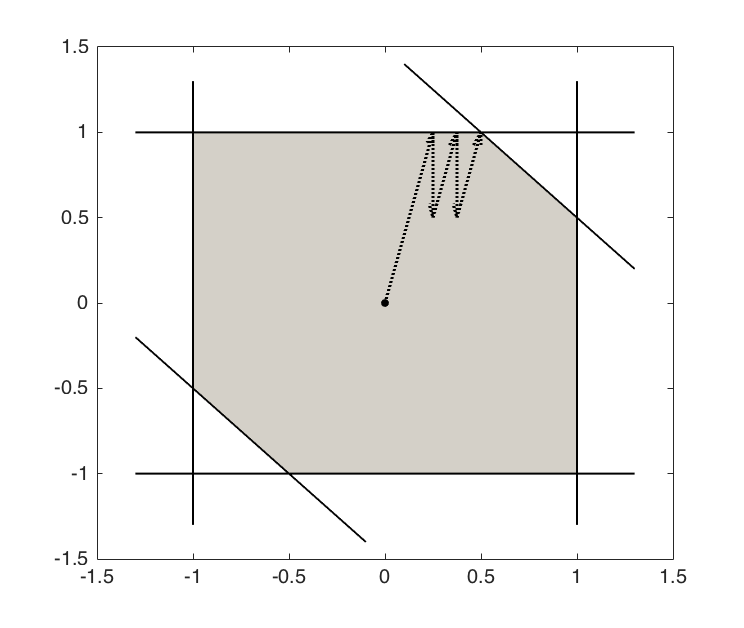}%
		\label{fig:dualSNN-2}%
	}
	\caption{This is examples of the geometric interpretation of the dual We consider the same matrix $A$ as in Figure~\ref{fig:3neuron} and $\bb=[0.1\ 0.4]^\top$. In (a), one can see that the external charging $\bb$ points the direction that dual SNN is moving. In (b), one can see that the effect of spikes on dual SNN is a jump in the direction of the normal vector of the wall.}
\end{figure}
\vspace{3mm}

Therefore, one can view the dual SNN as performing a variant of projected gradient descent algorithm for the dual program of $\ell_1$ minimization problem. Specifically, to maintain the feasibility, the vector is not projected back to the feasible region as usual, but is ``bounced back'' in the normal direction of the wall $W_j$ corresponding to the violated constraint $A^\top_j \bv \leq 1$. An advantage of this variant is that the ``bounced back'' operation is simply subtraction of $\alpha \cdot A_j$, which is significantly more efficient than the orthogonal projection back to the feasible region. 
On the other hand, note that the dynamics might not exactly converge to the optimal dual solution $\bv^{\OPT}$. Intuitively, the best we can hope for is that $\bv(t)$ will converge to a small neighboring region of $\bv^{\OPT}$(assuming the spiking strength $\alpha$ is sufficiently small). The above intuition of viewing dual SNN as a projected gradient descent algorithm for the dual program of the $\ell_1$-minimization problem will be formally proved in the later subsections.

\paragraph{The primal-dual connection.} So far we have informally seen that the dual SNN can be viewed as solving the dual program of the $\ell_1$-minimization problem. However, this does not immediately give us a reason why the firing rate would converge to the solution of the primal program. It turns out that there is a beautiful connection between the dual SNN and firing rate through the \textit{Karush-Kuhn-Tucker (KKT) conditions} (see Section~\ref{sec:KKT}) and perturbation theory (see Section~\ref{sec:perturbation}).

We now discuss some intuitions about how the dual solution translates to the primal solution. To jump into the core idea, let us consider an ideal scenario where the dual SNN $\bv(t)$ is already very close to the optimal dual solution $\bv^\OPT$ for the dual program of the $\ell_1$ minimization problem. Since $\bv^\OPT$ is the optimal solution and thus it must lie on the boundary of the dual polytope. Let $\Gamma\subseteq[\pm n]$ be the set of walls that $\bv^\OPT$ touches. That is, $j\in\Gamma$ if and only if $A_j^\top\bv^\OPT=1$. Now, let $\bx^\OPT$ denote the optimal primal solution of the $\ell_1$ minimization problem. Observe that by the complementary slackness in the KKT conditions, for each $i\in[n]$, we have $\bx^\OPT_i>0$ (resp. $\bx^\OPT_i<0$) if $i\in\Gamma$ (resp. $-i\in\Gamma$) and $\bx^\OPT_i=0$ if $i,-i\not\in\Gamma$. To summary, this is saying that $\Gamma$ contains the coordinates that are non-zero in the primal optimal solution $\bx^\OPT$. See Figure~\ref{fig:primaldual} for an example.

\begin{figure}[h]
	\centering
	\includegraphics[width=7cm]{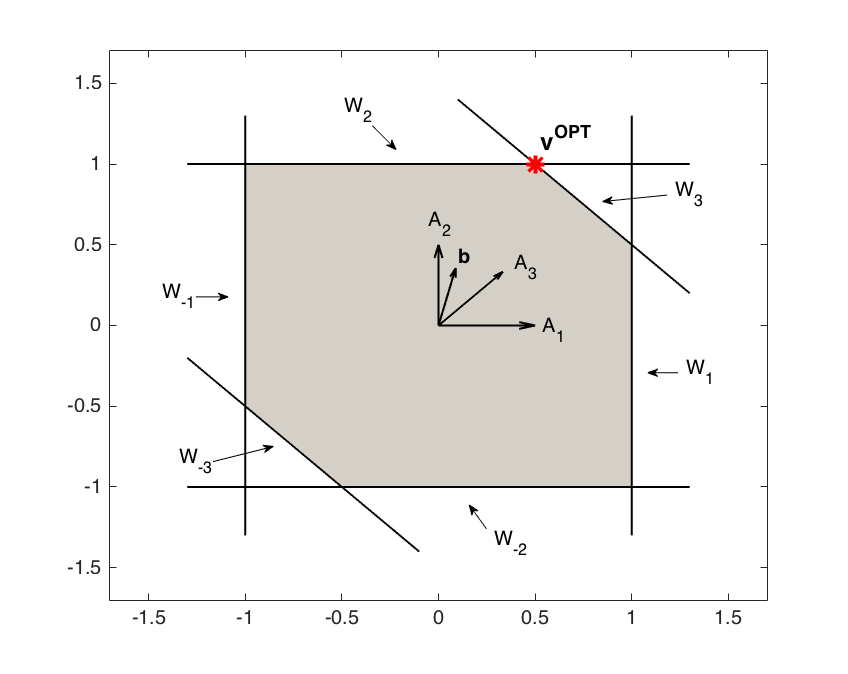}
	\caption{This is an example based on Figure~\ref{fig:3neuron} and Figure~\ref{fig:dualSNN-2}. In this example, $A_1=[1\ 0]^\top$, $A_2=[0\ 1]^\top$, $A_3=[\frac{2}{3}\ \frac{2}{3}]^\top$, and $\bb=[0.1\ 0.4]^\top$. The optimal dual solution is $\bv^\OPT=[\frac{1}{2}\ 1]^\top$ as shown in the figure. Thus, by the above definition we have $\Gamma=\{2,3\}$. By the KKT conditions, we then know that only the $2^\text{nd}$ and $3^{\text{rd}}$ coordinate of the optimal primal solution are non-zero. Indeed, the optimal primal solution is $\bx^\OPT=[0\ \frac{3}{10}\ \frac{3}{20}]^\top$.}
	\label{fig:primaldual}
\end{figure}

With this observation, once the dual SNN $\bv(t)$ is very close to the optimal dual solution $\bv^\OPT$ and stays nearby, only those neurons correspond to $\Gamma$ would fire spikes. In other words, the firing rate of the non-zero coordinates in the primal optimal solution $\bx^\OPT$ will remain non-zero due to the spikes while the other coordinates will gradually go to zero.

At this point, we have seen that (i) the dual SNN can be viewed as a projected gradient descent algorithm for the dual program of $\ell_1$ minimization problem and (ii) the dual solution (resp. dual SNN) and primal solution (resp. firing rate) have a natural connection through the KKT conditions. The explanations so far are rather informal and focus on intuition. From now on, everything will start to be more and more formal and rigorous. Before that, let us state the main theorem of this section about simple SNN solving $\ell_1$ minimization problem.

\begin{theorem}\label{thm:l1}
	Given $A\in\R^{m\times n}$ and $\bb\in\R^m$ where all the row of $A$ has unit norm. Let $\gamma(A)$ be the niceness parameter of $A$ defined later in Definition~\ref{def:nice}. Suppose $\gamma(A)>0$ and there exists a solution for $A\bx=\bb$. There exists a polynomial $\alpha(\cdot)$ such that for any $t\geq0$, let $\bx(t)$ be the firing rate of the simple SNN with $C = \ATA$, $\bI = \ATb$, $\eta=1$, $0<\alpha\leq \alpha(\frac{\gamma(A)}{n\cdot\lambda_{\max}})$. Let $\OPT^{\ell_1}$ be the optimal value of the $\ell_1$ minimization problem. For any $\epsilon>0$, when $t\geq\Omega(\frac{m^2\cdot n\cdot\|\bb\|_2^2}{\epsilon^2\cdot\lambda_{\min}\cdot\OPT^{\ell_1}})$, then $\bx(t)$ is an $\epsilon$-approximate solution for the $\ell_1$ minimization problem for $(A,\bb)$.
\end{theorem}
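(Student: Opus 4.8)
The plan is to run everything through the dual SNN of Section~\ref{sec:dual SNN} together with the KKT and perturbation machinery set up above. The first step is to telescope the dual dynamics~\eqref{eq:l1 SNNdynamics-dual} over $[0,t]$, which gives the master identity $\bv(t)=\bb\cdot t-\alpha\cdot A\bk(t)$, where $\bk(t)\in\R^n$ is the net spike-count vector ($+1$-firings minus $-1$-firings). Writing $\bar\bx(t):=\tfrac{\alpha}{t}\bk(t)$ for the (appropriately scaled) firing rate, this rearranges to $A\bar\bx(t)=\bb-\tfrac1t\bv(t)$, so the primal residual is \emph{exactly} $\|\bb-A\bar\bx(t)\|_2=\|\bv(t)\|_2/t$. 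Thus the theorem reduces to two quantitative estimates: (i) a uniform-in-$t$ bound $\|\bv(t)\|_2\le R$ with $R$ controlled by the geometry of the dual polytope, and (ii) an objective-error bound of the form $\big|\,\|\bar\bx(t)\|_1-\OPT^{\ell_1}\,\big|\le \poly(m,n,\|\bb\|_2,\lambda_{\min}^{-1})/\sqrt{t}$.

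For (i) I would argue that $\bv(t)$ never escapes a fixed bounded neighborhood of the dual polytope $P=\{\bv:\|A^\top\bv\|_\infty\le1\}$. The external charging $\bb\cdot dt$ moves $\bv$ at bounded speed, and the instant $|A_j^\top\bv(t)|>1$ the triggered spike subtracts $\alpha A_j$, i.e.\ takes a step in the inward normal direction of the violated wall $W_j$; hence $\bv(t)$ can overshoot $P$ by only $O(\alpha\sqrt{\lambda_{\max}})$ per spike plus $O(\|\bb\|_2\,dt)$. Combined with the fact that the niceness hypothesis $\gamma(A)>0$ forces $P$ to be bounded with diameter controlled by $\lambda_{\min}$ and the separation of the walls, this yields $\|\bv(t)\|_2\le R=O(\sqrt{m}/\lambda_{\min}+\alpha\cdot\poly)$ for all $t$, and already gives $\epsilon$-feasibility once $t\gtrsim R/(\epsilon\|\bb\|_2)$, a threshold dominated by the one in the statement.

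Estimate (ii) is the heart. Here the dual SNN is to be read as a non-standard projected-gradient (online gradient ascent) method for the dual program $\max_{\bv\in P}\bb^\top\bv$: the drift is the gradient step toward $\bb$, and the spikes play the role of the projection, except that $\bv(t)$ is ``bounced back'' in the normal direction of a violated wall rather than orthogonally projected. I would introduce the promised potential function — morally $\Phi(t)=\|\bv(t)-t\,\bv^\OPT\|_2^2$ normalized, or the running regret $\Phi(t)=\tfrac1t\!\int_0^t(\bb^\top\bv^\OPT-\bb^\top\bv(s))\,ds$ — and prove (a) it monotonically decreases up to an $O(\alpha\cdot\poly)$ slack coming from the non-orthogonality of the bounce-back (which is where $\gamma(A)$ and the choice $\alpha\le\alpha(\gamma(A)/(n\lambda_{\max}))$ enter), and (b) it upper-bounds the objective error. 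Step (b) uses strong duality $\OPT^{\ell_1}=\bb^\top\bv^\OPT$ together with the KKT/complementary-slackness picture sketched before the theorem: once $\bv(t)$ is trapped near $\bv^\OPT$, only neurons whose walls are active at $\bv^\OPT$ keep net-firing and they fire with the sign forced by $A_i^\top\bv^\OPT=\pm1$, so $\|\bk(t)\|_1$ equals $(A\bk(t))^\top\bv^\OPT=\tfrac1\alpha(\bb t-\bv(t))^\top\bv^\OPT$ up to a controlled additive error; substituting back yields $\|\bar\bx(t)\|_1=\OPT^{\ell_1}-\tfrac1t\bv(t)^\top\bv^\OPT+(\text{error})$, and one more application of the perturbation theorem (Theorem~\ref{thm:perturbation}), with the perturbed right-hand side $\bb-\tfrac1t\bv(t)$ and dual certificate $\bv^\OPT$, closes the gap from the other side. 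The standard regret-telescoping then gives the $\poly/\sqrt{t}$ rate (the square-root rate being the reason the threshold scales as $1/\epsilon^2$), and optimizing the constants against the trapping radius produces $t\ge\Omega\!\big(m^2 n\|\bb\|_2^2/(\epsilon^2\lambda_{\min}\OPT^{\ell_1})\big)$.

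I expect the main obstacle to be part (a) of Estimate~(ii): because the ``bounce-back'' is not an orthogonal projection, one cannot invoke the textbook non-expansiveness of projections, so the monotone decrease of the potential must be proved directly, showing that the constant drift toward $\bb$ together with the normal-direction bounce-backs really does steer $\bv(t)$ into the optimal dual face and confine it there — rather than letting it drift indefinitely among near-active but inactive walls, which would corrupt $\|\bk(t)\|_1$ and hence the objective value. This is exactly the place where the quantitative non-degeneracy measured by $\gamma(A)$ has to be combined with the smallness of $\alpha$, and extracting the correct powers of $m$, $n$, and $\lambda_{\min}$ from that confinement analysis is the delicate part.
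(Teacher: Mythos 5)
Your framework (the dual identity $\bv(t)=\bb t-\alpha A\bk(t)$, a uniform bound on $\|\bv(t)\|_2$ for $\ell_2$-feasibility, and KKT/perturbation for the $\ell_1$ objective) matches the paper's intuition, and your step (i) is essentially the argument the paper uses for the non-negative least squares case. But the heart of the theorem is exactly the step you defer: you propose a potential ($\|\bv(t)-t\,\bv^\OPT\|_2^2$ or a running regret) and assert it decreases up to $O(\alpha\cdot\poly)$ slack, while yourself noting that the bounce-back is not a non-expansive projection so the textbook argument is unavailable; no substitute proof is given. This is a genuine gap, not a routine verification: the dual trajectory can repeatedly graze walls that are \emph{inactive} at $\bv^\OPT$, and each such grazing produces net spikes of the ``wrong'' sign/support, so both your monotonicity claim and the complementary-slackness accounting $\|\bk(t)\|_1\approx(A\bk(t))^\top\bv^\OPT$ presuppose a confinement of $\bv(t)$ near the optimal dual face that is never established. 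The subsequent ``standard regret-telescoping gives a $\poly/\sqrt{t}$ rate'' therefore has no premise to rest on (and, incidentally, the paper's $1/\epsilon^2$ does not come from a square-root regret rate but from multiplying two $1/\epsilon$ factors: the time for the ideal solution to become accurate and the time to dilute the early-phase spikes in the average).

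The paper resolves precisely this difficulty with machinery your proposal lacks: the ideal coupling (Definition~\ref{def:ideal coupling}) mapping $\bv(t)$ to an ideal point in a shrunk polytope, Lemma~\ref{lem:ideal SNN unchaged} showing spikes do not move the ideal point (this is where $\alpha\le\alpha(\gamma(A)/(n\lambda_{\max}))$ is used), the resulting clean dynamics $\frac{d}{dt}\bv^\text{ideal}=\bb-\Pi_{A_{\Gamma}}\bb$ (Lemma~\ref{lem:ideal SNN dynamics}), and—because that drift need not be proportional to the residual—the auxiliary SNNs of Definition~\ref{def:auxiliary} with the jump lemma (Lemma~\ref{lem:auxiliary SNN jump}), which together yield the strict-improvement potential of Lemma~\ref{lem:strict improvement}. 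Crucially, the paper never needs your trapping-near-$\bv^\OPT$ claim: instead of comparing to $\bv^\OPT$, it observes that $(\bx^\text{ideal}(t),\bv^\text{ideal}(t))$ is \emph{exactly} optimal for a perturbed program with right-hand side $A\bx^\text{ideal}(t)$, and then Theorem~\ref{thm:perturbation} converts the $\ell_2$ residual into an $\ell_1$-objective bound (Lemma~\ref{lemma:idealalgorithm-OPTbounds}); the early-phase spikes are handled separately by the $t_0$-splitting in the final proof. To repair your proposal you would either have to prove a quantitative confinement statement (which is at least as hard as the paper's Lemmas~\ref{lem:ideal SNN unchaged}--\ref{lem:strict improvement}) or adopt some analogue of the ideal/auxiliary construction; as written, the key lemma is missing.
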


Two remarks on the statement of Theorem~\ref{thm:l1}. First, we consider the \textit{continuous SNN} instead of the discrete SNN, which is of interest for simulation on classical computer. In discrete SNN, the \textit{step size} is some non-negligible $\Dt>0$ instead of $dt$. The main reason for considering continuous SNN is that this significantly simplify the proof by avoiding a huge amount of nasty calculations. We suspect that the proof idea would hold for discrete SNN with discretization parameter $\Dt\leq\Dt(\frac{\gamma(A)}{n\cdot\lambda_{\max}})$ for some polynomial $\Dt(\cdot)$.
Second, the parameters in Theorem~\ref{thm:l1} have not been optimized and we believe all the dependencies can be improved. Since the parameters highly affect the efficiency of SNN as an algorithm for $\ell_1$ minimization problem, we pose it as an interesting open problem to study what are the best dependencies one can get.

\subsection{Overview of the proof for Theorem~\ref{thm:l1}}

The proof of Theorem~\ref{thm:l1} consists of two main steps as mentioned in the previous subsection. The first step argues that the dual SNN $\bv(t)$ would converge to the neighborhood of the optimal dual solution $\bv^\OPT$. The second step is connecting the dual solution (\textit{i.e.,} the dual SNN) to the primal solution (\textit{i.e.,} the firing rate).

In the first step, we try to identify a \textit{potential function}\footnote{Potential function is widely used in the analysis of many gradient-descent based algorithm. The difficulty lies in the search of a good potential function for the algorithm.} that captures how close is $\bv(t)$ to the optimal dual solution $\bv^\OPT$.
It turns out that this is not an easy task since the effect of spikes makes the behavior of dual SNN very non-monotone. We conquer the difficulty via a technique that we call \textit{ideal coupling} (see Definition~\ref{def:ideal coupling} and Figure~\ref{fig:ideal coupling}). The idea is associating the dual SNN $\bv(t)$ with an \textit{ideal SNN} $\bv^\text{ideal}(t)$ for every $t\geq0$ such that the ideal SNN would have \textit{smoother} behavior comparing to the spiking phenomenon in the dual SNN. We will formally define the ideal SNN in Section~\ref{sec:ideal coupling}. There are two advantages of using ideal SNN instead of handling dual SNN directly: 
(i) Ideal SNN is smoother than dual SNN in the sense that it would not change after spikes (see Lemma~\ref{lem:ideal SNN unchaged}). Further, by introducing some auxiliary processes (\textit{i.e.,} the auxiliary SNNs defined in Definition~\ref{def:auxiliary}), we are able to identify a potential function that is strictly improving at any moment and measures how well the dual SNN has been solving the $\ell_1$ minimization problem (see Lemma~\ref{lem:strict improvement}). 
(ii) ideal SNN is naturally associated with an \textit{ideal solution} (defined in Definition~\ref{def:ideal solution}) which is easier to analyze than the firing rate. Using these good properties of ideal SNN, we can prove in Lemma~\ref{lemma:idealalgorithm-l2bound} that the $\ell_2$ residual error of the ideal solution will converge to $0$.

After we are able to show the convergence of the $\ell_2$ residual error in Lemma~\ref{lemma:idealalgorithm-l2bound}, we move to the second step where the goal is showing that the $\ell_1$ norm of the solution is also small. We look at the KKT conditions of the $\ell_1$ minimization problem and observe that the primal and dual solutions of SNN satisfy the KKT conditions of a \textit{perturbed} program of the $\ell_1$ minimization problem. Finally, combine tools from perturbation theory, we can upper bound the $\ell_1$ error of the ideal solution by its $\ell_2$ residual error in Lemma~\ref{lemma:idealalgorithm-OPTbounds}.

Theorem~\ref{thm:l1} then follows from Lemma~\ref{lemma:idealalgorithm-l2bound} and Lemma~\ref{lemma:idealalgorithm-OPTbounds} with some special cares on how to transform everything for ideal solution to the firing rate. See Figure~\ref{fig:overview proof for thm l1} for an overall structure of the proof for Theorem~\ref{thm:l1}.

\begin{figure}[h]
	\centering
	\begin{tikzpicture}[
	squarednode/.style={rectangle, draw=black!100, fill=red!5, very thick, minimum size=5mm},
	defnode/.style={rectangle, draw=black!100, fill=blue!5, very thick, minimum size=5mm},
	toolnode/.style={rectangle, draw=black!100, fill=green!5, very thick, minimum size=5mm},
	]
	\tikzstyle{line} = [draw, thick, color=black!50, -latex']
	\path node (thml1) [squarednode] {$\stackrel{\text{\autoref{thm:l1}}}{\text{(SNN solves $\ell_1$ minimization problem)}}$};
	\path (thml1.north) + (-4.0,1.1) node (l2UB) [squarednode] {$\stackrel{\text{Lemma~\ref{lemma:idealalgorithm-l2bound}}}{\text{(convergence of $\ell_2$ error)}}$};
	\path (thml1.north) + (4.0,1.1) node (l2boundl1) [squarednode] {$\stackrel{\text{Lemma~\ref{lemma:idealalgorithm-OPTbounds}}}{\text{($\ell_2$ error upper bounds $\ell_1$ error)}}$};
	\path (l2boundl1.north) + (-1.6,1.0) node (KKT) [toolnode] {KKT conditions};
	\path (l2boundl1.north) + (1.4,1.0) node (perturbation) [toolnode] {Perturbation};
	\path (l2UB.north) + (-2.8,1.0) node (wouldnotchange) [squarednode] {$\stackrel{\text{Lemma~\ref{lem:ideal SNN unchaged}}}{\text{(unchaged after spikes)}}$};
	\path (l2UB.north) + (2.8,1.0) node (strict) [squarednode] {$\stackrel{\text{Lemma~\ref{lem:strict improvement}}}{\text{(strict improvement)}}$};
	\path (wouldnotchange.north) + (0,1.0) node (nice) [defnode] {$\stackrel{\text{Definition~\ref{def:nice}}}{\text{(niceness of input matrix)}}$};
	\path (strict.north) + (-1.6,1.0) node (ideal) [defnode] {$\stackrel{\text{Definition~\ref{def:ideal coupling}}}{\text{(ideal coupling)}}$};
	\path (strict.north) + (1.6,1.0) node (auxiliary) [defnode] {$\stackrel{\text{Definition~\ref{def:auxiliary}}}{\text{(auxiliary SNN)}}$};
	
	\path [line] (l2UB.south) -- +(0.0,-0.3) -- +(4,-0.3)
	-- node [above, midway] {} (thml1);
	\path [line] (l2boundl1.south) -- +(0.0,-0.3) -- +(-4,-0.3)
	-- node [above, midway] {} (thml1);
	\path [line] (KKT.south) -- +(0.0,-0.4) -- +(1.6,-0.4)
	-- node [above, midway] {} (l2boundl1);
	\path [line] (perturbation.south) -- +(0.0,-0.4) -- +(-1.4,-0.4)
	-- node [above, midway] {} (l2boundl1);
	\path [line] (wouldnotchange.south) -- +(0.0,-0.2) -- +(2.8,-0.2)
	-- node [above, midway] {} (l2UB);
	\path [line] (strict.south) -- +(0.0,-0.2) -- +(-2.8,-0.2)
	-- node [above, midway] {} (l2UB);
	\path [line] (ideal.south) -- +(0.0,-0.2) -- +(1.6,-0.2)
	-- node [above, midway] {} (strict);
	\path [line] (auxiliary.south) -- +(0.0,-0.2) -- +(-1.6,-0.2)
	-- node [above, midway] {} (strict);
	\path [line] (nice.south) -- +(0.0,0) 
	-- node [above] {} (wouldnotchange);
	
	\end{tikzpicture}
	\caption{Overview of the proof for Theorem~\ref{thm:l1}.}
	\label{fig:overview proof for thm l1}
\end{figure}
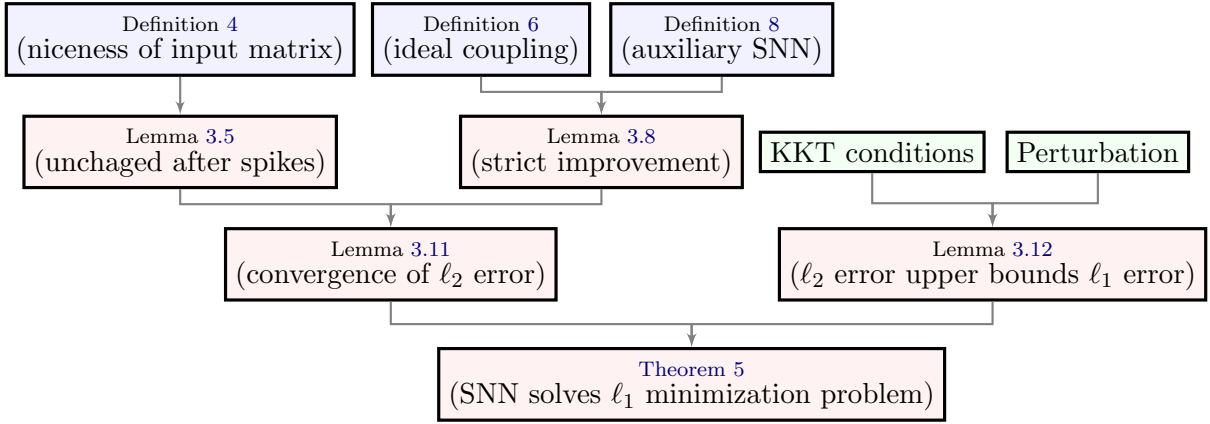

In the rest of this section, we are going to start from some definitions on the \textit{nice conditions} we need for the input matrix in Section~\ref{sec:nice}. Next, we define the ideal coupling in Section~\ref{sec:ideal coupling} and prove Lemma~\ref{lem:ideal SNN unchaged} and Lemma~\ref{lem:strict improvement} in Section~\ref{sec:unchange} and Section~\ref{sec:strict} respectively. Finally, we wrap up the proof for Theorem~\ref{thm:l1} in Section~\ref{sec:l1 convergence}.

\subsection{Some nice conditions on the input matrix}\label{sec:nice}
We need some \textit{nice conditions} for the input matrix as follows.
\begin{definition}[non-degeneracy]
	Let $A\in\R^{m\times n}$ where $m\leq n$. We say $A$ is non-degenerate if for any size $m\times m$ submatrix of $A$ has full rank. For any $\gamma>0$, we say $A$ is $\gamma$-non-degenerate if for any $\Gamma\subseteq[n]$, $|\Gamma|=m$, and $i\in\Gamma$, $\|A_i-\Pi_{A_{\Gamma\backslash\{i\}}}A_i\|_2\geq\gamma$ where $\Pi_{A_{\Gamma\backslash\{i\}}}\bv$ is the projection of $\bv$ onto subspace spanned by $\{A_j:\ j\in\Gamma\backslash\{i\}\|\}$ for any $\bv\in\R^m$.
\end{definition}

Note that if $A$ is non-degenerate, then for any $S\subseteq[n]$ and $|S|=m$ and $\bb\in\{-1,1\}^m$, there exists an unique solution $\bv\in\R^m$ to $A_S^\top\bv=\bb$ where $A_S$ is the submatrix of $A$ restricted to columns in $S$. We call such $\bv$ a \textit{vertex} of the polytope $\mathcal{P}_{A,1}$. Note that in this definition, a vertex might not lie in $\mathcal{P}_{A,1}$. An important parameter for future analysis is the minimum distance between two distinct vertices of $\mathcal{P}_{A,1}$.

\begin{definition}[nice input matrix]\label{def:nice}
	Let $A\in\R^{m\times n}$ and $\gamma\geq0$. We say $A$ is $\gamma$-nice if all of the following conditions hold.
	\begin{enumerate}[label=(\arabic*)]
		\item $A$ is $\gamma$-non-degenerate.
		\item The distance between any two distinct vertices of $\mathcal{P}_{A,1}$ is at least $\gamma$.
		\item For any $\bb\in\{-1,1\}^m$, $\Gamma\subseteq[n]$, and $|\Gamma|=m$, let $\bx=(A_\Gamma^\top)^{-1}\bb$. For any $i\in[m]$, $|\bx_i|\geq\gamma$.
	\end{enumerate}
	Define $\gamma(A)$ to be the largest $\gamma$ such that $A$ is $\gamma$-nice. We say $A$ is \textit{nice} if $\gamma(A)>0$.
\end{definition}

To motivate the definition of niceness, the following lemma shows that the $\ell_1$ minimization problem defined by matrix $A$ has unique solution if $\gamma(A)>0$.
\begin{lemma}
	Let $A\in\R^{m\times n}$. If $\gamma(A)>0$, then for any $\bb\in\R^m$, the $\ell_1$ minimization problem for $(A,\bb)$ has unique solution.
\end{lemma}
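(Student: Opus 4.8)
The plan is to show that if $\gamma(A) > 0$ then the $\ell_1$ minimization problem for $(A, \bb)$ has a unique optimal solution, by leveraging the dual characterization together with the KKT/complementary-slackness structure already discussed in Section~\ref{subsec:dual-view}. First I would invoke strong duality (the primal $\ell_1$ minimization in~\eqref{op:basispursuit} is a feasible linear program by assumption, so its dual~\eqref{op:basispursuit-dual} is feasible and the optimal values coincide) and let $\bv^\OPT$ be an optimal dual solution and $\bx^\OPT$ any optimal primal solution. Let $\Gamma = \{ j \in [\pm n] : A_j^\top \bv^\OPT = 1 \}$ be the set of walls touched by $\bv^\OPT$. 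By complementary slackness, any primal optimum $\bx^\OPT$ is supported only on coordinates $i$ with $i \in \Gamma$ or $-i \in \Gamma$, and moreover $\mathrm{sign}(\bx^\OPT_i)$ is forced ($\bx^\OPT_i > 0$ if $i \in \Gamma$, $\bx^\OPT_i < 0$ if $-i \in \Gamma$, $\bx^\OPT_i = 0$ if neither). Note $i$ and $-i$ cannot both be in $\Gamma$ (that would force $A_i^\top \bv^\OPT = 1$ and $-A_i^\top \bv^\OPT = 1$). So all primal optima share the same support $S \subseteq [n]$ (with $|S| \le m$) and the same sign pattern on $S$.

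The next step is to pin down $S$ and its size. If $|S| < m$, I would argue that, generically, the dual optimum can be perturbed along the subspace orthogonal to $\{A_j : j \in S\}$ without leaving the polytope or decreasing $\bb^\top \bv$ — but to get uniqueness of the primal I actually only need that $A_S$ (the columns of $A$ indexed by $S$) has full column rank, which follows from non-degeneracy of $A$ since $|S| \le m$. Given full column rank of $A_S$, the constraint $A_S \bx_S = \bb$ has at most one solution, hence $\bx^\OPT$ is unique. The only remaining case to rule out is $|S| < m$ with the linear system $A_S \bx_S = \bb$ still admitting its (unique) solution — that's fine, uniqueness still holds. So really the core of the argument is: (i) all primal optima have the same support $S$ by complementary slackness, and (ii) $A_S$ has full column rank by non-degeneracy, forcing the optimum to be the unique solution of $A_S \bx_S = \bb$.

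The subtle point — and the main obstacle — is step (i): showing that \emph{every} primal optimum has support contained in $\Gamma$ requires that the dual optimum $\bv^\OPT$ is itself essentially unique, or at least that the active set $\Gamma$ is well-defined independent of which dual optimum we pick. This is exactly where conditions (1) and (2) of Definition~\ref{def:nice} come in: $\gamma$-non-degeneracy guarantees every $m$-subset of columns is independent (so vertices of $\mathcal{P}_{A,1}$ are non-degenerate), and the minimum-distance condition between distinct vertices, combined with condition (3) (every vertex has all coordinates of its associated primal lift bounded away from $0$), rules out the degenerate configurations where the dual optimal face has positive dimension or where the support of $\bx^\OPT$ could be ambiguous. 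I would formalize this by: taking $\bv^\OPT$ an optimal vertex of the dual polytope (optimum of an LP is attained at a vertex), using non-degeneracy so it's defined by exactly $m$ tight constraints $A_\Gamma^\top \bv^\OPT = \mathbf{1}$, then using condition (3) to conclude the corresponding primal solution $\bx = (A_\Gamma^\top)^{-1}\mathbf{1}$-style lift has all nonzero coordinates, which via KKT is the unique primal optimum. Then any other primal optimum would have to be supported on the same $\Gamma$ with the same signs, and $A_\Gamma$ full rank finishes it.

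The one genuinely delicate case is when the dual optimum is not unique — i.e. when the maximizing face of $\bb^\top \bv$ over $\mathcal{P}_{A,1}$ is an edge or higher-dimensional face rather than a single vertex. I would handle this by noting that if two distinct dual vertices $\bv_1, \bv_2$ were both optimal, the segment between them is optimal, and the union of the active sets gives a primal feasible direction contradicting either the sign constraints from KKT or — more cleanly — one shows directly that such a configuration is incompatible with condition (2) bounding vertex separation together with the assumption $A\bx = \bb$ being feasible (forcing $\bb$ to lie in the cone structure that, under niceness, points strictly into a single vertex). This is the step I'd expect to require the most care; the rest is routine linear algebra once the support and signs are fixed. $\square$
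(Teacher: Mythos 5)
Your core argument (paragraphs one and two) is exactly the paper's proof: fix a dual optimum $\bv^\OPT$ of~\eqref{op:basispursuit-dual}, use complementary slackness to conclude that every primal optimum is supported inside the active set $S$ of $\bv^\OPT$, note $|S|\leq m$, and then use $\gamma$-non-degeneracy to get full column rank of $A_S$, so $A_S\bx_S=\bb$ has at most one solution. However, the ``subtle point'' you raise in the third paragraph is not actually an obstacle: for a linear program with strong duality, complementary slackness holds between \emph{every} primal optimal solution and \emph{every} dual optimal solution, so it suffices to fix one arbitrary dual optimum $\bv^\OPT$ and its active set $\Gamma$; no uniqueness of the dual optimum, no well-definedness of $\Gamma$ across different dual optima, and no appeal to conditions (2) and (3) of Definition~\ref{def:nice} are needed. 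The paper's proof indeed uses nothing beyond your steps (i) and (ii), and your fourth paragraph's sketch for ruling out a higher-dimensional optimal dual face is both unnecessary and not carried out, so you should simply delete that detour. The only step that genuinely deserves more justification is the claim $|S|\leq m$, which you assert in passing; the paper also only asserts it (attributing it to $\gamma(A)>0$), so on that point your write-up is no less complete than the paper's.
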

\begin{proof}
	We prove the lemma by contradiction. Suppose there exists $\bb\in\R^m$ such that there are two distinct solutions $\bx_1\neq\bx_2$ to the $\ell_1$ minimization problem for $(A,\bb)$. Let $\bv^*$ be the optimal solution of the dual program as in equation~\eqref{op:basispursuit-dual}. By the complementary slackness in the KKT condition, for any optimal solution $\bx$ to the primal program, $\text{supp}(\bx)\subseteq\{i\in[n]:\ |A_i^\top\bv^*|=1 \}$. Let $S=\{i\in[n]:\ |A_i^\top\bv^*|=1 \}$, then both $\bx_1$ and $\bx_2$ are solution to $A_S\bx=\bb_S$ where $A_S$ and $\bb_S$ are restrictions to index set $S$.  As $\gamma(A)>0$, we have $|S|\leq m$. By the non-degeneracy of $A$, $A_S$ has full rank and thus $A_S\bx=\bb_S$ has unique solution. That is, $\bx_1=\bx_2$, which is a contradiction.
	
	We conclude that if $A$ is non-degenerate and $\gamma(A)>0$, then for any $\bb\in\R^m$, the $\ell_1$ minimization problem for $(A,\bb)$ has unique solution.
\end{proof}

In general, it is easy to find a matrix $A$ such that $\gamma(A)=0$. However, we would like to argue that most of the matrices are actually nice.
The following lemma shows that random matrix $A$ sampled from the \textit{rotational symmetry model (RSM)} is nice. In RSM, each column of $A$ is an uniform vector on the unit sphere of $\R^m$. Note that such matrix for $\ell_1$ minimization problem is commonly used in practice such as compressed sensing.
\begin{lemma}\label{lem:gamma lb of RSM}
	Let $A\in\R^{m\times n}$ be a random matrix samples from RSM, then $\gamma(A)>0$ with high probability.
\end{lemma}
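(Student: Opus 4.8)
The plan is to show that each of the three conditions in Definition~\ref{def:nice} holds almost surely (hence with probability $1$, which certainly implies ``with high probability''), exploiting the fact that the RSM distribution is absolutely continuous with respect to the uniform measure on $(S^{m-1})^n$ and that each ``bad'' event is defined by the vanishing of a nontrivial real-analytic function of the columns. First I would fix notation: write $A=(A_1,\dots,A_n)$ with each $A_i$ drawn independently and uniformly from $S^{m-1}\subseteq\R^m$. I would handle the three niceness conditions by finding, for each, a finite collection of ``degenerate loci'' inside $(\R^m)^n$ that are closed sets of Lebesgue measure zero, and showing that outside the union of these loci all three conditions hold with some $\gamma>0$.

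\textbf{Condition (1), $\gamma$-non-degeneracy.} For a fixed $\Gamma\subseteq[n]$ with $|\Gamma|=m$ and fixed $i\in\Gamma$, the quantity $\|A_i-\Pi_{A_{\Gamma\setminus\{i\}}}A_i\|_2$ is zero exactly when $A_i$ lies in the span of $\{A_j:j\in\Gamma\setminus\{i\}\}$, which is the vanishing of $\det(A_\Gamma)$ (after an orthonormalization, or equivalently $\det(A_\Gamma^\top A_\Gamma)=0$). The map $(A_1,\dots,A_n)\mapsto\det(A_\Gamma)$ is a nonzero polynomial, so its zero set has Lebesgue measure zero in $(\R^m)^n$; intersecting with the product of spheres, it still has measure zero with respect to the RSM product measure (the surface measure on each sphere is mutually absolutely continuous with $(m-1)$-dimensional Hausdorff measure, and a proper algebraic subvariety meets the sphere in a set of Hausdorff measure zero unless it contains the sphere, which $\det(A_\Gamma)=0$ does not since a generic tuple on the spheres has full rank). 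Taking the (finite) union over all $\binom{n}{m}$ choices of $\Gamma$ and all $i\in\Gamma$, almost surely every such submatrix is invertible. Since there are only finitely many $(\Gamma,i)$ pairs, $\min_{\Gamma,i}\|A_i-\Pi_{A_{\Gamma\setminus\{i\}}}A_i\|_2$ is a positive random variable almost surely, giving a strictly positive $\gamma_1>0$.

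\textbf{Conditions (2) and (3).} Condition~(3): for a fixed $\bb\in\{-1,1\}^m$ and fixed $\Gamma$ with $|\Gamma|=m$, on the event that $A_\Gamma$ is invertible we set $\bx=(A_\Gamma^\top)^{-1}\bb$; each coordinate $\bx_i$ is a ratio of polynomials in the entries of $A_\Gamma$ (Cramer's rule) with denominator $\det(A_\Gamma^\top)\neq0$, and $\bx_i=0$ is again the vanishing of a nonzero polynomial (the relevant numerator determinant is not identically zero, since one can exhibit a single tuple on the spheres making it nonzero), hence a measure-zero locus; union over the finitely many $(\bb,\Gamma)$ gives a positive lower bound $\gamma_3>0$ almost surely. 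Condition~(2): two distinct vertices are solutions $\bv_1,\bv_2$ of $A_{S_1}^\top\bv=\bb_1$, $A_{S_2}^\top\bv=\bb_2$ for $|S_1|=|S_2|=m$ and sign vectors $\bb_1,\bb_2$ with $(S_1,\bb_1)\neq(S_2,\bb_2)$; each $\bv_k$ is a rational function of the columns (well-defined off the measure-zero set where some $A_S$ is singular), so $\bv_1-\bv_2$ is rational and is identically zero only if the two affine conditions coincide for generic columns — which, because the sign patterns or index sets differ, does not happen (pick a generic tuple to see $\bv_1\neq\bv_2$). Thus $\|\bv_1-\bv_2\|_2$ vanishes only on a measure-zero locus, and taking the finite union over all pairs of (index set, sign vector) yields a positive lower bound $\gamma_2>0$ almost surely. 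Finally set $\gamma=\min(\gamma_1,\gamma_2,\gamma_3)$; almost surely $\gamma>0$, so $A$ is $\gamma$-nice and $\gamma(A)\geq\gamma>0$.

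\textbf{Main obstacle.} The one place requiring genuine care is verifying that each ``bad'' function is \emph{not identically zero} on the product of spheres — i.e., that the degenerate loci are proper subvarieties rather than the whole space. For non-degeneracy this is clear (generic $m$ vectors in $\R^m$ span); for conditions~(2) and~(3) one must exhibit, for each fixed choice of index sets and sign vectors, at least one tuple of unit vectors for which the corresponding coordinate (or vertex difference) is nonzero, and argue that distinct $(S,\bb)$ genuinely give distinct affine constraints. This is a short but slightly fiddly argument: it suffices to perturb a convenient configuration (e.g., columns near the standard basis vectors) and check the determinants/ratios are nonzero there, then invoke the identity theorem for polynomials/real-analytic functions to conclude the zero set is measure zero. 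Once that is in hand, the rest is the standard ``finite union of measure-zero sets is measure zero, and a positive random variable over a finite index set has a positive minimum'' bookkeeping.
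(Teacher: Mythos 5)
Your proposal is correct and follows essentially the same route as the paper: each violation of the three niceness conditions is a measure-zero event under the RSM product measure, a union bound over the finitely many choices of $(\Gamma,i)$, $(\Gamma,\bb)$, and vertex pairs shows all of them are avoided almost surely, and the minimum of finitely many almost-surely positive quantities gives $\gamma(A)>0$. The paper states this argument only in a few lines (proving it for non-degeneracy and asserting ``similar arguments hold'' for the other two properties), so your additional care in verifying that the degenerate loci are proper subvarieties of the product of spheres is a fuller write-up of the same idea rather than a different approach.
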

\begin{proof}
	First, we show that $A$ is non-degenerate with high probability. For any $\Gamma\subseteq[n]$ and $i\in\Gamma$, denote the event where $A_i = \Pi_{A_{\Gamma\backslash\{i\}}}A_i$ as $E_{\Gamma,i}$. Note that this event is measured zero for all choice of $\Gamma$ and $i$ and thus by union bound, we have $A$ being non-degenerate with high probability. For the other two properties, similar arguments hold.
\end{proof}

We remark that giving a lower bound in terms of $m$ and $n$ for $\gamma(A)$ would result in a better asymptotic bound for our main theorem and could have applications in other problems too. Since the goal of this paper is giving a provable analysis, we do not intend to optimize the parameter. Note that for $A$ sampled from RSM, $\gamma(A)$ has an inverse exponential lower bound directly from union bound when $n$ and $m$ are polynomially related. As for upper bound, there are inverse quasi-polynomial upper bound if $n\geq\polylog(m)\cdot m$ and inverse exponential upper bound if $n\geq m^{1+\Omega(1)}$ as pointed out by the anonymous reviewer from ITCS 2019. See~\autoref{sec:quasi poly ub for gamma of RSM}. for more details.
We leave it as an open question to understand the correct asymptotic behavior of $\gamma(A)$ when $A$ is sampled from RSM.

\subsection{Ideal coupling}\label{sec:ideal coupling}
Ideal coupling is a technique to keeping track of the dual SNN $\bv(t)$ by associating any point in the dual polytope to a point in a smaller polytope. Concretely, let $\mathcal{P}_{A,1}=\{\bv\in\R^m:\ \|A^\top\bv\|_\infty\leq1\}$ be the dual polytope and $\mathcal{P}_{A,1-\tau}$ be the \textit{ideal polytope} where $\tau\in(0,1)$ is  an important parameter that will be properly chosen\footnote{The choice of $\tau$ depends on $A$ and $1$ and will be discussed later.} in the end of the proof. Observe that $\mathcal{P}_{A,1-\tau}\subsetneq\mathcal{P}_{A,1}$. The idea of ideal coupling is associating each $\bv\in\mathcal{P}_{A,1}$ with a point $\bv^{\text{ideal}}$ in $\mathcal{P}_{A,1-\tau}$. In the analysis, we will then focus on the dynamics of $\bv^\text{idael}$ instead of that of $\bv$.

Before we formally define the coupling, we have to define a \textit{partition} of $\mathcal{P}_{A,1}$ with respect to $\mathcal{P}_{A,1-\tau}$ as follows.

\begin{definition}[partition of $\mathcal{P}_{A,1}$]\label{def:ideal partition}
	Let $\mathcal{P}_{A,1}$ and $\mathcal{P}_{A,1-\tau}$ be defined as above. For each $\bv^\text{ideal}\in\mathcal{P}_{A,1-\tau}$, define
	\begin{equation*}
	S_{\bv^\text{ideal}} = \{\bv^\text{ideal}+\mathcal{C}_{A,\Gamma(\bv^{ideal})}\}\cap\mathcal{P}_{A,1}.
	\end{equation*}
	where $\Gamma(\bv^\text{ideal})=\{i\in[\pm n]:\ \langle A_i,\bv^\text{ideal}\rangle=1-\tau\}$ is the active walls of $\bv^\text{ideal}$ and $\mathcal{C}_{A,\Gamma(\bv^{ideal})}=\{\sum_{i\in\Gamma(\bv^\text{ideal})}a_iA_i,\ \forall a_i\geq0\}$ is the cone spanned by the column of $A$ indexed by $\Gamma(\bv^\text{ideal})$.
\end{definition}

\begin{exmp}
	Consider the example where $A=\bigl( \begin{smallmatrix}1&0\\0&1\end{smallmatrix}\bigr)$ and $\tau\in(0,1)$. The dual polytope (resp. ideal polytope) is the square with vertices in the form $(\pm1,\pm1)$ (resp. $(\pm1-\tau,\pm1-\tau)$). For a arbitrary $\bv^\text{ideal}=(x,y)\in\mathcal{P}_{A,1-\tau}$, let us see what $S_{\bv^\text{ideal}}$ is:
	\begin{itemize}
		\item When $|x|,|y|<1-\tau$, \textit{i.e.,} $\bv^\text{ideal}$ strictly lies inside $\mathcal{P}_{A,1-\tau}$, $\Gamma(\bv^\text{ideal})=\emptyset$ and thus $C_{A,\Gamma(\bv^\text{ideal})}=\emptyset$. Namely, $S_{\bv^\text{ideal}}=\bv^\text{ideal}$.
		\item When $|x|=1-\tau$ and $|y|<1-\tau$, \textit{i.e.,} $\bv^\text{ideal}$ lies on an edge of the ideal polytope, $\Gamma(\bv^\text{ideal})=\{\text{sgn}(x)\cdot1\}$ and thus $C_{A,\Gamma(\bv^\text{ideal})}=\{(a,0):\ a\geq0\}$. Namely, $S_{\bv^\text{ideal}}=\{(a,y):\ a\in[1-\tau,1] \}$.
		\item When $|x|<1-\tau$ and $|y|=1-\tau$, \textit{i.e.,} $\bv^\text{ideal}$ lies on an edge of the ideal polytope, $\Gamma(\bv^\text{ideal})=\{\text{sgn}(y)\cdot2\}$ and thus $C_{A,\Gamma(\bv^\text{ideal})}=\{(0,b):\ b\geq0\}$. Namely, $S_{\bv^\text{ideal}}=\{(x,b):\ b\in[1-\tau,1] \}$.
		\item When $|x|=|y|=1-\tau$, \textit{i.e.,} $\bv^\text{ideal}$ lies on a vertex of the ideal polytope, $\Gamma(\bv^\text{ideal})=\{\text{sgn}(x)\cdot1,\text{sgn}(y)\cdot2\}$ and thus $C_{A,\Gamma(\bv^\text{ideal})}=\{(a,b):\ a,b\geq0\}$. Namely, $S_{\bv^\text{ideal}}=\{(a,b):\ a,b\in[1-\tau,1] \}$.
	\end{itemize}
\end{exmp}

The following lemma checks that Definition~\ref{def:ideal partition} does give a partition for $\mathcal{P}_{A,1}$.
\begin{lemma}\label{lem:ideal partition}
	$\{S_{\bv^\text{ideal}}\}_{\bv^\text{ideal}\in\mathcal{P}_{A,1-\tau}}$ is a partition for $\mathcal{P}_{A,1}$.
\end{lemma}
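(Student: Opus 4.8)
The plan is to show two things: (i) the sets $S_{\bv^\text{ideal}}$ cover $\mathcal{P}_{A,1}$, and (ii) they are pairwise disjoint. For coverage, take any $\bv \in \mathcal{P}_{A,1}$. The natural candidate for its "ideal representative" is obtained by moving $\bv$ toward the interior of the polytope along a direction inside the cone $\mathcal{C}_{A,\Gamma}$ until it lands on $\mathcal{P}_{A,1-\tau}$. Concretely, I would set up a parametrized path and argue that there is a unique point $\bv^\text{ideal}\in \mathcal{P}_{A,1-\tau}$ together with nonnegative coefficients $a_i \geq 0$ (indexed by the walls $i$ that $\bv^\text{ideal}$ touches at level $1-\tau$) such that $\bv = \bv^\text{ideal} + \sum_{i} a_i A_i$; this is exactly the statement $\bv \in S_{\bv^\text{ideal}}$. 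The cleanest way to produce $\bv^\text{ideal}$ is via a projection/closest-point argument: let $\bv^\text{ideal}$ be the point of $\mathcal{P}_{A,1-\tau}$ minimizing some suitable functional (e.g.\ distance measured through a norm that makes the KKT stationarity conditions reproduce the cone structure), and then read off the multipliers from the KKT/normal-cone conditions for that projection. The normal cone of $\mathcal{P}_{A,1-\tau}$ at a boundary point whose active constraints are $\Gamma(\bv^\text{ideal})$ is precisely $\mathcal{C}_{A,\Gamma(\bv^\text{ideal})}$ (since the active constraints are $\langle A_i, \cdot\rangle \le 1-\tau$), so the optimality condition says $\bv - \bv^\text{ideal}$ lies in that cone — which is membership in $S_{\bv^\text{ideal}}$.

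\textbf{For disjointness}, suppose $\bv \in S_{\bv^\text{ideal}_1} \cap S_{\bv^\text{ideal}_2}$ with $\bv^\text{ideal}_1 \neq \bv^\text{ideal}_2$. Then $\bv - \bv^\text{ideal}_1 \in \mathcal{C}_{A,\Gamma(\bv^\text{ideal}_1)}$ and $\bv - \bv^\text{ideal}_2 \in \mathcal{C}_{A,\Gamma(\bv^\text{ideal}_2)}$. I would argue this forces $\bv^\text{ideal}_1 = \bv^\text{ideal}_2$ by a convexity/monotonicity argument: both $\bv^\text{ideal}_1$ and $\bv^\text{ideal}_2$ are the solution to the same projection problem for $\bv$ onto $\mathcal{P}_{A,1-\tau}$ (the functional being strictly convex if I pick, say, Euclidean distance and verify that the normal cone of $\mathcal{P}_{A,1-\tau}$ at any boundary point is exactly the cone generated by the active $A_i$'s). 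Strict convexity of the projection gives uniqueness of the minimizer, hence the two ideal points coincide — contradiction. Alternatively, a direct argument: if $\bv^\text{ideal}_1 \ne \bv^\text{ideal}_2$ lie on $\partial\mathcal{P}_{A,1-\tau}$ and each sees $\bv$ in its respective outward normal cone, then $\langle \bv - \bv^\text{ideal}_j, \bv^\text{ideal}_j - \bv^\text{ideal}_{3-j}\rangle \ge 0$ for $j=1,2$ (supporting-hyperplane inequality for a convex set), and adding the two gives $-\|\bv^\text{ideal}_1 - \bv^\text{ideal}_2\|_2^2 \ge 0$, forcing equality.

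\textbf{The main obstacle} I expect is the coverage direction, specifically verifying that the point produced by "sliding $\bv$ inward" genuinely lands in $\mathcal{P}_{A,1-\tau}$ with the right active set, and that the displacement vector is expressible with nonnegative coefficients on exactly the active columns $A_i$ rather than some larger index set. This is where non-degeneracy of $A$ (Definition of $\gamma$-non-degeneracy) should come in: it guarantees the active columns at a vertex are linearly independent so the multipliers are well-defined, and it rules out pathological boundary geometry. I would also need to handle the edge case where $\bv$ already lies in $\mathcal{P}_{A,1-\tau}$ (then $\bv^\text{ideal}=\bv$, $\Gamma$ determined by which walls of the \emph{smaller} polytope it touches, and the cone contribution is zero). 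The rest is routine: once coverage and disjointness are established, "partition" follows by definition, modulo the harmless convention that empty $S_{\bv^\text{ideal}}$'s (which cannot occur here since $\bv^\text{ideal} \in S_{\bv^\text{ideal}}$ always) are discarded.
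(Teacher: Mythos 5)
Your proposal is correct, but it takes a genuinely different route from the paper. The paper proves disjointness by a hands-on case analysis: writing a common point as $\bv=\bv^\text{ideal}_1+A_{\Gamma_1}\bz_1=\bv^\text{ideal}_2+A_{\Gamma_2}\bz_2$ with $\bz_1,\bz_2\succeq0$, splitting into the cases $\Gamma_1=\Gamma_2$ (where positive definiteness of $A_{\Gamma}^\top A_{\Gamma}$, which follows from non-degeneracy of $A$, forces $\bz_1=\bz_2$) and $\Gamma_1\neq\Gamma_2$ (where sign comparisons of $A_j^\top(\bv^\text{ideal}_1-\bv^\text{ideal}_2)$ over $\Gamma_1\setminus\Gamma_2$, $\Gamma_2\setminus\Gamma_1$, $\Gamma_1\cap\Gamma_2$ yield $\|A_{\Gamma_2}\bz_2-A_{\Gamma_1}\bz_1\|_2^2<0$, a contradiction); coverage is simply asserted to be easy. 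You instead recognize $S_{\bv^\text{ideal}}$ as $\bigl(\bv^\text{ideal}+N_{\mathcal{P}_{A,1-\tau}}(\bv^\text{ideal})\bigr)\cap\mathcal{P}_{A,1}$, since for the polyhedron $\{\bv:\ A_j^\top\bv\leq1-\tau,\ j\in[\pm n]\}$ the normal cone at a point is exactly the conic hull of the active columns (Farkas; no constraint qualification or non-degeneracy needed), so coverage is existence of the Euclidean projection and disjointness is its uniqueness, or equivalently your supporting-hyperplane/monotonicity computation $-\|\bv^\text{ideal}_1-\bv^\text{ideal}_2\|_2^2\geq0$. Both directions of your argument are sound: the containment $\mathcal{C}_{A,\Gamma(\bv^\text{ideal})}\subseteq N_{\mathcal{P}_{A,1-\tau}}(\bv^\text{ideal})$ needed for disjointness is immediate from $A_i^\top(\bw-\bv^\text{ideal})\leq0$ for feasible $\bw$ and active $i$, and the reverse inclusion needed for coverage is the standard polyhedral fact. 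What your route buys is a shorter, case-free proof that also supplies the coverage argument the paper omits, and it does not use non-degeneracy at all; your stated worry that non-degeneracy is needed to land in the right active set with nonnegative multipliers is unnecessary (it only matters if you want the multipliers to be unique, which the lemma does not require). What the paper's route buys is a self-contained elementary argument that never invokes the normal-cone decomposition, at the cost of a heavier case analysis and an extra hypothesis.
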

\begin{proof}[Proof of Lemma~\ref{lem:ideal partition}]
	The proof is basically doing case analysis and using some basic properties from linear algebra. See Section~\ref{sec:missing proofs ideal auxiliary SNN} for details.
\end{proof}

\begin{definition}[ideal coupling]\label{def:ideal coupling}
	Let $\mathcal{P}_{A,1}$ and $\mathcal{P}_{A,1-\tau}$ be defined as above. For any $\bv\in\mathcal{P}_{A,1}$, define $\bv^\text{ideal}(\bv)$ be the unique $\bv^\text{ideal}$ such that $\bv\in S_{\bv^\text{ideal}}$. We denote $\bv^\text{ideal}(\bv)$ as $\bv^\text{ideal}$ when the context is clear. Specifically, for any $t\geq0$, we denote $\bv^\text{ideal}(t)=\bv^\text{ideal}(\bv(t))$ as the ideal SNN at time $t$.
\end{definition}

See Figure~\ref{fig:ideal coupling} for an example of the ideal coupling.

\begin{figure}[h]
	\centering
	\includegraphics[width=10cm]{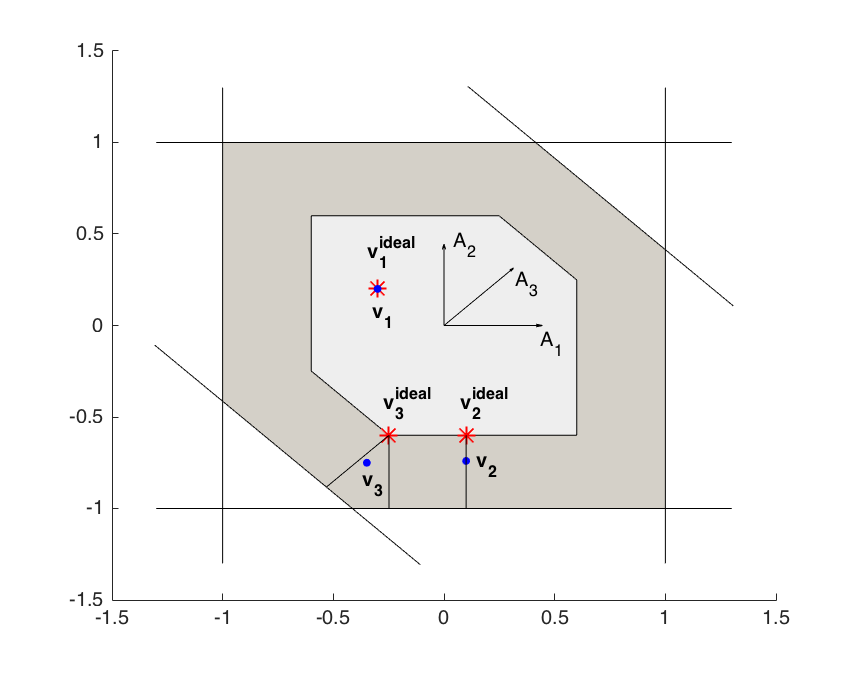}
	\caption{This is an example of ideal coupling in $\R^2$ where $\tau=0.4$, $A_1=[1\ 0]^\top$, $A_2=[0\ 1]^\top$, and $A_3=[\frac{1}{\sqrt{2}}\ \frac{1}{\sqrt{2}}]^\top$. The dots (\textit{i.e.,} $\bv_1,\bv_2,\bv_3$) are dual SNN and the stars (\textit{i.e.,} $\bv_1^\text{ideal},\bv_2^\text{ideal},\bv_3^\text{ideal}$) are the corresponding ideal SNN. The whole gray area is the dual polytope $\mathcal{P}_{A,1}$ and the gray area in the middle is the ideal polytope $\mathcal{P}_{1-\tau}$.}
	\label{fig:ideal coupling}
\end{figure}
\vspace{3mm}

Note that Definition~\ref{def:ideal coupling} is well-defined due to Lemma~\ref{lem:ideal partition}. With the ideal coupling, we are then switching to analyze the \textit{ideal SNN} $\bv^\text{ideal}(t)$ instead of the dual SNN $\bv(t)$. In the following, we are going to show that the ideal SNN is indeed tractable for analysis, though it is highly non-trivial and is very sensitive to the choice of parameters.

To show the convergence of ideal SNN, we need a notion to measure how close $\bv^\text{ideal}(t)$ and the optimal point is. To do so, we define the \textit{ideal solution} of ideal SNN at time $t$ as follows.

\begin{definition}[ideal solution]\label{def:ideal solution}
	For any $t\geq0$, define the ideal solution $\bx^\text{ideal}(t)$ at time $t$ as
	\begin{equation*}
	\bx^\text{ideal}(t) = \argmin_{\substack{\bx\geq0,\\\bx_i=0,\ \forall i\in\Gamma(\bv^\text{ideal}(t))}}\|\bb-A\bx\|_2.
	\end{equation*}
	Also, let $\Gamma^*(\bv^\text{ideal}(t))=\{i\in\Gamma(\bv^\text{ideal}(t)):\ \bx^\text{ideal}(t)\neq0\}$ to be the set of super active neurons.
\end{definition}

In the later proof, we need one more definition on a variant of ideal SNN called the \text{super SNN}. Similar to Definition~\ref{def:ideal solution}, we define the super ideal SNN $\bv^\text{super}(t)$ as the projection of $\bv(t)$ to the ideal polytope \textit{without} those non-super ideal neurons. Formally, define $\bv^\text{super}(t)$ be the unique solution of the following equations: $\bv=\bv(t)-A_{\Gamma^*(\bv^\text{ideal}(t))}\bz$ and $A_i^\top\bv=1-\tau$ for each $i\in\Gamma^*(\bv^\text{ideal}(t))$. See Figure~\ref{fig:super} for example. Note that the uniqueness of the solution is guaranteed by the non-degeneracy of $A$.

\begin{figure}[h]
	\centering
	\includegraphics[width=10cm]{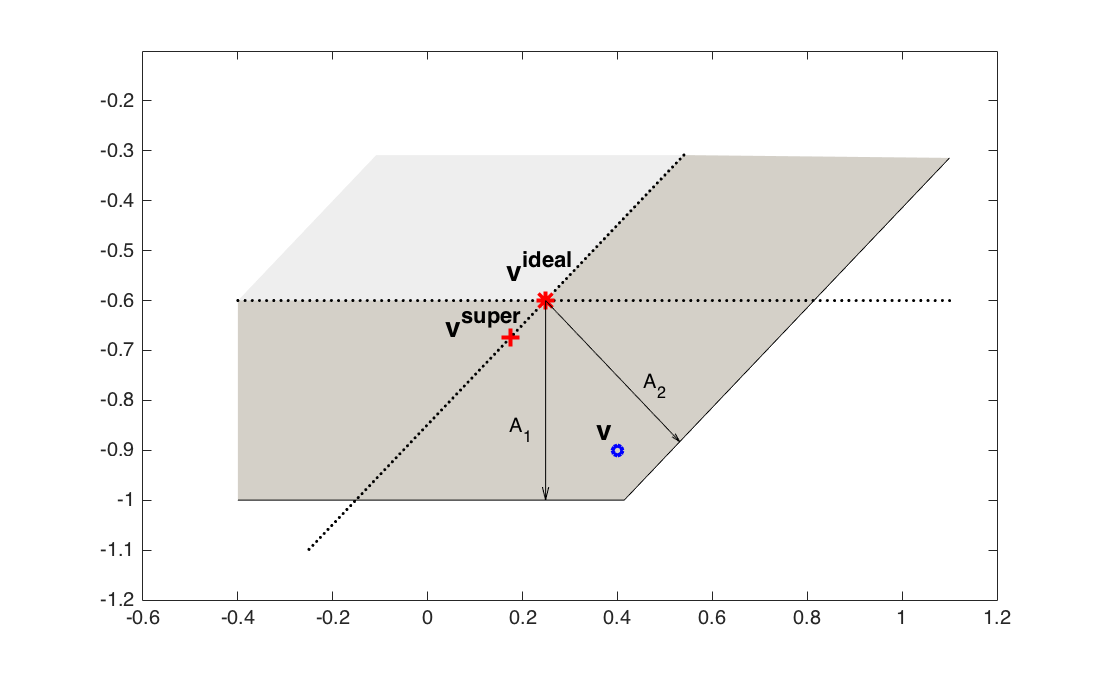}
	\caption{This is an example of $\bv^\text{super}$ in $\R^2$ where $\tau=0.4$, $A_1=[0\ -1]^\top$, $A_2=[\frac{1}{\sqrt{2}}\ -\frac{1}{\sqrt{2}}]^\top$, $\bb=[1\ 0]^\top$, and $\bv=[0.4\ -0.9]^\top$. The light gray area is the ideal polytope and the dark gray area is the dual polytope. In this example, we have $\Gamma(\bv)=\{1,2\}$ while $\Gamma^*(\bv)=\{2\}$. As a result, $\bv^\text{super}$ is defined as the projection of $\bv$ onto the ideal polytope that only contains neuron $2$.}
	\label{fig:super}
\end{figure}
\vspace{3mm}

It is indeed unclear why we need these definitions at this stage of the proof. It would be clearer why we need them in the next two subsections once we go into the main analysis. Before we move on to more details, see Figure~\ref{fig:ideal coupling} and Figure~\ref{fig:super} again to familiarize with the definitions.

\subsection{Ideal SNN remains unchanged after firing spikes}\label{sec:unchange}
In this subsection, we are going to prove the following important lemma saying that the dual SNN would not change its ideal SNN after firing spikes.

\begin{lemma}[ideal SNN remains unchanged after firing spikes]\label{lem:ideal SNN unchaged}
	There exists a polynomial $\alpha(\cdot)$ such that if $A$ is nice and $0<\alpha\leq\alpha(\frac{\tau\cdot\gamma(A)}{n\cdot\lambda_{\max}})$, then $\bv(t)-\alpha A\bs(t)\in S_{\bv^\text{ideal}(t)}$ for each $t\geq0$.
\end{lemma}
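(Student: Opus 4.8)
The plan is to fix a time $t\ge 0$, abbreviate $\bv=\bv(t)$, $\bw=\bv^{\text{ideal}}(t)$, $\Gamma=\Gamma(\bw)$, and show that the post-spike point $\bv':=\bv-\alpha A\bs(t)$ lies in $S_{\bw}$; since the sets $\{S_{(\cdot)}\}$ partition $\mathcal P_{A,1}$ (Lemma~\ref{lem:ideal partition}), this is precisely the claim $\bv^{\text{ideal}}(\bv')=\bw$. Let $F=\{j\in[\pm n]:A_j^\top\bv\ge 1\}$ be the set of walls $\bv$ currently touches, so $A\bs(t)=\sum_{j\in F}A_j$; the case $F=\emptyset$ is immediate, so assume $F\ne\emptyset$. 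By Definition~\ref{def:ideal partition} it then suffices to prove: (a) $\bv'\in\mathcal P_{A,1}$, and (b) $\bv'-\bw\in\mathcal C_{A,\Gamma}$.

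For (b) I would first use $\gamma$-non-degeneracy to conclude that $\{A_i:i\in\Gamma\}$ is linearly independent (so $|\Gamma|\le m$) and hence that $\bv-\bw$ has a \emph{unique} expansion $\bv-\bw=\sum_{i\in\Gamma}a_iA_i$ with $a\succeq 0$. The two facts to establish are: (i) $F\subseteq\Gamma$, i.e.\ every wall $\bv$ touches is active at its ideal point $\bw$; and (ii) $a_j$ is bounded below for every $j\in F$. Granting (i)--(ii), choose the polynomial $\alpha(\cdot)$ so that $0<\alpha\le\alpha(\tfrac{\tau\,\gamma(A)}{n\,\lambda_{\max}})$ forces $\alpha\le a_j$ for all $j\in F$; then $\bv'-\bw=\sum_{i\in\Gamma}(a_i-\alpha\,\mathbf{1}_{i\in F})A_i$ has all coefficients $\ge 0$, which is (b). The cleanest instance of (i)--(ii) is when $\bw$ lies in the relative interior of a single facet $W_j^{\text{ideal}}$ of $\mathcal P_{A,1-\tau}$, i.e.\ $\Gamma=F=\{j\}$: then $\bw$ is the orthogonal projection of $\bv$ onto $\{w:A_j^\top w=1-\tau\}$, so $\bv-\bw=a_jA_j$ with $a_j\|A_j\|_2^2=A_j^\top\bv-(1-\tau)=\tau$, whence $a_j=\tau/\|A_j\|_2^2\ge\tau/\lambda_{\max}$, far above $\alpha$. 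For the general case one writes $A_j=\widehat A_j+\Pi_{A_{\Gamma\setminus\{j\}}}A_j$, invokes $\|\widehat A_j\|_2\ge\gamma(A)$ (which is exactly $\gamma$-non-degeneracy), combines the resulting identity $a_j=\langle\bv-\bw,\widehat A_j\rangle/\|\widehat A_j\|_2^2$ with $A_j^\top(\bv-\bw)=\tau$, and lower-bounds $\langle\bv-\bw,\widehat A_j\rangle$ by a fixed fraction of $\tau$ by playing the feasibility inequalities $A_i^\top\bv\le 1$ ($i\in\Gamma$) against the geometry of the annulus $\mathcal P_{A,1}\setminus\mathcal P_{A,1-\tau}$ and the vertex-separation clause of Definition~\ref{def:nice}; this is where the $\tfrac{\gamma(A)}{n\lambda_{\max}}$ factor in the threshold comes from.

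For (a) I would write $A_k^\top\bv'=A_k^\top\bv-\alpha\sum_{j\in F}A_k^\top A_j$ for each $k\in[\pm n]$: the walls $k\in F$ drop strictly inward (from $1$ to $1-\alpha\cdot(\text{positive})$, by the computation above), while for $k\notin F$ the correction has size at most $\alpha\,n\,\lambda_{\max}$, which is absorbed by the slack of $\bv$ at $W_k$ once $\tau$ (fixed at the end of the overall proof) and then $\alpha$ are taken small enough. I expect the genuine obstacle to be (i) together with the quantitative estimate (ii): for an \emph{arbitrary} point of $S_{\bw}$ both can fail (the point might sit deep in a cell with a minuscule coefficient, or might touch a wall that is not active at $\bw$), so the proof must exploit that $\bv(t)$ is not arbitrary but is produced by the SNN dynamics -- built from $\bv(0)=\mathbf 0$ by drifting along $\bb$ and bouncing by $-\alpha A_j$. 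I would therefore carry (i)--(ii) by an induction over the sequence of spike events, maintaining as invariant a controlled form of the cone expansion of $\bv(t)-\bv^{\text{ideal}}(t)$ tied to which wall is being touched, and tuning $\tau$ and the polynomial $\alpha(\cdot)$ so that the invariant survives each drift phase and each bounce.
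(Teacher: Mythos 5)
Your reduction is exactly the paper's: write $\bv(t)-\bv^\text{ideal}(t)=\sum_{i\in\Gamma}a_iA_i$ with non-negative coordinates and show that every coordinate attached to a firing wall is at least $\alpha$, so that subtracting $\alpha A_j$ keeps the expansion in the cone (this is the paper's equation~\eqref{eq:ideal SNN fire} and Claim~\ref{claim:ideal SNN stay after fire coordinate}); your single-active-wall computation $a_j\geq\tau/\lambda_{\max}$ also matches its spirit. The genuine gap is the general multi-wall case, which is the entire content of the lemma and which you leave as a sketch. The identity $a_j=\langle\bv-\bw,\widehat{A}_j\rangle/\|\widehat{A}_j\|_2^2$ (writing $\bw=\bv^\text{ideal}(t)$ and $\widehat{A}_j=A_j-\Pi_{A_{\Gamma\setminus\{j\}}}A_j$) is correct, but it does not by itself yield a lower bound: expanding $A_j=\widehat{A}_j+\sum_{i\neq j}c_iA_i$ inside $A_j^\top(\bv-\bw)\geq\tau$ gives $\widehat{A}_j^\top(\bv-\bw)\geq\tau-\sum_{i\neq j}c_i\,A_i^\top(\bv-\bw)$, and the cross terms are not controlled by the feasibility inequalities $A_i^\top(\bv-\bw)\leq\tau$ alone --- the $c_i$ are only bounded by $\poly(1/\gamma(A))$ when the active columns are far from orthogonal, and the $A_i^\top(\bv-\bw)$ can be negative or of order $\tau$, so the right-hand side can be negative. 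What is missing is precisely the global geometric ingredient the paper supplies: first, an argument along the dynamics that $\|\bv(t)-\bv^\text{ideal}(t)\|_2$ does not grow when spikes fire (this is the invariant your last paragraph gestures at but never states or proves), and second, the two structural claims --- Claim~\ref{claim:furthest point in safe region}, that the farthest point of $S_{\bv^\text{ideal}(t)}$ from $\bv^\text{ideal}(t)$ is the point where all active constraints equal $\tau$, and Claim~\ref{claim:ideal SNN intersection is far}, that for $\alpha\leq\tau^2\gamma(A)^3$ the set $W_i\cap L_{\bv^\text{ideal}(t),i,\alpha}$ lies strictly farther away --- which together force the firing coordinate above $\alpha$. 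Your phrase ``playing the feasibility inequalities against the geometry of the annulus and the vertex-separation clause'' names no mechanism that replaces these claims.

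A secondary point: you are right to flag that a point of $S_{\bv^\text{ideal}(t)}$ can in principle touch a wall outside $\Gamma(\bv^\text{ideal}(t))$, and that the paper's~\eqref{eq:ideal SNN fire} silently assumes the firing set is contained in $\Gamma(\bv^\text{ideal}(t))$; this is a legitimate observation about a step the paper glosses over. But your proposed remedy --- an induction over spike events maintaining ``a controlled form of the cone expansion'' --- is not formulated, let alone carried out, and your argument for feasibility of the post-spike point (the slack of $\bv$ at a non-firing wall absorbs a perturbation of size $\alpha n\lambda_{\max}$) fails exactly when that slack is smaller than $\alpha n\lambda_{\max}$, i.e.\ just before that wall is reached; such events must instead be treated as consecutive firings, as the paper does in the proof of Lemma~\ref{lem:ideal SNN dynamics}. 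In short, the proposal correctly reproduces the skeleton of the paper's argument but does not contain its quantitative core.
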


\begin{proof}[Proof of Lemma~\ref{lem:ideal SNN unchaged}]
	First, note that for each $\bv\in S_{\bv^\text{ideal}(t)}$, by the property of dual polytope, there exists an unique $\bz\in\R_{\geq0}^{|\Gamma(\bv^\text{ideal}(t))|}$ such that $\bv=\bv^\text{ideal}(t)+A_{\Gamma(\bv^\text{ideal}(t))}\bz$ where $\bz$ can be thought of as the \textit{coordinates} of $\bv$ in $S_{\bv^\text{ideal}(t)}$. With this concept in mind, it is then sufficient to show that whenever neuron $i$ fires, $\bz_i>\alpha$. The reason is that
	\begin{align}
	\bv(t)-\alpha A\bs(t) &= \bv^\text{ideal}(t) + A_{\Gamma(\bv^\text{ideal}(t))}\bz - \sum_{i\in\Gamma(\bv(t))}\alpha A_i\nonumber\\
	&= \bv^\text{ideal}(t) + \sum_{i\in\Gamma(\bv^\text{ideal}(t))\backslash\Gamma(\bv(t))}\bz_iA_i + \sum_{i\in\Gamma(\bv(t))}(\bz_i-\alpha)A_i.\label{eq:ideal SNN fire}
	\end{align}
	As a result, if $\bz_i-\alpha>0$ for every $i\in\Gamma(\bv(t))$, then we have $\bv(t)-\alpha A\bs(t)\in S_{\bv^\text{ideal}(t)}$ because every new coordinates are still non-negative. See Figure~\ref{fig:coordinate} for an example.
	
	\begin{figure}[h]
		\centering
		\includegraphics[width=10cm]{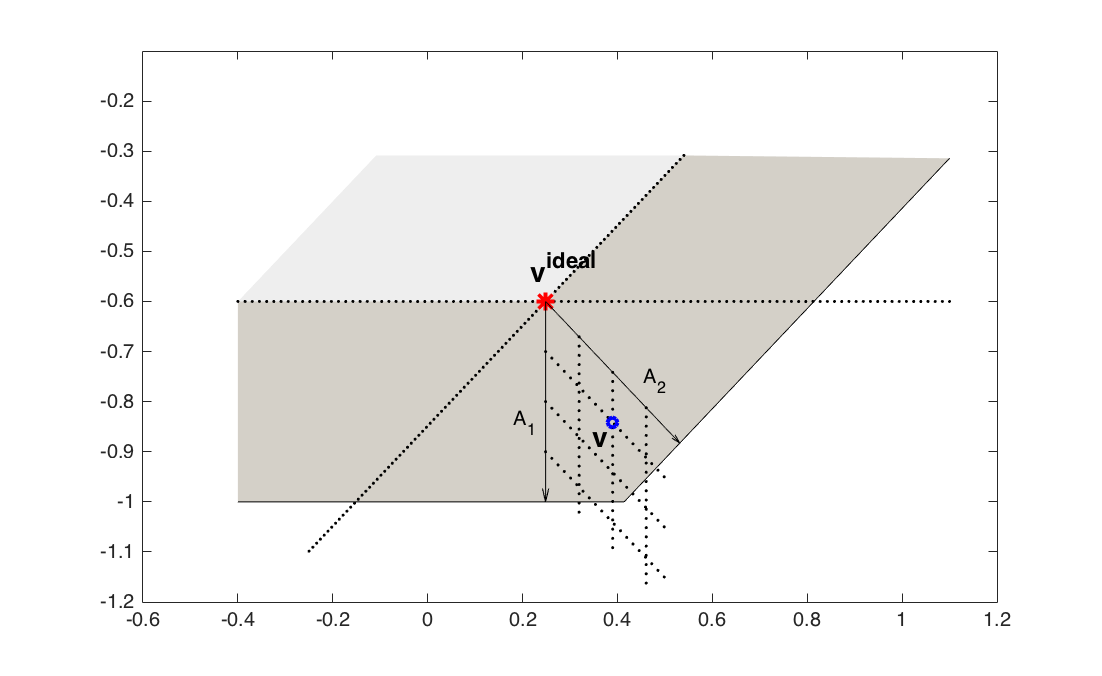}
		\caption{This is an example of \textit{coordinates} of  $S_{\bv^\text{ideal}}$ in $\R^2$ where $\tau=0.4$, $A_1=[0\ -1]^\top$ and $A_2=[\frac{1}{\sqrt{2}}\ -\frac{1}{\sqrt{2}}]^\top$. The light gray area is the ideal polytope and the dark gray area is the dual polytope. In this example, the dot lines are the \textit{level set} of each coordinates in $S_{\bv\text{ideal}}$. For instance, the $\bv$ in the figure has coordinate $\bz=[0.1\ 0.2]^\top$ and thus we have $\bv=\bv^\text{ideal}+A\bz$.}
		\label{fig:coordinate}
	\end{figure}
	\vspace{3mm}
	
	\begin{claim}\label{claim:ideal SNN stay after fire coordinate}
		There exists a polynomial $\alpha(\cdot)$ such that when $0<\alpha\leq\poly(\frac{\tau\cdot\gamma(A)}{n\cdot\lambda_{\max}})$ and $\bv(t)=\bv^\text{ideal}(t)+A_{\Gamma(\bv^\text{ideal}(t))}\bz\in S_{\bv^\text{ideal}(t)}$ for some $t\geq0$, if $i\in\Gamma(\bv(t))$, then $\bz_i>\alpha$.
	\end{claim}
	\begin{proof}[Proof of Claim~\ref{claim:ideal SNN stay after fire coordinate}]
		The proof consists of two steps. First, we are going to show that for any $t\geq0$, $\bv(t)$ is close to $\bv^\text{ideal}(t)$. Concretely, if $\alpha\leq\frac{\tau}{m}$, then $\|\bv(t)-\bv^\text{ideal}(t)\|_2\leq\tau\lambda_{\max}$. Second, we are going to show that once we pick $\alpha$ small enough, then for any $i\in\Gamma(\bv^\text{ideal}(t))$, the wall $W_i$ is far away from the $\alpha$-level set in $S_{\bv^\text{ideal}(t)}$. Thus, whenever neuron $i$ fires, $\bz_i>\alpha$.
		
		The first step is a key observation that the distance between $\bv(t)$ and $\bv^\text{ideal}(t)$ would not increase after the neurons fire spikes. The main reason is that neuron $i$ fires at time $t$ if and only if $A_i^\top\bv(t)>1$. As a result,
		\begin{align*}
		\|\left(\bv(t)-\alpha A\bs(t)\right)-\bv^\text{ideal}(t)\|_2^2 &= \|\bv(t)-\bv^\text{ideal}(t)\|_2^2 + \alpha^2\|A\bs(t)\|_2^2 - 2\alpha\left(A\bs(t)\right)^\top\left(\bv(t)-\bv^\text{ideal}(t)\right)\\
		& = \|\bv(t)-\bv^\text{ideal}(t)\|_2^2 + \alpha^2\|A\bs(t)\|_2^2 -2\alpha\sum_{i\in\Gamma(\bv(t))}A_i^\top\left(\bv(t)-\bv^\text{ideal}(t)\right)\\
		&\leq\|\bv(t)-\bv^\text{ideal}(t)\|_2^2 + \alpha^2|\Gamma(\bv(t))|^2 -2\alpha\tau|\Gamma(\bv(t))|.
		\end{align*}
		That is, if $\alpha\leq\frac{\tau}{m}$, then $\|\left(\bv(t)-\alpha A\bs(t)\right)-\bv^\text{ideal}(t)\|_2$ would not increase after some neurons fire. Furthermore, the longest distance between $\bv(t)$ and $\bv^\text{ideal}(t)$ would then be $\tau\lambda_{\max}$.
		
		The second step is rather complicated. Let us start with some definitions. Recall that for any $i\in[\pm n]$, the wall $i$ is defined as $W_i=\{\bv\in\R^m:\ A_i^\top\bv=1 \}$. Now, define the $\alpha$-level set of $i$ in $S_{\bv^\text{ideal}(t)}$ as
		$$
		L_{\bv^\text{ideal}(t),i,\alpha} = \{\bv\in S_{\bv^\text{ideal}(t)}:\ \bv=\bv^\text{ideal}(t)+A_{\Gamma(\bv^\text{ideal}(t))}\bz,\ \bz_i=\alpha\}. 
		$$
		That is, $L_{\bv^\text{ideal}(t),i,\alpha} $ consists of the set of points in $S_{\bv^\text{ideal}(t)}$ that has the $i^\text{th}$ coordinate to be $\alpha$.
		
		\begin{claim}[furtherest point in $S_{\bv^\text{ideal}}$]\label{claim:furthest point in safe region}
			For any $t\geq0$, let $\bv_{\Gamma(\bv^\text{ideal}(t))}$ be the unique point $\bv\in S_{\bv^\text{ideal}(t)}$ such that for any $i\in\Gamma(\bv^\text{ideal}(t))$, $A_i^\top\left(\bv-\bv^\text{ideal}(t)\right)=\tau$. Then, we have $\|\bv_{\Gamma(\bv^\text{ideal}(t))}-\bv^\text{ideal}(t)\|_2=\max_{\bv\in S_{\bv^\text{ideal}(t)}}\|\bv-\bv^\text{ideal}(t)\|_2$.
		\end{claim}
		\begin{proof}[Proof of Claim~\ref{claim:furthest point in safe region}]
			Let us prove by contradiction. Suppose $\bv^*\in S_{\bv^\text{ideal}(t)}$ such that $\|\bv_{\Gamma(\bv^\text{ideal}(t))}-\bv^\text{ideal}(t)\|_2<\|\bv^*-\bv^\text{ideal}(t)\|_2$. To simplify the notations, let $\bv_{\Gamma}=\bv_{\Gamma(\bv^\text{ideal}(t))}-\bv^\text{ideal}(t)$ and $\bv=\bv^*-\bv^\text{ideal}(t)$.
			
			By definition, we have $A_i^\top\bv_\Gamma=\tau$ for all $i\in\Gamma(\bv^\text{ideal}(t))$ and $\bv=A_{\Gamma(\bv^\text{ideal}(t))}\bz_\Gamma$ for some $\bz_\Gamma\in\R_{>0}$. On the other hand, we also have $0\leq A_i^\top\bv\leq\tau$ for all $i\in\Gamma(\bv^\text{ideal}(t)$.
			
			Now, look at the quantity $\bv_{\Gamma}^\top\left(\bv-\bv_\Gamma\right)$. Note that since $\|\bv\|_2>\|\bv_\Gamma\|_2$, we have $\bv_{\Gamma}^\top\left(\bv-\bv_\Gamma\right)>0$. Also, for any $i\in\Gamma(\bv^\text{ideal}(t))$, we have $A_i^\top\left(\bv-\bv_\Gamma\right)\leq0$. Using the fact that $\bv=A_{\Gamma(\bv^\text{ideal}(t))}\bz_\Gamma$ for some $\bz_\Gamma\in\R_{>0}$, we have
			\begin{align*}
			0 < \bv_{\Gamma}^\top\left(\bv-\bv_\Gamma\right)&=\bz_\Gamma^\top A_{\Gamma(\bv^\text{ideal}(t))}^\top\left(\bv-\bv_\Gamma\right)\\
			&= \bz_\Gamma^\top\bu\leq0,
			\end{align*}
			where $\bu=A_{\Gamma(\bv^\text{ideal}(t))}^\top\left(\bv-\bv_\Gamma\right)\in\R_{\leq0}^{|\Gamma(\bv^\text{ideal}(t))|}$. That is, we reach a contradiction and thus $\|\bv\|_2\leq\|\bv_\Gamma\|_2$ and we conclude that $\bv_{\Gamma(\bv^\text{ideal}(t))}$ is the furtherest point from $\bv^\text{ideal}(t)$ in $S_{\bv^\text{ideal}(t)}$.
		\end{proof}
		\begin{claim}[intersection of wall and $\alpha$-level set is far]\label{claim:ideal SNN intersection is far}
			When $0<\alpha\leq\tau^2\cdot\gamma(A)^3$, for any $t\geq0$ and $i\in\Gamma(\bv^\text{ideal}(t))$, we have
			$$
			\min_{\bv:\ \bv\in W_i\cap L_{\bv^\text{ideal}(t),i,\alpha}}\|\bv-\bv^\text{ideal}(t)\|_2>\|\bv_{\Gamma(\bv^\text{ideal}(t))}-\bv^\text{ideal}(t)\|_2.
			$$
		\end{claim}
		\begin{proof}[Proof of Claim~\ref{claim:ideal SNN intersection is far}]
			First, let us write $\bv_{\Gamma(\bv^\text{ideal}(t))}=\bv^\text{ideal}(t)+\sum_{i\in\Gamma(\bv^\text{ideal}(t))}\bz_iA_i$ where $\bz_i\geq\tau\cdot\gamma(A)$ by Definition~\ref{def:nice}. Furthermore, for any $i\in\Gamma(\bv^\text{ideal}(t))$, we have
			\begin{equation*}
			\text{dist}\left(\bv_{\Gamma(\bv^\text{ideal}(t))},\text{span}(A_{\Gamma(\bv^\text{ideal}(t))\backslash\{i\}})\right)\geq|\bz_i|\cdot\text{dist}\left(A_i,\text{span}(A_{\Gamma(\bv^\text{ideal}(t))\backslash\{i\}})\right)\geq\tau\cdot\gamma(A)^2,
			\end{equation*}
			where the last inequality follows Definition~\ref{def:nice}. Namely, if we pick $0<\alpha<\tau^2\cdot\gamma(A)^3$, then
			\begin{equation*}
			\text{dist}\left(\bv_{\Gamma(\bv^\text{ideal}(t))},L_{\bv^\text{ideal}(t),i,\alpha}\right)>0
			\end{equation*}
			and $\bv_{\Gamma(\bv^\text{ideal}(t))}\in\text{Cone}(A_i,L_{\bv^\text{ideal}(t),i,\alpha})$ because $\bz_i\geq\gamma(A)$. Finally, observe that for any $\bv\in W_i\cap L_{\bv^\text{ideal}(t),i,\alpha}$, we have $\bv_{\Gamma(\bv^\text{ideal}(t))}^\top\left(\bv-\bv_{\Gamma(\bv^\text{ideal}(t))}\right)>0$. This completes the proof of Claim~\ref{claim:ideal SNN intersection is far}.
		\end{proof}
		Combine Claim~\ref{claim:furthest point in safe region} and Claim~\ref{claim:ideal SNN intersection is far}, we know that when neuron $i$ fires, the corresponding coordinate $\bz_i$ will be at least $\alpha$. This completes the proof of Claim~\ref{claim:ideal SNN stay after fire coordinate}.
	\end{proof}
	Now, Lemma~\ref{lem:ideal SNN unchaged} follows from Claim~\ref{claim:ideal SNN stay after fire coordinate} and equation~\eqref{eq:ideal SNN fire}.
\end{proof}

\subsection{Strict convergence of ideal SNN and auxiliary SNNs}\label{sec:strict}
In this subsection, the goal is to characterize the dynamics of both ideal and auxiliary SNN. Before defining auxiliary SNN, let us first see the following lemma about the dynamics of ideal SNN. 

\begin{lemma}[dynamics of ideal SNN]\label{lem:ideal SNN dynamics}
	If $A$ is nice, then for any $t\geq0$, we have
	$$\bv^\text{ideal}(t+dt)=\bv^\text{ideal}(t) + \left(\bb-\Pi_{A_{\Gamma(\bv^\text{ideal}(t))}}\bb\right)dt.$$
\end{lemma}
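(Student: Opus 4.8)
The plan is to decompose one step of the dual SNN dynamics~\eqref{eq:l1 SNNdynamics-dual} into its spiking part $-\alpha A\bs(t)$ and its charging part $\bb\,dt$, and to track the ideal coupling through each in turn. For the spiking part, Lemma~\ref{lem:ideal SNN unchaged} already tells us that $\bv(t)-\alpha A\bs(t)$ stays in the same cell $S_{\bv^\text{ideal}(t)}$, so its ideal coupling is still $\bv^\text{ideal}(t)$. Hence it suffices to understand how moving an arbitrary point $\bw\in S_{\bv^\text{ideal}(t)}$ by $\bb\,dt$ changes its ideal coupling, and to prove that the change is exactly $(\bb-\Pi_{A_{\Gamma}}\bb)\,dt$, where $\Gamma:=\Gamma(\bv^\text{ideal}(t))$ denotes the active walls.

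I would then set up the orthogonal decomposition $\bb=\bb_\parallel+\bb_\perp$, with $\bb_\parallel:=\Pi_{A_{\Gamma}}\bb$ the projection of $\bb$ onto $\mathrm{span}\{A_j:j\in\Gamma\}$ and $\bb_\perp:=\bb-\bb_\parallel$; since $A$ is nice, hence non-degenerate, the columns $\{A_j:j\in\Gamma\}$ are linearly independent, so $\bb_\parallel=A_{\Gamma}\bc$ for a unique vector $\bc$. The candidate new ideal point is $\bv^\text{new}:=\bv^\text{ideal}(t)+\bb_\perp\,dt$, and I would verify three things. (i) $\bv^\text{new}$ touches exactly the walls in $\Gamma$: for $i\in\Gamma$, $A_i^\top\bv^\text{new}=A_i^\top\bv^\text{ideal}(t)+A_i^\top\bb_\perp\,dt=1-\tau$ since $A_i\perp\bb_\perp$, whereas for $j\notin\Gamma$ the strict inequality $A_j^\top\bv^\text{ideal}(t)<1-\tau$ together with the finiteness of the wall set gives $A_j^\top\bv^\text{new}<1-\tau$ for infinitesimal $dt>0$; so $\bv^\text{new}\in\mathcal P_{A,1-\tau}$ and $\Gamma(\bv^\text{new})=\Gamma$. (ii) Writing $\bw=\bv^\text{ideal}(t)+A_{\Gamma}\bz'$ with $\bz'\geq0$ — which is exactly what equation~\eqref{eq:ideal SNN fire} and Claim~\ref{claim:ideal SNN stay after fire coordinate} provide for $\bw=\bv(t)-\alpha A\bs(t)$ — one computes
\[
\bw+\bb\,dt=\bv^\text{new}+A_{\Gamma}\bigl(\bz'+\bc\,dt\bigr),
\]
which lies in $\bv^\text{new}+\mathcal C_{A,\Gamma}$ as soon as the updated cone coordinates $\bz'+\bc\,dt$ are non-negative, and hence (as it also stays in $\mathcal P_{A,1}$) in the cell $S_{\bv^\text{new}}$. (iii) By the uniqueness of the partition $\{S_{\bv^\text{ideal}}\}$ from Lemma~\ref{lem:ideal partition}, (i) and (ii) force $\bv^\text{ideal}(t+dt)=\bv^\text{new}=\bv^\text{ideal}(t)+(\bb-\Pi_{A_{\Gamma}}\bb)\,dt$, as claimed.

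The step I expect to be the main obstacle is the non-negativity of $\bz'+\bc\,dt$ in (ii): although $\bz'\geq0$, the coefficient vector $\bc$ of the absorbed component $\bb_\parallel$ need not be non-negative, so a coordinate $i\in\Gamma$ with $\bz'_i=0$ and $\bc_i<0$ would push $\bw+\bb\,dt$ out of the cone $\mathcal C_{A,\Gamma}$. I would deal with this by observing that such an $i$ corresponds to a wall of $\bv^\text{ideal}(t)$ that the dual SNN is not pressed against and that the charging is pulling away from, so $\bv^\text{ideal}$ simultaneously slides off $W_i$; this happens only at the instants (forming a measure-zero set in time) where the active-wall set $\Gamma(\bv^\text{ideal}(t))$ changes, and away from those instants one has either $\bz'_i>0$ (so $\bz'_i+\bc_i\,dt>0$ for small $dt$) or $\bc_i\geq0$, and the argument goes through unchanged. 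A secondary, mostly bookkeeping, point is that the lemma is really an identity for the right derivative of the continuous-time trajectory, so the ``infinitesimal $dt$'' clauses above — local constancy of the active-wall set, and $\bw+\bb\,dt$ remaining in $\mathcal P_{A,1}$ up to the next spike — should be read as one-sided limits; I expect these to be routine once the cone-coordinate analysis is pinned down.
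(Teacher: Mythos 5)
Your proposal follows essentially the same route as the paper's proof: spikes are absorbed via Lemma~\ref{lem:ideal SNN unchaged}, and the charging step is split into $\Pi_{A_{\Gamma}}\bb\,dt$ (which moves the dual point within the cell $S_{\bv^\text{ideal}(t)}$ by shifting its cone coordinates) plus the orthogonal remainder (which translates the ideal point), exactly the paper's two-case argument. The non-negativity of the updated cone coordinates that you single out as the main obstacle is precisely the step the paper asserts without justification (``observe that $\bv(t)+\Pi_{A_{\Gamma}}\bb\,dt\in S_{\bv^\text{ideal}(t)}$''), and your resolution --- reading the identity as a right-derivative statement valid away from the instants where the active set $\Gamma(\bv^\text{ideal}(t))$ changes --- is consistent with how the paper itself treats indices leaving the active set (e.g., in Lemma~\ref{lemma:idealalgorithm-primal-nondecreasing}).
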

\begin{proof}[Proof of Lemma~\ref{lem:ideal SNN dynamics}]
	We consider two cases: (i) there is no neuron fires any spike and (ii) there is a neuron fires a spike.
	
	\textbf{Case (i)}: By Definition~\ref{def:ideal coupling}, $\bv(t)=\bv^\text{ideal}+A_{\Gamma(\bv^\text{ideal}(t))}\bz$ for some $\bz\geq0$. Also, rewrite the updates $\bb$ as 
	$$\bb=\left(\bb-\Pi_{A_{\Gamma(\bv^\text{ideal}(t))}}\bb\right)+\Pi_{A_{\Gamma(\bv^\text{ideal}(t))}}\bb.$$
	First, $A_i^\top\left(\bb-\Pi_{A_{\Gamma(\bv^\text{ideal}(t))}}\bb\right)=0$ for each $i\in\Gamma(\bv^\text{ideal}(t))$. Next, since there is no neuron fires at time $t$, observe that $\bv(t)+\Pi_{A_{\Gamma(\bv^\text{ideal}(t))}}\bb\in S_{\bv^\text{ideal}(t)}$. Finally, since $\bb-\Pi_{A_{\Gamma(\bv^\text{ideal}(t))}}\bb$ is orthogonal to the subspace spanned by the active neurons, we then have $\bv(t)+\bb dt\in S_{\bv^\text{ideal}(t)+(\bb-\Pi_{A_{\Gamma(\bv^\text{ideal}(t))}}\bb)dt}$.
	
	\textbf{Case (ii)}: To handle spikes, the idea is to focus on the spike term first, and once $\bv(t)$ goes back to the interior of the dual polytope, then it becomes case (i). Here, we use an assumption that if there are some neurons fire at time $t$ and they trigger consecutive firing, we add the external charging \textit{after} the consecutive firing. As a result, it suffices to show that $\bv(t)-\alpha A\bs(t)\in S_{\bv^\text{ideal}(t)}$, which immediately follows from Lemma~\ref{lem:ideal SNN unchaged}.
	
	We conclude that for any $t\geq0$, $\bv^\text{ideal}(t+dt)=\bv^\text{ideal}(t) + \left(\bb-\Pi_{A_{\Gamma(\bv^\text{ideal}(t))}}\bb\right)dt$.
\end{proof}

From Lemma~\ref{lem:ideal SNN dynamics}, one can see that the improvement of ideal SNN is not proportional to the residual error when the $\Pi_{A_{\Gamma(\bv^\text{ideal}(t))}}\neq A\bx^\text{ideal}(t)$. As a result, we have to design a bunch of \textit{auxiliary SNN} to make sure that at least one of them has improvement proportional to the residual error. The auxiliary SNNs $\{\bv^\text{auxiliary}_d(t)\}_{d\in[m-1]}$ is defined as follows.
\begin{definition}[auxiliary SNNs]\label{def:auxiliary}
	For each $t\geq0$, and $d\in[m-1]$, define $\bv^\text{auxiliary}(0)=\mathbf{0}$ and
	\[
	\bv^\text{auxiliary}_d(t+dt) = \left\{
	\begin{array}{ll}
	\bv^\text{auxiliary}_d(t)+\left(\bb-A\bx^\text{ideal}(t)\right)dt&\text{, if }|\Gamma^*(\bv^\text{ideal}(t+dt))|=d\\&\text{ and } |\Gamma^*(\bv^\text{ideal}(t))|=d,\\
	\bv^\text{super}(t+dt)&\text{, if }|\Gamma^*(\bv^\text{ideal}(t+dt))|=d\\&\text{ and }|\Gamma^*(\bv^\text{ideal}(t))|\neq d,\\
	\bv_d^\text{auxiliary}(t)&\text{, else}.
	\end{array}
	\right.
	\]
\end{definition}

The auxiliary SNNs have the following important property that is crucial in the proof of the Lemma~\ref{lem:strict improvement} which gives the strict improvement guarantee.

\begin{lemma}[auxiliary SNNs jump]\label{lem:auxiliary SNN jump}
	Suppose $A$ is nice and $\tau=O(\frac{\gamma(A)}{n^2\cdot\lambda_{\max}^2})$. For any $t>0$ and $d\in[m-1]$, if $|\Gamma^*(\by^\text{ideal}(t))|\neq|\Gamma^*(\bv^\text{ideal}(t+dt))|=d$, then $\bb^\top\left(\bv^\text{auxiliary}_d(t+dt)-\bv^\text{auxiliary}_d(t)\right)>0$.
\end{lemma}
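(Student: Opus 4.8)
The starting point is to read off from Definition~\ref{def:auxiliary} what the two vectors in the claimed inequality are. Since the hypothesis is exactly that $|\Gamma^*(\bv^\text{ideal}(t))|\ne d$ while $|\Gamma^*(\bv^\text{ideal}(t+dt))|=d$, the update falls into the second branch, so $\bv^\text{auxiliary}_d(t+dt)=\bv^\text{super}(t+dt)$. Let $t_1$ be the largest time $\le t$ with $|\Gamma^*(\bv^\text{ideal}(t_1))|=d$ (if no such $t_1$ exists then $\bv^\text{auxiliary}_d(t)=\mathbf 0$, and the claim reduces to $\bb^\top\bv^\text{super}(t+dt)>0$, an easy base case handled by the closeness estimate below together with monotonicity of $\bb^\top\bv^\text{ideal}$). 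For every $s\in(t_1,t]$ we are in the third ("else") branch, so $\bv^\text{auxiliary}_d$ is frozen and $\bv^\text{auxiliary}_d(t)=\bv^\text{auxiliary}_d(t_1)$. Hence it suffices to prove $\bb^\top\bv^\text{super}(t+dt)>\bb^\top\bv^\text{auxiliary}_d(t_1)$.

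\textbf{Two quantitative estimates.} I would then prove: (i) a \emph{closeness estimate} $|\bb^\top\bv^\text{super}(s)-\bb^\top\bv^\text{ideal}(s)|\le\varepsilon$ for all $s$, with $\varepsilon=O(\tau\cdot\poly(n,\lambda_{\max}))\cdot\|\bb\|_2$: indeed $\bv^\text{super}(s)$ differs from $\bv(s)$ only by a vector in $\mathrm{span}(A_{\Gamma^*(\bv^\text{ideal}(s))})$, and $\|\bv(s)-\bv^\text{ideal}(s)\|_2\le\tau\lambda_{\max}$ by (the proof of) Claim~\ref{claim:ideal SNN stay after fire coordinate}; and (ii) a \emph{tracking estimate} $|\bb^\top\bv^\text{auxiliary}_d(t_1)-\bb^\top\bv^\text{super}(t_1)|\le E_1$ with $E_1=O(\tau\cdot\poly(n,\lambda_{\max}))\cdot\|\bb\|_2$ (plus a term linear in $\alpha$ that we send to $0$). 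For (ii), note that on the $d$-phase $[t_0,t_1]$ ending at $t_1$ (with $t_0$ its start, where $\bv^\text{auxiliary}_d(t_0)=\bv^\text{super}(t_0)$) the auxiliary SNN moves by $(\bb-A\bx^\text{ideal}(s))\,ds=(\bb-\Pi_{A_{\Gamma^*(\bv^\text{ideal}(s))}}\bb)\,ds$, which is precisely the projected drift that the external charging induces on $\bv^\text{super}$; the discrepancies are the spikes of super‑active neurons (which by Lemma~\ref{lem:ideal SNN unchaged} and the choice of $\alpha$ move $\bv^\text{super}$ by at most an $\alpha$‑order amount — a super‑active neuron's spike leaves $\bv^\text{super}$ unchanged) and the finitely many instants inside the phase where the \emph{set} $\Gamma^*(\bv^\text{ideal})$ changes while its size stays $d$, each producing a jump of $\bv^\text{super}$ of size $O(\tau\cdot\poly)$ but none of $\bv^\text{auxiliary}_d$.

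\textbf{The crux: strict progress of $\bb^\top\bv^\text{ideal}$ between reactivations.} Combining the two estimates, it remains to show that $\bb^\top\bv^\text{super}$ increases by more than $E_1+\varepsilon$ over $(t_1,t+dt]$, and for this I would pass to $\bv^\text{ideal}$. By Lemma~\ref{lem:ideal SNN dynamics}, $\tfrac{d}{ds}\bb^\top\bv^\text{ideal}(s)=\|\bb-\Pi_{A_{\Gamma(\bv^\text{ideal}(s))}}\bb\|_2^2\ge0$, so $\bb^\top\bv^\text{ideal}$ is nondecreasing and the ideal trajectory is piecewise linear, breaking only when $\bv^\text{ideal}$ reaches a new wall. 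Between $t_1$ (where $|\Gamma^*|=d$) and $t+dt$ (where $|\Gamma^*|=d$ again, after passing through a value $\ne d$), the set $\Gamma(\bv^\text{ideal})$ must change at least twice, so $\bv^\text{ideal}$ traverses at least one complete linear segment; the niceness of $A$ (vertex separation in $\mathcal P_{A,1}$ and $\gamma$‑non‑degeneracy, as used in Claims~\ref{claim:furthest point in safe region}–\ref{claim:ideal SNN intersection is far}) lower‑bounds the length of such a segment by some $\delta=\delta(\gamma(A),\tau)>0$. On that segment the residual $\|\bb-\Pi_{A_{\Gamma(\bv^\text{ideal})}}\bb\|_2$ is nonzero (otherwise $\bv^\text{ideal}$ would be stuck and $\Gamma(\bv^\text{ideal})$ could not change any further), and bounding it below by a suitable function of $\gamma(A),n,\lambda_{\max}$ times $\|\bb\|_2$ yields a gain in $\bb^\top\bv^\text{ideal}$ that, for $\tau=O(\gamma(A)/(n^2\lambda_{\max}^2))$ and $\alpha$ small enough (exactly the hypothesis), strictly dominates $E_1+\varepsilon$. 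This gives the required strict inequality.

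\textbf{Expected main obstacle.} The delicate point is precisely the residual lower bound in the last step: a priori $\bb-\Pi_{A_{\Gamma(\bv^\text{ideal})}}\bb$ could be tiny along the portion of the trajectory between two visits to super‑active size $d$, in which case the $\bb^\top$‑gain would be too small to beat the $O(\tau\cdot\poly)\cdot\|\bb\|_2$ error terms. Ruling this out — getting the polynomial dependencies in $\gamma(A),n,\lambda_{\max}$ to line up with the allowed $\tau$ — requires using the niceness of $A$ to control which index sets $\Gamma(\bv^\text{ideal})$ can occur before the ideal SNN reaches its (unique) optimum, and to bound the corresponding distances of $\bb$ from $\mathrm{span}(A_{\Gamma})$ from below. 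A secondary nuisance is the bookkeeping in the tracking estimate (ii), since inside a $d$-phase the set $\Gamma^*(\bv^\text{ideal})$, not just its size, may change several times and causes $\bv^\text{super}$ and $\bv^\text{auxiliary}_d$ to drift apart; bounding the total drift uses the closeness estimate at each such change and the fact that there are only polynomially many of them.
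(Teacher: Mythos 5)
Your proposal follows essentially the same route as the paper's proof: both compare the two auxiliary values to the ideal trajectory (the new value is $\bv^\text{super}(t+dt)$, within $O(\tau\cdot n\cdot\lambda_{\max})$ of $\bv^\text{ideal}$; the old value is tied to the super/ideal SNN at an earlier time whose active set differs) and then invoke a strict gain $\Omega(\|\bb\|_2\cdot\frac{\gamma(A)}{n\cdot\lambda_{\max}})$ of $\bb^\top\bv^\text{ideal}$ between distinct active-set configurations, which dominates the $O(\|\bb\|_2\cdot\tau\cdot n\cdot\lambda_{\max})$ errors once $\tau=O(\frac{\gamma(A)}{n^2\lambda_{\max}^2})$. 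The ``main obstacle'' you flag---lower-bounding that gain using the niceness of $A$---is exactly the step the paper dispatches in one line (citing Lemma~\ref{lem:ideal SNN dynamics} together with niceness), so your plan matches the paper's argument, and if anything it treats the within-phase drift of $\bv^\text{auxiliary}_d$ more carefully than the paper's bare assertion that $\bv^\text{auxiliary}_d(t)=\bv^\text{super}(t')$.
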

\begin{proof}[Proof of Lemma~\ref{lem:auxiliary SNN jump}]
	By the definition of auxiliary SNNs, we have three observations. First, $\|\bv^\text{auxiliary}_d(t+dt)-\bv^\text{ideal}(t)\|_2=\|\bv^\text{super}(t+dt)-\bv^\text{ideal}(t)\|_2=O(\tau\cdot n\cdot\lambda_{\max})$. Second, there exists $0\leq t'<t$ such that $\bv^\text{auxiliary}_d(t)=\bv^\text{super}(t')$ and $\Gamma(\bv^\text{ideal}(t'))\neq\Gamma(\bv^\text{ideal}(t))$. That is, we also have $\|\bv^\text{auxiliary}_d(t)-\bv^\text{ideal}(t')\|_2=\|\bv^\text{super}(t')-\bv^\text{ideal}(t)\|_2=O(\tau\cdot n\cdot\lambda_{\max})$. Finally, since $\Gamma(\bv^\text{ideal}(t'))\neq\Gamma(\bv^\text{ideal}(t))$, by Lemma~\ref{lem:ideal SNN dynamics}, we have $\bb^\top\left(\bv^\text{ideal}(t)-\bv^\text{ideal}(t')\right)=\Omega(\|\bb\|_2\cdot\frac{\gamma(A)}{n\cdot\lambda_{\max}})$. Combine the three we have 
	\begin{align*}
	\bb^\top\left(\bv^\text{auxiliary}_d(t+dt)-\bv^\text{auxiliary}_d(t)\right)&\geq\bb^\top\left(\bv^\text{ideal}(t)-\bv^\text{ideal}(t')\right) - O(\|\bb\|_2\cdot\tau\cdot n\cdot\lambda_{\max})\\
	&\geq\Omega(\|\bb\|_2\cdot\frac{\lambda(A)}{n\cdot\lambda_{\max}}) - O(\|\bb\|_2\cdot\tau\cdot n\cdot\lambda_{\max})>0,
	\end{align*}
	where the last inequality holds when we pick $\tau=O(\frac{\gamma(A)}{n^2\lambda_{\max}^2})$.
\end{proof}

Now, we are able to prove the main lemma about identifying a potential function that is strictly improving as long as $\bx^\text{ideal}(t)$ is not the optimal solution for $\ell_1$ minimization problem.

\begin{lemma}[strict improvement]\label{lem:strict improvement}
	For any $t>0$, we have
	\begin{equation*}
	\frac{d}{dt}\bb^\top\left(\bv^\text{ideal}(t)+\sum_{d\in[m-1]}\bv^\text{auxiliary}_d(t)\right)\geq\bb^\top A\bx^\text{ideal}(t).
	\end{equation*}
\end{lemma}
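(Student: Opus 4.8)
The plan is to differentiate the potential $\Phi(t):=\bb^\top\bigl(\bv^\text{ideal}(t)+\sum_{d\in[m-1]}\bv^\text{auxiliary}_d(t)\bigr)$ term by term, substitute the explicit dynamics we already have for each piece, and reduce the resulting inequality to a statement about orthogonal projections onto the nested subspaces $\mathrm{span}(A_{\Gamma^*(\bv^\text{ideal}(t))})\subseteq\mathrm{span}(A_{\Gamma(\bv^\text{ideal}(t))})$. Throughout abbreviate $\Gamma=\Gamma(\bv^\text{ideal}(t))$ and $\Gamma^*=\Gamma^*(\bv^\text{ideal}(t))$. For the ideal part, Lemma~\ref{lem:ideal SNN dynamics} gives $\tfrac{d}{dt}\bv^\text{ideal}(t)=\bb-\Pi_{A_\Gamma}\bb$, so since $\Pi_{A_\Gamma}$ is an orthogonal projection, $\tfrac{d}{dt}\bb^\top\bv^\text{ideal}(t)=\bb^\top(\bb-\Pi_{A_\Gamma}\bb)=\|\bb-\Pi_{A_\Gamma}\bb\|_2^2\ge 0$.

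For the auxiliary part, fix $t$ and set $d_0=|\Gamma^*|$. At all but the isolated instants where $|\Gamma^*(\bv^\text{ideal}(\cdot))|$ changes value, Definition~\ref{def:auxiliary} puts exactly the index $d_0$ in its first (continuous) case and every other index in its third (frozen) case, so $\tfrac{d}{dt}\sum_{d\in[m-1]}\bb^\top\bv^\text{auxiliary}_d(t)$ equals $\bb^\top(\bb-A\bx^\text{ideal}(t))$ when $d_0\in[m-1]$ and equals $0$ when $d_0\in\{0,m\}$. At the instants where $|\Gamma^*|$ jumps to a value $d$, Lemma~\ref{lem:auxiliary SNN jump} supplies a strictly positive impulse $\bb^\top(\bv^\text{auxiliary}_d(t+dt)-\bv^\text{auxiliary}_d(t))>0$, which only helps the lower bound and may be discarded. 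The boundary values of $d_0$ are handled on the spot: $d_0=0$ gives $\bx^\text{ideal}(t)=\mathbf 0$, reducing the claim to $\tfrac{d}{dt}\bb^\top\bv^\text{ideal}(t)\ge 0$; $d_0=m$ gives $A_{\Gamma^*}$ spanning $\R^m$, hence $A\bx^\text{ideal}(t)=\bb$, \ie $\bx^\text{ideal}(t)$ is already an optimal solution, which is the case excluded by the standing assumption preceding the lemma.

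The remaining ingredient is the identity $A\bx^\text{ideal}(t)=\Pi_{A_{\Gamma^*}}\bb$, which follows from the optimality conditions of the constrained least squares problem of Definition~\ref{def:ideal solution}: the sign constraints are inactive on the support $\Gamma^*$ of $\bx^\text{ideal}(t)$, so stationarity forces $A_i^\top(\bb-A\bx^\text{ideal}(t))=0$ for all $i\in\Gamma^*$, and since $A\bx^\text{ideal}(t)\in\mathrm{span}(A_{\Gamma^*})$ this says $\bb-A\bx^\text{ideal}(t)\perp\mathrm{span}(A_{\Gamma^*})$; in particular $\bb^\top A\bx^\text{ideal}(t)=\|\Pi_{A_{\Gamma^*}}\bb\|_2^2$. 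Assembling everything, at a generic $t$ with $d_0\in[m-1]$ we obtain $\tfrac{d}{dt}\Phi(t)=\|\bb-\Pi_{A_\Gamma}\bb\|_2^2+\|\bb-\Pi_{A_{\Gamma^*}}\bb\|_2^2$, so the lemma reduces to the projection inequality
\[
\|\bb-\Pi_{A_\Gamma}\bb\|_2^2+\|\bb-\Pi_{A_{\Gamma^*}}\bb\|_2^2\ \ge\ \|\Pi_{A_{\Gamma^*}}\bb\|_2^2 .
\]

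I expect this last inequality to be the crux. The crude estimates alone — monotonicity of projections under $\Gamma^*\subseteq\Gamma$, and $\|\Pi\bb\|_2\le\|\bb\|_2$ — are not enough (they are nearly tight precisely when $\bb$ lies almost in $\mathrm{span}(A_{\Gamma^*})$), so the argument must bring in the finer structure: the KKT reduced-cost conditions at $\bx^\text{ideal}(t)$, which pin down the sign of $A_i^\top(\bb-A\bx^\text{ideal}(t))$ for the walls $i\in\Gamma\setminus\Gamma^*$, together with the niceness of $A$ from Definition~\ref{def:nice}, which keeps the extra columns $A_i$, $i\in\Gamma\setminus\Gamma^*$, at a bounded angle from $\mathrm{span}(A_{\Gamma^*})$. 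Concretely, writing $\br:=\bb-A\bx^\text{ideal}(t)=\bb-\Pi_{A_{\Gamma^*}}\bb$ and $\bb-\Pi_{A_\Gamma}\bb=\br-\Pi_{A_\Gamma}\br$, it comes down to bounding $\|\Pi_{A_\Gamma}\br\|_2$ in terms of $\|\br\|_2$ and the niceness parameter; once that is established, the lemma follows after adding back the non-negative pieces (the ideal term in the $d_0=0$ case and the jump impulses) that were discarded above.
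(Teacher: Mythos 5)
Your derivative computation is correct and in fact mirrors the paper's own case analysis: the ideal term contributes $\|\bb-\Pi_{A_{\Gamma}}\bb\|_2^2\ge 0$ via Lemma~\ref{lem:ideal SNN dynamics}, the single active auxiliary index $d_0=|\Gamma^*|$ contributes $\bb^\top(\bb-A\bx^\text{ideal}(t))$ via the first case of Definition~\ref{def:auxiliary}, the jumps of Lemma~\ref{lem:auxiliary SNN jump} are positive and can be discarded, the other auxiliaries are frozen, and your identity $A\bx^\text{ideal}(t)=\Pi_{A_{\Gamma^*}}\bb$ is exactly the orthogonality fact behind Lemma~\ref{lem:ideal solution properties}. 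The genuine gap is the final step you defer to ``finer structure'': the inequality $\|\bb-\Pi_{A_\Gamma}\bb\|_2^2+\|\bb-\Pi_{A_{\Gamma^*}}\bb\|_2^2\ge\|\Pi_{A_{\Gamma^*}}\bb\|_2^2$ is not merely hard, it is false in general. As the dynamics converges, $\bb$ approaches $\mathrm{span}(A_{\Gamma^*})$, so the left-hand side tends to $0$ while the right-hand side tends to $\|\bb\|_2^2$; the same failure occurs whenever $|\Gamma^*|=m$ (all terms are then frozen or stationary, so the derivative is $0$, yet $\bb^\top A\bx^\text{ideal}(t)=\|\bb\|_2^2>0$), and the lemma statement contains no formal hypothesis excluding optimality, so your appeal to a ``standing assumption'' is not available. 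No KKT sign information or niceness of $A$ can rescue the inequality, because it fails precisely at the configurations the algorithm is designed to reach.

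The resolution is that the paper never attempts the bridge to $\bb^\top A\bx^\text{ideal}(t)$ at all: its proof of Lemma~\ref{lem:strict improvement} only shows, case by case on $|\Gamma^*(\bv^\text{ideal}(t))|$, that at least one of the $m$ terms has rate at least $\bb^\top(\bb-A\bx^\text{ideal}(t))$ while none decreases, i.e.\ what is actually established (and what you have already established, with the slightly sharper two-term bound) is $\frac{d}{dt}\bb^\top\bigl(\bv^\text{ideal}(t)+\sum_{d\in[m-1]}\bv^\text{auxiliary}_d(t)\bigr)\ge\bb^\top(\bb-A\bx^\text{ideal}(t))=\|\bb-A\bx^\text{ideal}(t)\|_2^2$. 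That residual form is also the one consumed downstream: the contradiction argument in Lemma~\ref{lemma:idealalgorithm-l2bound} lower-bounds the growth of the potential under the hypothesis that $\|\bb-A\bx^\text{ideal}(s)\|_2$ stays large, which would be useless with $\bb^\top A\bx^\text{ideal}(t)$ on the right (that quantity can be $0$ at early times). So the printed right-hand side of the lemma is best read as a misprint for $\bb^\top(\bb-A\bx^\text{ideal}(t))$; target that statement and your argument closes, essentially coinciding with the paper's proof.
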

\begin{proof}[Proof of Lemma~\ref{lem:strict improvement}]
	The proof is based on case analysis on the size of $|\Gamma^*(\bv^\text{ideal}(t))|$. We consider three cases:
	\begin{enumerate}[label=(\roman*)]
		\item $\Gamma^*(\bv^\text{ideal}(t))=\Gamma(\bv^\text{ideal}(t))$, 
		\item $\Gamma^*(\bv^\text{ideal}(t))\subsetneq\Gamma(\bv^\text{ideal}(t))$ and $|\Gamma^*(\bv^\text{ideal}(t))|=|\Gamma^*(\bv^\text{ideal}(t+dt))|$, and 
		\item $\Gamma^*(\bv^\text{ideal}(t))\subsetneq\Gamma(\bv^\text{ideal}(t))$ and $|\Gamma^*(\bv^\text{ideal}(t))|\neq|\Gamma^*(\bv^\text{ideal}(t+dt))|$.
	\end{enumerate}
	In each case, we are going to show that at least one of $\bv^\text{ideal}(t)$ or $\bv^\text{auxiliary}_d(t)$ for some $d\in[m-1]$ has the desired improvement. Also, we need to show that all of them would not get worse. Formally, we state it as the following claim.
	\begin{claim}\label{claim:ideal auxiliary does not get worse}
		For any $t>0$ and $d\in[m-1]$, $\frac{d}{dt}\bb^\top\bv^\text{ideal}(t),\bb^\top\bv^\text{auxiliary}_d(t)\geq0$.
	\end{claim}
	\begin{proof}[Proof of Claim~\ref{claim:ideal auxiliary does not get worse}]
		From Lemma~\ref{lem:ideal SNN dynamics}, we already have $\bb^\top\bv^\text{ideal}(t)\geq0$. For any $d\in[m-1]$, consider three cases as in Definition~\ref{def:auxiliary}.
		
		If $|\Gamma^*(\bv^\text{ideal}(t))|=|\Gamma^*(\bv^\text{ideal}(t+dt))|=d$, then $\frac{d}{dt}\bb^\top\bv^\text{auxiliary}_d(t)=\bb^\top(A-\bx^\text{ideal}(t))\geq0$.
		
		If $|\Gamma^*(\bv^\text{ideal}(t))|\neq|\Gamma^*(\bv^\text{ideal}(t+dt))|=d$, then by Lemma~\ref{lem:auxiliary SNN jump} we have $\bb^\top\left(\bv^\text{auxiliary}_d(t+dt)-\bv^\text{auxiliary}_d(t)\right)>0$ and thus  $\frac{d}{dt}\bb^\top\bv^\text{auxiliary}_d(t)\geq0$.
		
		Finally, when non of the above happen, we simply have  $\frac{d}{dt}\bb^\top\bv^\text{auxiliary}_d(t)=0$.
	\end{proof}
	With Claim~\ref{claim:ideal auxiliary does not get worse}, it suffices to show that at least one of $\bv^\text{ideal}(t)$ or $\bv^\text{auxiliary}_d(t)$ for some $d\in[m-1]$ has the desired improvement in all of the above three cases.
	
	\textbf{Case (i)}: In this case, $A\bx^\text{ideal}(t)=\Pi_{A_{\Gamma(\bv^\text{ideal}(t))}\bb}$. Thus, by Lemma~\ref{lem:ideal SNN dynamics}, we have $\frac{d}{dt}\bb^\top\bv^\text{ideal}(t)=\bb^\top\left(\bb-A\bx^\text{ideal}(t)\right)$.
	
	\textbf{Case (ii)}: In this case, let $d=|\Gamma^*(\bv^\text{ideal}(t+dt))|$. By Definition~\ref{def:auxiliary}, we have $\frac{d}{dt}\bb^\top\bv^\text{auxiliary}(t)=\bb^\top\left(\bb-A\bx^\text{ideal}(t)\right)$.
	
	\textbf{Case (iii)}: In this case, let $d=|\Gamma^*(\bv^\text{ideal}(t+dt))|$. By Lemma~\ref{lem:auxiliary SNN jump}, we have $\bb^\top\left(\bv^\text{auxiliary}_d(t+dt)-\bv^\text{auxiliary}_d(t)\right)>0$ and thus $\frac{d}{dt}\bb^\top\bv^\text{auxiliary}_d(t)\geq\bb^\top\left(\bb-A\bx^\text{ideal}(t)\right)$.
	
	This completes the proof of Lemma~\ref{lem:strict improvement}.
\end{proof}

Finally, before we go into the final proof for Theorem~\ref{thm:l1}, we need the following lemma about some properties about the ideal solution defined in Definition~\ref{def:ideal solution}.

\begin{lemma}[properties of ideal solution]\label{lem:ideal solution properties}
	For any $t\geq0$, we have the following.
	\begin{enumerate}
		\item $\bb^\top A\bx^{\text{ideal}}(t)=\|A\bx^{\text{ideal}}(t)\|_2^2$,
		\item $\|\bb-A\bx^{\text{ideal}}(t)\|_2^2=\|\bb\|_2^2-\|A\bx^{\text{ideal}}(t)\|_2^2$, and
	\end{enumerate}
\end{lemma}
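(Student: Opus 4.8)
The plan is to reduce both identities to a single fact about Euclidean projection onto a convex cone. The starting observation is that, unwinding Definition~\ref{def:ideal solution}, the vector $A\bx^\text{ideal}(t)$ is exactly the Euclidean projection of $\bb$ onto the cone $\mathcal{C}_{A,\Gamma(\bv^\text{ideal}(t))}$ from Definition~\ref{def:ideal partition}. Indeed, the feasible region over which we minimize $\|\bb-A\bx\|_2$ is a polyhedral convex cone $F$ (nonnegative vectors supported on the active coordinates, with the sign conventions $A_{-i}=-A_i$ absorbing negative indices), and applying $A$ to $F$ gives precisely $\mathcal{C}_{A,\Gamma(\bv^\text{ideal}(t))}=\{\sum_{i\in\Gamma(\bv^\text{ideal}(t))}a_iA_i:\ a_i\geq0\}$. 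Since this cone is finitely generated it is closed and convex, so the distance from $\bb$ to it is attained and the closest point $\bp:=A\bx^\text{ideal}(t)$ is well defined (independently of any non-uniqueness of $\bx^\text{ideal}(t)$ itself).

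For the first identity I would use a scaling argument. Set $\bp=A\bx^\text{ideal}(t)\in\mathcal{C}_{A,\Gamma(\bv^\text{ideal}(t))}$; if $\bp=\mathbf{0}$ the identity is trivial, so assume $\bp\neq\mathbf{0}$. Because $\mathcal{C}_{A,\Gamma(\bv^\text{ideal}(t))}$ is a cone, $\lambda\bp$ is feasible for every $\lambda\geq0$, so the univariate function $\lambda\mapsto\|\bb-\lambda\bp\|_2^2=\|\bb\|_2^2-2\lambda\,\bb^\top\bp+\lambda^2\|\bp\|_2^2$ is minimized at $\lambda=1$, which lies in the interior of $[0,\infty)$. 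Differentiating and setting the derivative to zero at $\lambda=1$ gives $-2\,\bb^\top\bp+2\|\bp\|_2^2=0$, i.e. $\bb^\top A\bx^\text{ideal}(t)=\|A\bx^\text{ideal}(t)\|_2^2$. Equivalently, one can invoke the variational inequality $\langle\bb-\bp,\bc-\bp\rangle\leq0$ for all $\bc$ in the cone and plug in $\bc=2\bp$ and $\bc=\frac{1}{2}\bp$ to obtain the two opposite inequalities; either way this is just the standard fact that the residual of a convex-cone projection is orthogonal to the projection itself.

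The second identity is then a one-line expansion: $\|\bb-A\bx^\text{ideal}(t)\|_2^2=\|\bb\|_2^2-2\,\bb^\top A\bx^\text{ideal}(t)+\|A\bx^\text{ideal}(t)\|_2^2$, and substituting the first identity $\bb^\top A\bx^\text{ideal}(t)=\|A\bx^\text{ideal}(t)\|_2^2$ collapses the right-hand side to $\|\bb\|_2^2-\|A\bx^\text{ideal}(t)\|_2^2$. There is no genuine obstacle in this lemma; the only step that deserves a careful sentence is the first one, namely checking that $A\bx^\text{ideal}(t)$ really is the projection of $\bb$ onto $\mathcal{C}_{A,\Gamma(\bv^\text{ideal}(t))}$, after which everything is the Pythagorean identity together with the orthogonality of a cone projection.
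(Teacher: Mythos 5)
Your proof is correct, and it reaches the key identity by a somewhat different route than the paper. The paper also reduces everything to the orthogonality $(A\bx^{\text{ideal}}(t))^\top(\bb-A\bx^{\text{ideal}}(t))=0$, but it obtains this by writing the conic projection as the program \eqref{op:conicproj}, forming the Lagrangian with a dual variable $\by$ for the nonnegativity constraints, and combining stationarity $A^\top A\bx - A^\top\bb - \by = 0$ with complementary slackness $\bx^\top\by=0$. You instead argue directly on the image side: $A\bx^{\text{ideal}}(t)$ is the Euclidean projection of $\bb$ onto the closed, finitely generated cone $\mathcal{C}_{A,\Gamma(\bv^{\text{ideal}}(t))}$, and the scaling argument along the ray $\{\lambda\bp:\lambda\geq0\}$ (or the variational inequality with $\bc=2\bp$ and $\bc=\tfrac12\bp$) yields $\bb^\top\bp=\|\bp\|_2^2$ without invoking any duality machinery. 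Your version is more elementary and has the small additional merit of making explicit that the statement only depends on the (unique) projected point $A\bx^{\text{ideal}}(t)$, so any non-uniqueness of $\bx^{\text{ideal}}(t)$ itself is immaterial; the paper's KKT route, on the other hand, is consistent with the machinery it reuses later (e.g., in Lemma~\ref{lemma:idealalgorithm-OPTbounds}). One presentational caveat: Definition~\ref{def:ideal solution} as printed constrains $\bx_i=0$ for $i\in\Gamma(\bv^{\text{ideal}}(t))$, which is evidently a typo for the complementary condition (the paper's own proof uses the program with $\bx_j\geq0$ for $j\in\Gamma$ and $\bx_i=0$ for $i,-i\notin\Gamma$); your reading agrees with the intended one, and it is worth stating that interpretation explicitly, as you do, before identifying the feasible image with $\mathcal{C}_{A,\Gamma(\bv^{\text{ideal}}(t))}$. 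The second identity is, in both treatments, just the expansion of $\|\bb-A\bx^{\text{ideal}}(t)\|_2^2$ combined with the first.
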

\begin{proof}[Proof of Lemma~\ref{lem:ideal solution properties}]
	The lemma is directly followed by the following property of conic projection. For any $A\in\R^{m\times n}$, $\bb\in\R^m$, and $\Gamma\subseteq[\pm n]$ be a valid set, we have $\bb^\top A\bx_{A,\bb,\Gamma}=\|A\bx_{A,\bb,\Gamma}\|_2^2$. In the following, we are going to first prove this property of conic projection and then use it to prove the lemma.
	
	Let us rewrite the definition of conic projection as an optimization program.
	\begin{equation}\label{op:conicproj}
	\begin{aligned}
	& \underset{\bx\in\R^n}{\text{minimize}}
	& & \frac{1}{2}\|\bb-A\bx\|_2^2 \\
	& \text{subject to}
	& & \bx_j\geq0,\ j\in\Gamma,\\
	& & & \bx_i=0,\ i,-i\notin\Gamma.
	\end{aligned}
	\end{equation}
	Let $\by$ be the dual variable of~\eqref{op:conicproj} and $\by^*$ be the optimal dual value, the Lagrangian of~\eqref{op:conicproj} is
	\begin{equation*}
	\mathcal{L}(\bx) = \frac{1}{2}\|\bb-A\bx\|_2^2-\by^\top\bx,
	\end{equation*}
	and its gradient is
	\begin{equation*}
	\nabla_{\bx}\mathcal{L}(\bx) = \ATA\bx - A^\top\bb - \by.
	\end{equation*}
	By the KKT condition, we know that the optimal primal solution $\bx_{A,\bb,\Gamma}$ and the optimal dual solution $\by^*$ make the gradient of the Lagrangian diminish.
	\begin{equation}\label{eq:idealalgorithm-conicproj-KKT-gradient}
	\nabla_{\bx}\mathcal{L}(\bx_{A,\bb,\Gamma}) = \ATA\bx_{A,\bb,\Gamma} - A^\top\bb - \by^*=0,
	\end{equation}
	and the complementary slackness
	\begin{equation}\label{eq:idealalgorithm-conicproj-KKT-cs}
	\bx_{A,\bb,\Gamma}^\top\by^*=0.
	\end{equation}
	By~\eqref{eq:idealalgorithm-conicproj-KKT-gradient} and~\eqref{eq:idealalgorithm-conicproj-KKT-cs}, we have
	\begin{equation*}\label{eq:idealalgorithm-conicproj-orthogonal}
	(A\bx_{A,\bb,\Gamma})^\top(\bb-A\bx_{A,\bb,\Gamma})=0.
	\end{equation*}
	As a result, $\bb^\top A\bx_{A,\bb,\Gamma}=\|A\bx_{A,\bb,\Gamma}\|_2^2$.
	
	This completes the proof of Lemma~\ref{lem:ideal solution properties}.
\end{proof}

\subsection{The convergence of dual SNN}\label{sec:l1 convergence}
In this subsection, we are going to prove the main convergence theorem of the dual SNN using ideal and auxiliary SNN. The following lemma says that at least one of ideal SNN or auxiliary SNN improves at each step.

The following lemma shows the monotonicity of the residual error $\|\bb-A\bx^{\text{ideal}}\|_2$.
\begin{lemma}[monotonicity of residual error]\label{lemma:idealalgorithm-primal-nondecreasing}
	There exists a polynomial $\alpha(\cdot)$ such that when $0<\alpha\leq \alpha(\frac{\gamma(A)}{n\cdot\lambda_{\max}})$, we have $\|\bb-A\bx^{\text{ideal}}(t)\|_2$ is non-increasing and $\|A\bx^{\text{ideal}}(t)\|_2$ is non-decreasing in $t$.
\end{lemma}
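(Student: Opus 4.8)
The plan is to reduce the statement to two facts: that the active-wall set $\Gamma(\bv^\text{ideal}(t))$ only grows over time, and that the distance from a fixed point to a nested family of convex cones only shrinks. First I would observe that the two claims of the lemma are equivalent: by part~2 of Lemma~\ref{lem:ideal solution properties} we have $\|\bb-A\bx^{\text{ideal}}(t)\|_2^2=\|\bb\|_2^2-\|A\bx^{\text{ideal}}(t)\|_2^2$ for every $t\geq0$, so it suffices to prove that $\|\bb-A\bx^{\text{ideal}}(t)\|_2$ is non-increasing; the non-decrease of $\|A\bx^{\text{ideal}}(t)\|_2$ follows. Moreover, unwinding Definition~\ref{def:ideal solution} and using that as $\bx$ ranges over the feasible set there, $A\bx$ ranges over exactly the cone $\mathcal{C}_{A,\Gamma(\bv^\text{ideal}(t))}$, the vector $A\bx^{\text{ideal}}(t)$ is the Euclidean projection of $\bb$ onto that cone, so $\|\bb-A\bx^{\text{ideal}}(t)\|_2=\mathrm{dist}\big(\bb,\mathcal{C}_{A,\Gamma(\bv^\text{ideal}(t))}\big)$.

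The heart of the argument is to show that $t\mapsto\Gamma(\bv^\text{ideal}(t))$ is non-decreasing (with respect to set inclusion), which I would derive from Lemma~\ref{lem:ideal SNN dynamics}. Pick $\alpha$ below the threshold needed for Lemma~\ref{lem:ideal SNN unchaged} (and hence for Lemma~\ref{lem:ideal SNN dynamics}); since $\tau$ is ultimately fixed to a polynomial in $\gamma(A)/(n\lambda_{\max})$, this threshold is itself a polynomial in $\gamma(A)/(n\lambda_{\max})$, giving the desired $\alpha(\cdot)$. By Lemma~\ref{lem:ideal SNN dynamics}, $\bv^\text{ideal}(\cdot)$ is continuous and piecewise linear with velocity $\bb-\Pi_{A_{\Gamma(\bv^\text{ideal}(t))}}\bb$, the component of $\bb$ orthogonal to $\mathrm{span}\big(A_{\Gamma(\bv^\text{ideal}(t))}\big)$; consequently $A_i^\top\big(\bb-\Pi_{A_{\Gamma(\bv^\text{ideal}(t))}}\bb\big)=0$ for every active $i$. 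Hence on any maximal interval where $\Gamma$ is constant, each active $A_i^\top\bv^\text{ideal}(t)$ stays frozen at $1-\tau$, and by continuity it still equals $1-\tau$ at the right endpoint; so indices can only be added to $\Gamma$ (when $\bv^\text{ideal}$ first reaches a hyperplane $\{\bv:\ A_i^\top\bv=1-\tau\}$), never removed. I would also note that $|\Gamma(\bv^\text{ideal}(t))|\le m$ throughout: once $\mathrm{span}(A_{\Gamma})=\R^m$ the velocity is $\mathbf{0}$ and $\bv^\text{ideal}$ stops forever, so no further walls are reached; this, together with non-degeneracy of $A$, keeps the conic projection uniquely defined.

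Given this, I would finish with the elementary fact that $\mathcal{C}_1\subseteq\mathcal{C}_2$ implies $\mathrm{dist}(\bb,\mathcal{C}_2)\le\mathrm{dist}(\bb,\mathcal{C}_1)$, since the infimum ranges over a larger set. Applying it to $\mathcal{C}_{A,\Gamma(\bv^\text{ideal}(t_1))}\subseteq\mathcal{C}_{A,\Gamma(\bv^\text{ideal}(t_2))}$ for $t_1\le t_2$ shows that $\|\bb-A\bx^{\text{ideal}}(t)\|_2=\mathrm{dist}\big(\bb,\mathcal{C}_{A,\Gamma(\bv^\text{ideal}(t))}\big)$ is non-increasing, and then $\|A\bx^{\text{ideal}}(t)\|_2$ is non-decreasing by the first paragraph.

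The step I expect to be the main obstacle is the rigorous proof that the active set can only grow: one must handle the kinks of the piecewise-linear path (simultaneous wall hits, the right-continuity convention for $\Gamma$) and must invoke the smallness of $\alpha$ (through Lemma~\ref{lem:ideal SNN unchaged}) to be certain that spikes never disturb the clean drift dynamics of Lemma~\ref{lem:ideal SNN dynamics}. Everything downstream of that is routine convex geometry.
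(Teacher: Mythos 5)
Your outer scaffolding is fine and matches the paper's treatment of one of its two cases: the Pythagorean identity from Lemma~\ref{lem:ideal solution properties}, the identification $\|\bb-A\bx^\text{ideal}(t)\|_2=\mathrm{dist}\big(\bb,\mathcal{C}_{A,\Gamma(\bv^\text{ideal}(t))}\big)$ (reading Definition~\ref{def:ideal solution} as constraining the coordinates \emph{outside} $\Gamma$ to vanish, which is clearly the intended reading), and the fact that enlarging the cone can only shrink the distance are exactly how the paper argues when a new index \emph{joins} the active set. The genuine gap is the step you yourself flagged as the main obstacle: the claim that $\Gamma(\bv^\text{ideal}(t))$ is non-decreasing. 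That claim does not follow from Lemma~\ref{lem:ideal SNN dynamics}, and it is false for the coupled dynamics in general. Lemma~\ref{lem:ideal SNN dynamics} describes the drift only while the dual point retains a valid conic decomposition $\bv(t)=\bv^\text{ideal}(t)+A_{\Gamma}\bz$ with $\bz\geq0$; its proof tacitly assumes $\bv(t)+\Pi_{A_{\Gamma}}\bb\,dt$ remains in $S_{\bv^\text{ideal}(t)}$. Write $\Pi_{A_\Gamma}\bb=A_\Gamma\bw$. If some $\bw_i<0$ (equivalently, the orthogonal projection of $\bb$ onto $\mathrm{span}(A_\Gamma)$ lies outside the cone $\mathcal{C}_{A,\Gamma}$ --- exactly the situation $\Gamma^*\subsetneq\Gamma$ that motivates the super-active set and the case analysis of Lemma~\ref{lem:strict improvement}), then the coordinate $\bz_i$ decreases at a constant rate under the drift and is never increased by spikes, so it hits $0$ in finite time; after that the dual point leaves $S_{\bv^\text{ideal}(t)}$ through the face opposite $A_i$, and the coupling moves the ideal point continuously \emph{off} the wall $W_i$, i.e.\ the index $i$ leaves $\Gamma$. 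Your ``frozen at $1-\tau$ by continuity'' argument only tracks changes of $\Gamma$ caused by the ideal point's own motion and misses these changes forced by the dual point crossing a face of the partition.

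A quick sanity check shows the leaving case cannot be excluded: if $\Gamma$ could only grow, the ideal point would (as you note) freeze forever at the first time $\mathrm{span}(A_\Gamma)=\R^m$; if $\bb\notin\mathcal{C}_{A,\Gamma}$ at that moment, the residual would be stuck at a positive value, contradicting the convergence established in Lemma~\ref{lemma:idealalgorithm-l2bound}. This is precisely why the paper's proof of the present lemma has a second case, ``an index $j$ leaves the active set'': there one shows, via the optimality of the conic projection (if $\bx^\text{ideal}_{|j|}(t)>0$ while $A_j^\top(\bb-A\bx^\text{ideal}(t))<0$, a small perturbation along $-A_j$ inside the old cone would strictly decrease the residual), that the departing index satisfies $\bx^\text{ideal}_{|j|}(t)=0$, so $A\bx^\text{ideal}(t)$ and hence the residual are unchanged at that instant. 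Your argument therefore covers only the paper's first case; to complete it you must either prove your monotone-$\Gamma$ claim under additional hypotheses you do not have, or supply an argument for the leaving case along the paper's lines.
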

\begin{proof}[Proof of Lemma~\ref{lemma:idealalgorithm-primal-nondecreasing}]
	Consider two cases.
	\begin{enumerate}[label=(\arabic*)]
		\item When there is a new index joins the active set. Clearly that $\|A\bx^{\text{ideal}}(t)\|_2$ won't decrease since the new cone contains the old one. By Lemma~\ref{lem:ideal solution properties}, we know that $\|\bb-A\bx^{\text{ideal}}(t)\|_2$ is non-increasing.
		\item When there is an index leaves the the active set. Without loss of generality, assume $j\in[\pm n]$ leaves the active set. In the following, we want to show that $\bx^{\text{ideal}}_{|j|}(t)=0$. As the direction of $\bv^{\text{ideal}}(t)$ is $\bb-A\bx^{\text{ideal}}(t)$, it means that $A_j^{\top}(\bb-A\bx^{\text{ideal}}(t))<0$. Suppose $\bx^{\text{ideal}}_{|j|}(t)\neq0$ for contradiction. Since $j$ was in the active set, it is the case that $\bx^{\text{ideal}}_j(t)>0$. Take $0<\epsilon<\min\{\bx^{\text{ideal}}_j(t)/2,-(\bb-A\bx^{\text{ideal}}(t))^{\top}A_j/\|A_j\|_2\}$ and define $\bx'=\bx^{\text{ideal}}(t) - \epsilon\cdot A_j/\|A_j\|_2$. Note that $\bx'$ lies in the original active cone. Observe that
		\begin{align*}
		\|\bb-A\bx'\|_2^2 &= \|\bb-A\bx^{\text{ideal}}(t) + \epsilon\cdot A_j/\|A_j\|_2\|_2^2\\
		&=\|\bb-A\bx^{\text{ideal}}(t)\|_2^2 + \|\epsilon\cdot A_j/\|A_j\|_2\|_2^2+2\epsilon\cdot(\bb-A\bx^{\text{ideal}}(t))^{\top}A_j/\|A_j\|_2\\
		&\leq\|\bb-A\bx^{\text{ideal}}(t)\|_2^2 + \epsilon^2 - 2\epsilon^2\\
		&<\|\bb-A\bx^{\text{ideal}}(t)\|_2^2
		\end{align*}
		which contradicts to the optimality of $\bx^{\text{ideal}}(t)$ since $\bx'$ is also a feasible solution. We conclude that $\bx^{\text{ideal}}_j(t)=0$. As a result, $A\bx^{\text{ideal}}(t)$ remains the same and $\|A\bx^{\text{ideal}}(t)\|_2$ won't decrease.
	\end{enumerate}
\end{proof}

The next lemma upper bounds the $\ell_2$ residual error of $\bx^\text{ideal}(t)$.
\begin{lemma}[convergence of residual error]\label{lemma:idealalgorithm-l2bound}
	There exists a polynomial $\alpha(\cdot)$ such that when $0<\alpha\leq \alpha(\frac{\gamma(A)}{n\cdot\lambda_{\max}})$, we have  for any $\epsilon>0$, when $t\geq\frac{m\cdot\OPT^{\ell_1}}{\epsilon\cdot\|\bb\|_2}$, $\|\bb-A\bx^{\text{ideal}}(t)\|_2\leq\epsilon\cdot\|\bb\|_2$.
\end{lemma}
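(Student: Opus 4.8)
The plan is to build a potential function out of the ideal SNN and the auxiliary SNNs, show that it grows at a rate at least the squared residual of the ideal solution, show that it never exceeds $m\cdot\OPT^{\ell_1}$, and then read off the convergence rate from the monotonicity of the residual. So first I would fix $\tau\in(0,1)$ and the spiking strength $\alpha$ small enough (as polynomials in $\tfrac{\gamma(A)}{n\,\lambda_{\max}}$) so that Lemma~\ref{lem:ideal SNN unchaged}, Lemma~\ref{lem:ideal SNN dynamics}, Lemma~\ref{lem:auxiliary SNN jump}, Lemma~\ref{lem:strict improvement} and Lemma~\ref{lemma:idealalgorithm-primal-nondecreasing} are all in force, and set
\[
\Phi(t)\;=\;\bb^{\top}\!\Big(\bv^{\text{ideal}}(t)\;+\;\sum_{d\in[m-1]}\bv^{\text{auxiliary}}_{d}(t)\Big),
\]
noting that $\Phi(0)=0$ since all these processes start at $\mathbf 0$.

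\textbf{Growth and boundedness of $\Phi$.} From Lemma~\ref{lem:strict improvement} (whose case analysis in fact lower-bounds the derivative of each participating term by $\bb^{\top}(\bb-A\bx^{\text{ideal}}(t))$), combined with the orthogonality identities of Lemma~\ref{lem:ideal solution properties}, I would deduce
\[
\tfrac{d}{dt}\Phi(t)\;\ge\;\bb^{\top}\big(\bb-A\bx^{\text{ideal}}(t)\big)\;=\;\|\bb\|_2^2-\|A\bx^{\text{ideal}}(t)\|_2^2\;=\;\|\bb-A\bx^{\text{ideal}}(t)\|_2^2 .
\]
On the other hand I would prove the uniform bound $\Phi(t)\le m\cdot\OPT^{\ell_1}$. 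By strong LP duality $\OPT^{\ell_1}=\max\{\bb^{\top}\bv:\|A^{\top}\bv\|_\infty\le1\}$, so every point of the dual polytope has objective value at most $\OPT^{\ell_1}$; since the ideal SNN lies in $\mathcal P_{A,1-\tau}\subseteq\mathcal P_{A,1}$ we get $\bb^{\top}\bv^{\text{ideal}}(t)\le\OPT^{\ell_1}$. For each auxiliary SNN, between consecutive resets it moves in the direction $\bb-A\bx^{\text{ideal}}(\cdot)$, which (because the super-active coordinates of the ideal solution are strictly positive) is orthogonal to every super-active wall, so $\bv^{\text{auxiliary}}_d$ stays on a fixed face of the ideal polytope; at every reset it is relocated to $\bv^{\text{super}}$, which by the estimate $\|\bv^{\text{super}}-\bv^{\text{ideal}}\|_2=O(\tau\,n\,\lambda_{\max})$ used in the proof of Lemma~\ref{lem:auxiliary SNN jump} stays inside $\mathcal P_{A,1}$ once $\tau$ is small. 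Combining the motion between resets with the jumps at resets gives $\bb^{\top}\bv^{\text{auxiliary}}_d(t)\le\OPT^{\ell_1}$ for each $d\in[m-1]$, hence $\Phi(t)\le m\cdot\OPT^{\ell_1}$.

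\textbf{Concluding via monotonicity.} Integrating the growth bound and using boundedness,
\[
m\cdot\OPT^{\ell_1}\;\ge\;\Phi(T)\;=\;\int_0^T\tfrac{d}{dt}\Phi(t)\,dt\;\ge\;\int_0^T\|\bb-A\bx^{\text{ideal}}(t)\|_2^2\,dt .
\]
By Lemma~\ref{lemma:idealalgorithm-primal-nondecreasing} the residual $\|\bb-A\bx^{\text{ideal}}(t)\|_2$ is non-increasing, so the integrand on $[0,T]$ is at least $\|\bb-A\bx^{\text{ideal}}(T)\|_2^2$, whence $T\cdot\|\bb-A\bx^{\text{ideal}}(T)\|_2^2\le m\cdot\OPT^{\ell_1}$. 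Solving this inequality for $T$ --- also using the crude bound $\|\bb-A\bx^{\text{ideal}}(t)\|_2\le\|\bb\|_2$, valid because $\bx=\mathbf 0$ is feasible in the definition of $\bx^{\text{ideal}}$ --- yields $\|\bb-A\bx^{\text{ideal}}(T)\|_2\le\epsilon\|\bb\|_2$ once $T$ passes the stated threshold, and monotonicity propagates this to all $t\ge T$.

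\textbf{Main obstacle.} The steps producing the growth rate of $\Phi$ and the final integration are essentially bookkeeping on top of the earlier lemmas; the genuinely delicate step is the uniform bound $\Phi(t)\le m\cdot\OPT^{\ell_1}$, i.e., controlling how far the auxiliary SNNs can drift outside the dual polytope between resets. This is exactly what forces the auxiliary processes to be reset to $\bv^{\text{super}}$ and what pins down the requirement that $\tau$ be polynomially small in $\gamma(A)$, $n$ and $\lambda_{\max}$.
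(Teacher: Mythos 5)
Your proposal is essentially the paper's own proof: the same potential $\bb^\top\bigl(\bv^\text{ideal}(t)+\sum_{d\in[m-1]}\bv^\text{auxiliary}_d(t)\bigr)$, the same growth bound obtained from Lemma~\ref{lem:strict improvement} together with Lemma~\ref{lem:ideal solution properties}, the same uniform bound $m\cdot\OPT^{\ell_1}$ via dual feasibility of the ideal and auxiliary SNNs, and the same appeal to Lemma~\ref{lemma:idealalgorithm-primal-nondecreasing}; the paper merely packages the integration as a proof by contradiction instead of integrating directly. One caveat on your final step: from $T\cdot\|\bb-A\bx^{\text{ideal}}(T)\|_2^2\le m\cdot\OPT^{\ell_1}$ with $T\ge\frac{m\cdot\OPT^{\ell_1}}{\epsilon\|\bb\|_2}$ you only obtain $\|\bb-A\bx^{\text{ideal}}(T)\|_2\le\sqrt{\epsilon\|\bb\|_2}$, and the crude bound $\|\bb-A\bx^{\text{ideal}}(T)\|_2\le\|\bb\|_2$ does not close this (you would need $t\ge m\cdot\OPT^{\ell_1}/(\epsilon^2\|\bb\|_2^2)$, or a normalization making $\epsilon\|\bb\|_2\ge1$) --- but this is exactly the $\epsilon$-versus-$\epsilon^2$ slack already present in the paper's proof, which lower-bounds the integrand by $\epsilon\cdot\|\bb\|_2$ even though Lemmas~\ref{lem:strict improvement} and~\ref{lem:ideal solution properties} only give $\epsilon^2\|\bb\|_2^2$, so your attempt reproduces the paper's argument in both its structure and its one quantitative blemish.
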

\begin{proof}[Proof of Lemma~\ref{lemma:idealalgorithm-l2bound}]
	Assume the statement is wrong, \ie $\|\bb-A\bx^{\text{ideal}}(t)\|_2>\epsilon\cdot\|\bb\|_2$. Then by Lemma~\ref{lemma:idealalgorithm-primal-nondecreasing}, for any $0\leq s\leq t$,
	\begin{align*}
	\|\bb-A\bx^{\text{ideal}}(s)\|_2^2&=\|\bb\|_2^2-\|A\bx^{\text{ideal}}(s)\|_2^2\\
	&\geq\|\bb\|_2^2-\|A\bx^{\text{ideal}}(t)\|_2^2\\
	&=\|\bb-A\bx^{\text{ideal}}(t)\|_2^2>\epsilon^2\cdot\|\bb\|_2^2.
	\end{align*}
	Since $t\geq\frac{\OPT^{\ell_1}}{\epsilon\cdot\|\bb\|_2}$, by Lemma~\ref{lem:strict improvement},
	\begin{align*}
	\bb^\top\left(\bv^\text{ideal}(t)+\sum_{d\in[m-1]}\bv^\text{auxiliary}_d(t)\right) &= \int_0^t\bb^\top d\bv^\text{ideal}(t)+\sum_{d\in[m-1]}\int_0^t\bb^\top d\bv^\text{auxiliary}_d(t)\\
	&> t\cdot\epsilon\cdot\|\bb\|_2\geq m\cdot\OPT^{\ell_1},
	\end{align*}
	which is a contradiction to the optimality of $\OPT^{\ell_1}$ since $\bb^\top\bv^{\text{ideal}}(t),\bb^\top\bv^\text{auxiliary}_d(t)\leq\OPT^{\ell_1}$ for all $d\in[m-1]$. As a result, we conclude that $\|\bb-A\bx^{\text{ideal}}(t)\|_2\leq\epsilon\cdot\|\bb\|_2$.
\end{proof}

Finally, the following lemma shows that the $\ell_1$ error of $\bx^\text{ideal}(t)$ can be upper bounded by the $\ell_2$ error via the strong duality of $\ell_1$ minimization problem and perturbation trick.
\begin{lemma}[convergence of $\ell_1$ error]\label{lemma:idealalgorithm-OPTbounds}
	For any $t\geq0$,
	\begin{equation}
	\left|\|\bx^{\text{ideal}}(t)\|_1-\OPT^{\ell_1}\right|\leq\sqrt{\frac{n}{\lambda_{\min}}}\cdot\|\bb-A\bx^{\text{ideal}}(t)\|_2
	\end{equation}
\end{lemma}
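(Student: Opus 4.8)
The plan is to exhibit $\bx^{\text{ideal}}(t)$ as the \emph{exact} $\ell_1$-minimizer of a perturbed instance and then compare that instance with the original $(A,\bb)$ via the strong duality of $\ell_1$ minimization (Theorem~\ref{thm:perturbation}) and the KKT conditions (Theorem~\ref{thm:KKT}). Write $\bb^I:=A\bx^{\text{ideal}}(t)$ for the perturbed right-hand side and set $\bv^I:=\bv^{\text{ideal}}(t)/(1-\tau)$. First I would check that $\bv^I$ certifies optimality of $\bx^{\text{ideal}}(t)$ for $(A,\bb^I)$: dual feasibility $\|A^\top\bv^I\|_\infty\le 1$ holds because $\bv^{\text{ideal}}(t)\in\mathcal P_{A,1-\tau}$ forces $\|A^\top\bv^{\text{ideal}}(t)\|_\infty\le 1-\tau$; complementary slackness holds because $\text{supp}(\bx^{\text{ideal}}(t))\subseteq\Gamma^*(\bv^{\text{ideal}}(t))\subseteq\Gamma(\bv^{\text{ideal}}(t))$, and on $\Gamma(\bv^{\text{ideal}}(t))$ the wall is tight, i.e.\ $A_j^\top\bv^{\text{ideal}}(t)=1-\tau$, hence $A_j^\top\bv^I=1$ with the sign matching $\text{sgn}(\bx^{\text{ideal}}_j(t))$ under the conic-projection (signed) convention of Definition~\ref{def:ideal solution}. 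Since $\bx^{\text{ideal}}(t)$ is primal feasible for $(A,\bb^I)$ by definition of $\bb^I$, it is primal optimal and $\bv^I$ is dual optimal, so strong duality gives that $\|\bx^{\text{ideal}}(t)\|_1=(\bb^I)^\top\bv^I$ is the optimal value of the instance $(A,\bb^I)$.

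Next I would run the perturbation inequality in both directions. Treating $(A,\bb)$ as original and $(A,\bb^I)$ as perturbed (its optimal dual being $\bv^I$), Theorem~\ref{thm:perturbation}, together with the fact that $\bv^I$ is also feasible for the dual of $(A,\bb)$, gives
\begin{equation*}
\|\bx^{\text{ideal}}(t)\|_1-\OPT^{\ell_1}\;\le\;(\bb^I)^\top\bv^I-\bb^\top\bv^I\;=\;\big(A\bx^{\text{ideal}}(t)-\bb\big)^\top\bv^I\;\le\;\|\bb-A\bx^{\text{ideal}}(t)\|_2\cdot\|\bv^I\|_2 .
\end{equation*}
Swapping the roles of the two instances and letting $\bv^{\OPT}$ be an optimal dual of $(A,\bb)$ — which is feasible for the dual of $(A,\bb^I)$ — yields, again by Theorem~\ref{thm:perturbation},
\begin{equation*}
\OPT^{\ell_1}-\|\bx^{\text{ideal}}(t)\|_1\;\le\;\bb^\top\bv^{\OPT}-(\bb^I)^\top\bv^{\OPT}\;=\;\big(\bb-A\bx^{\text{ideal}}(t)\big)^\top\bv^{\OPT}\;\le\;\|\bb-A\bx^{\text{ideal}}(t)\|_2\cdot\|\Pi_A\bv^{\OPT}\|_2 ,
\end{equation*}
where $\Pi_A$ is the projection onto the column space of $A$ and the last step uses that $\bb-A\bx^{\text{ideal}}(t)$ lies in that column space (since $\bb$ does, by existence of a solution to $A\bx=\bb$).

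It then remains to bound $\|\bv^I\|_2$ and $\|\Pi_A\bv^{\OPT}\|_2$ by $\sqrt{n/\lambda_{\min}}$. For this I would use that each such vector lies in the column space of $A$ — automatic for $\Pi_A\bv^{\OPT}$, and true for $\bv^I$ because $\bv^{\text{ideal}}(t)=\bv(t)-\sum_{i\in\Gamma(\bv^{\text{ideal}}(t))}a_iA_i$ with $\bv(t)=(A^\top)^{\dagger}\bu(t)\in\mathrm{col}(A)$ — so that any such $\bv$ equals $(A^\top)^{\dagger}(A^\top\bv)$, whence $\|\bv\|_2\le\|(A^\top)^{\dagger}\|_{2\to 2}\,\|A^\top\bv\|_2\le\frac{1}{\sqrt{\lambda_{\min}}}\cdot\sqrt n$, using $\|A^\top\bv\|_\infty\le 1$ (dual feasibility, so $\|A^\top\bv\|_2\le\sqrt n$) and the definition of $\lambda_{\min}$. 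Substituting these two estimates into the displays above proves the lemma.

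The step I expect to be the main obstacle within this lemma is the first one: verifying cleanly that $\bv^{\text{ideal}}(t)/(1-\tau)$ is an optimal dual solution of the perturbed instance $(A,\bb^I)$. The delicate points are (i) that the support of $\bx^{\text{ideal}}(t)$ is contained in the set of walls on which $\bv^{\text{ideal}}(t)$ is tight — which is exactly how $\Gamma^*(\bv^{\text{ideal}}(t))$ and the constraints in Definition~\ref{def:ideal solution} are set up — and (ii) keeping the $[\pm n]$ sign/indexing bookkeeping consistent so that $A_j^\top\bv^I=\text{sgn}(\bx^{\text{ideal}}_j(t))$ on the support, which is precisely complementary slackness in Theorem~\ref{thm:KKT}. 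Once the dual certificate is in place, the remainder is just weak/strong duality, Cauchy--Schwarz, and a singular-value estimate.
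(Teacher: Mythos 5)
Your proposal is correct and takes essentially the same route as the paper's own proof: it exhibits $\bx^{\text{ideal}}(t)$ as the exact optimum of the perturbed instance with right-hand side $A\bx^{\text{ideal}}(t)$ via a KKT dual certificate built from $\bv^{\text{ideal}}(t)$, then applies the perturbation/weak-duality inequality, Cauchy--Schwarz, and the $\sqrt{n/\lambda_{\min}}$ bound on dual vectors lying in the column space of $A$ (the paper's Lemma~\ref{lemma:proofofidealalgorithm-vbound}). The only differences are refinements rather than a new approach: rescaling the certificate by $1/(1-\tau)$ cleanly handles the active walls sitting at level $1-\tau$ (which the paper's write-up silently identifies with $1$), and you prove both sides of the absolute value, whereas the paper only derives the upper bound on $\|\bx^{\text{ideal}}(t)\|_1$.
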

\begin{proof}[Proof sketch]
	The proof of Lemma~\ref{lemma:idealalgorithm-OPTbounds} consists of two steps. First, we show that the primal and the dual solution pair of ideal SNN at time $t$ is the optimal solution pair of a \textit{perturbed $\ell_1$ minimization problem} defined as shifting the $\bb$ in the constraint $A\bx=\bb$ to $A\bx^{\text{ideal}}(t)$. See~\eqref{op:basispursuit-ideal-perturbed} for the definition of the perturbed program. Next, by the standard perturbation theorem from optimization, we can upper bound $\|\bx^{\text{ideal}}(t)\|_1$ with the distance between the original program and the perturbed program. Specifically, the difference induced by the perturbation is related to the $\ell_2$ norm of the differnce between $\bb$ and $A\bx^{\text{ideal}}(t)$, which is exactly the residual error. As a result, we know that the difference between the optimal value of the original $\ell_1$ minimization program and that of the perturbed program will converge to 0. Namely, we yield a convergence of $\|\bx^{\text{ideal}}(t)\|_1$ to $\OPT^{\ell_1}$. See Section~\ref{sec:missing proofs convergent analysis l1} for more details.
\end{proof}

Finally, we can prove the main theorem in this section as follows.
\begin{proof}[Proof of Theorem~\ref{thm:l1}]
	Pick $t_0=\Theta(\frac{m\cdot\sqrt{n}\cdot\|\bb\|_2}{\epsilon\cdot\sqrt{\lambda_{\min}\cdot\OPT^{\ell_1}}})$. By Lemma~\ref{lemma:idealalgorithm-l2bound}, for any $t\geq t_0$, we can upper bound the $\ell_2$ residual error by
	$$\|\bb-A\bx^\text{ideal}(t)\|_2\leq\sqrt{\frac{\lambda_{\min}}{n}}\cdot\frac{\epsilon}{10}\cdot\OPT^{\ell_1}.$$
	Next, by Lemma~\ref{lemma:idealalgorithm-OPTbounds}, we can then upper bound the $\ell_1$ error by
	$$\left|\|\bx^\text{ideal}(t)\|_1-\OPT^{\ell_1}\right|\leq\sqrt{\frac{n}{\lambda_{\min}}}\cdot\|\bb-A\bx^\text{ideal}(t)\|_2\leq\frac{\epsilon}{10}\cdot\OPT^{\ell_1}.$$
	Now, the only thing left is connecting the ideal solution $\bx^\text{ideal}(t)$ to the firing rate $\bx(t)$. First, divide $\bx(t)$ into two parts: the firing rate $\bx^{[0,t_0]}$ before time $t_0$ and the firing rate $\bx^{(t_0,t]}$ from time $t_0$ to $t$. That is, $\bx(t) = \frac{t_0}{t}\cdot\bx^{[0,t_0]}+\frac{t-t_0}{t}\cdot\bx^{(t_0,t]}$.
	
	Note that after $t_0\geq\Omega(\frac{m\cdot\sqrt{n}\cdot\|\bb\|_2}{\epsilon\cdot\sqrt{\lambda_{\min}\cdot\OPT^{\ell_1}}})$, the ideal solution has $\ell_1$ norm at most $(1+\epsilon)\cdot\OPT^{\ell_1}$. Thus, $\|\bx^{(t_0,t]}\|_1\leq(1+(1+O(\frac{1}{t}))\cdot\frac{\epsilon}{10})\cdot\OPT^{\ell_1}\leq(1+\frac{\epsilon}{5})\cdot\OPT^{\ell_1}$.
	As for $\bx^{[0,t_0]}$, from Lemma~\ref{lemma:idealalgorithm-OPTbounds}, we have $\|\bx^{[0,t_0]}\|_1\leq\OPT^{\ell_1}+\sqrt{\frac{n}{\lambda_{\min}}}\cdot\|\bb\|_2$.
	Combine the two, we have
	$$
	\left|\|\bx(t)\|_1-\OPT^{\ell_1}\right|\leq\frac{\epsilon\cdot\OPT^{\ell_1}}{5} + \frac{t_0\cdot\left(\OPT^{\ell_1}+\sqrt{\frac{n}{\lambda_{\min}}}\cdot\|\bb\|_2\right)}{t}\leq\epsilon\cdot\OPT^{\ell_1},
	$$
	where the last inequality holds since $t\geq\Omega(\frac{m^2\cdot n\cdot\|\bb\|_2^2}{\epsilon^2\cdot\lambda_{\min}\cdot\OPT^{\ell_1}})$. This completes the proof for Theorem~\ref{thm:l1}.
\end{proof}

\section{A simple SNN algorithm for the non-negative least squares}\label{sec:quadratic program}
In the introduction, we claim that we can show that the firing rate of one-sided SNN will converge to the solution of non-negative least squares problem. In this section, we are going to formally prove this Theorem~\ref{thm:quadratic program informal}. Recall that the non-negative least squares is defined as follows.
\begin{equation}\label{op:quadratic program proof}
\begin{aligned}
& \underset{\bx\in\R^n}{\text{minimize}}
& & \|A\bx-\bb\|_2^2 \\
& \text{subject to}
& & \bx_i\geq0,\ \forall i\in[n],
\end{aligned}
\end{equation}
We start with formally state the result into the following theorem.

\begin{theorem}\label{thm:quadratic program} 
	Given $A\in\R^{m\times n}$ and $\bb\in\R^m$ where all the row of $A$ has unit norm. Let $\gamma(A)\geq0$ be the niceness parameter of $A$ defined later in Definition~\ref{def:nice}. Suppose $\gamma(A)>0$. There exists a polynomial $\alpha(\cdot)$ such that for any $t\geq0$, let $\bx(t)$ be the firing rate of a simple continuous SNN with $C=\ATA$, $\bI=\ATb$, $\eta=1$, and $0<\alpha\leq \alpha(\frac{\gamma(A)}{n\cdot\lambda_{\max}})$. For any $\epsilon>0$, when $t\geq\frac{\sqrt{\lambda_{\max}\cdot n}}{\epsilon\cdot\lambda_{\min}\cdot\|\bb\|_2}$, then $\bx(t)$ is an $\epsilon$-approximation solution to the non-negative least squares problem.
	
	Here, we say $\bx$ is an $\epsilon$-approximation\footnote{The reason why we define in this way is to handle the case where the program has many solutions. In such case the only \textit{unique} thing is that $\|A\bx-\bb\|_2$ are all the same among these optimal solutions.} solution if for any optimal solution $\bx^*$ of the above program $\|A\bx-A\bx^*\|_2\leq\epsilon\cdot\|\bb\|_2$.
\end{theorem}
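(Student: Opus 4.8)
The plan is to rerun the argument behind Theorem~\ref{thm:l1} in the one-sided setting, keeping only the portion that yields convergence of the $\ell_2$ residual; the perturbation/$\ell_1$-norm step (Lemma~\ref{lemma:idealalgorithm-OPTbounds}) is not needed here, since the target guarantee $\|A\bx(t)-A\bx^*\|_2\le\epsilon\|\bb\|_2$ is purely an $\ell_2$ statement about the image. First I would set up the dual view for the one-sided simple SNN: now $\bs_i(t)=\mathbf{1}_{\bu_i(t)>1}$, so the relevant walls are $W_i=\{\bv:\ A_i^\top\bv=1\}$ for $i\in[n]$ only, and the dual SNN $\bv(t)$ (with $\bu(t)=A^\top\bv(t)$) is dragged along $\bb$ and bounced back by $-\alpha A_i$ whenever it crosses $W_i$. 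I would then instantiate, with only the obvious notational changes from Section~\ref{sec:SNN-simple-l1min} (walls indexed by $[n]$ rather than $[\pm n]$), the ideal polytope $\mathcal{P}_{A,1-\tau}$, the ideal coupling $\bv^{\mathrm{ideal}}(t)$, the auxiliary SNNs $\bv^{\mathrm{auxiliary}}_d(t)$, and the ideal solution $\bx^{\mathrm{ideal}}(t)=\argmin\{\|\bb-A\bx\|_2:\ \bx\ge0,\ \mathrm{supp}(\bx)\subseteq\Gamma(\bv^{\mathrm{ideal}}(t))\}$, i.e.\ the projection of $\bb$ onto the cone spanned by the currently active columns of $A$. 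Lemmas~\ref{lem:ideal partition}, \ref{lem:ideal SNN unchaged}, \ref{lem:ideal SNN dynamics}, \ref{lem:ideal solution properties}, \ref{lemma:idealalgorithm-primal-nondecreasing} and \ref{lem:strict improvement} carry over, since their proofs use only the niceness of $A$ and never the two-sided structure; this also fixes the polynomial $\alpha(\cdot)$ and the parameter $\tau$.

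Second, I would prove the quantitative residual bound $\|\bb-A\bx^{\mathrm{ideal}}(t)\|_2^2-(\OPT^{\mathrm{NNLS}})^2\le O(\poly(A,n)/t)$, where $\OPT^{\mathrm{NNLS}}=\min_{\bx\ge0}\|\bb-A\bx\|_2$. By Lemma~\ref{lemma:idealalgorithm-primal-nondecreasing}, $\|\bb-A\bx^{\mathrm{ideal}}(t)\|_2$ is non-increasing and bounded below by $\OPT^{\mathrm{NNLS}}$ (feasibility of $\bx^{\mathrm{ideal}}(t)$), while $\|A\bx^{\mathrm{ideal}}(t)\|_2$ is non-decreasing. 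Taking the potential $\Phi(t)=\bb^\top\bigl(\bv^{\mathrm{ideal}}(t)+\sum_{d\in[m-1]}\bv^{\mathrm{auxiliary}}_d(t)\bigr)$, Lemma~\ref{lem:strict improvement} together with Lemma~\ref{lem:ideal solution properties} gives $\tfrac{d}{dt}\Phi(t)\ge\bb^\top A\bx^{\mathrm{ideal}}(t)=\|A\bx^{\mathrm{ideal}}(t)\|_2^2$, and then — exactly as in the proof of Lemma~\ref{lemma:idealalgorithm-l2bound}, using $\|A\bx^{\mathrm{ideal}}(t)\|_2^2=\|\bb\|_2^2-\|\bb-A\bx^{\mathrm{ideal}}(t)\|_2^2$ — if the residual stayed bounded away from $\OPT^{\mathrm{NNLS}}$ for too long, $\Phi(t)$ would grow past the largest value it can attain.

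Third, I would conclude. Because $A\bx^{\mathrm{ideal}}(t)\in\cone(A)$ and $A\bx^*$ is the unique projection of $\bb$ onto $\cone(A)$, we have $(A\bx^{\mathrm{ideal}}(t)-A\bx^*)^\top(\bb-A\bx^*)\le0$, hence by Pythagoras $\|A\bx^{\mathrm{ideal}}(t)-A\bx^*\|_2^2\le\|\bb-A\bx^{\mathrm{ideal}}(t)\|_2^2-\|\bb-A\bx^*\|_2^2\le O(\poly(A,n)/t)$ — which also automatically absorbs the non-uniqueness of $\bx^*$, since $A\bx^*$ is unique. Finally I would transfer from $\bx^{\mathrm{ideal}}(t)$ to the firing rate $\bx(t)$ exactly as at the end of the proof of Theorem~\ref{thm:l1}: split $\bx(t)=\tfrac{t_0}{t}\bx^{[0,t_0]}+\tfrac{t-t_0}{t}\bx^{(t_0,t]}$ for a suitable $t_0$, note that the tail term is within $\tfrac{\epsilon}{2}\|\bb\|_2$ of $A\bx^*$ in image by the previous step while the bounded head term contributes $O(t_0/t)$, and check that $t\ge\sqrt{\lambda_{\max}\,n}/(\epsilon\,\lambda_{\min}\,\|\bb\|_2)$ makes the total at most $\epsilon\|\bb\|_2$; the $\lambda_{\max},\lambda_{\min}$ factors arise from passing between $\|A\bz\|_2$ and $\|\bz\|_2$ via $\lambda_{\min}\|\bz\|_2\le\|A\bz\|_2\le\sqrt{\lambda_{\max}}\,\|\bz\|_2$ on the column space.

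The step I expect to be the main obstacle is the upper bound on $\Phi(t)$ inside the second part. In the $\ell_1$ case the bound $\bb^\top\bv^{\mathrm{ideal}}(t)\le\OPT^{\ell_1}$ came for free from weak LP duality, as the ideal SNN stays inside the dual-feasible polytope; here the dual SNN is unbounded — eventually $\bv^{\mathrm{ideal}}(t)$ drifts to infinity along the direction $\bb-A\bx^*$, which lies in the polar cone $\cone(A)^\circ$ and therefore never triggers a spike. So I would first show that $\bv^{\mathrm{ideal}}(t)-t(\bb-A\bx^*)$, together with the analogous recentering of each auxiliary SNN, stays inside a bounded, $t$-independent region determined by the niceness of $A$, and then run the contradiction argument with this recentred potential. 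Everything else is either verbatim from Section~\ref{sec:SNN-simple-l1min} or elementary.
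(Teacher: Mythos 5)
Your route is genuinely different from the paper's, and the step you yourself flag as the main obstacle is exactly where it breaks. The paper does not rerun the ideal-coupling/auxiliary-SNN/potential machinery for Theorem~\ref{thm:quadratic program}. Instead it recenters the \emph{raw} dual SNN: writing $\bb^+=A\bx^*$ for the conic projection of $\bb$, it sets $\bv^+(t)=\bv(t)-t(\bb-\bb^+)$, notes the identity $\bv^+(t)=t\left(\bb^+-A\bx(t)\right)$ for the \emph{actual} firing rate, and shows by induction (Lemma~\ref{lem:quadratic dual bounded}, using Lemma~\ref{lem:ideal SNN unchaged} for the spike step, together with $\bb^+\in\cone(A)$ and the fact that spikes fire only when $A_i^\top\bv(t)>1$) that $\bv^+(t)$ never leaves the bounded region $\{\bv:\ A^\top\bv\preceq\mathbf{1}\}\cap\cone(A)$. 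This immediately gives $\|A\bx(t)-\bb^+\|_2\leq\sqrt{\lambda_{\max}\cdot n}/(\lambda_{\min}\cdot t)$, which is the theorem at the stated threshold. Your recentering instinct ($\bb-\bb^+$ lies in the polar cone, so the drift along it never triggers spikes) is the correct insight, but applied directly to $\bv(t)$ it finishes the proof in one stroke, with the true $1/t$ rate, no potential function, and no detour through $\bx^{\mathrm{ideal}}$.

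Concretely, three gaps in your version. First, the recentred potential argument does not close as sketched: Lemma~\ref{lem:strict improvement} gives $\frac{d}{dt}\Phi(t)\geq\|A\bx^{\mathrm{ideal}}(t)\|_2^2$, but subtracting the drift $m\,t\,\bb^\top(\bb-\bb^+)=m\,t\,\|\bb-\bb^+\|_2^2$ leaves the lower bound $\|A\bx^{\mathrm{ideal}}(t)\|_2^2-m\|\bb-\bb^+\|_2^2$, which can be negative whenever $\bb$ is far from $\cone(A)$; moreover boundedness of the recentred auxiliary SNNs is essentially equivalent to the convergence you are trying to prove, since their recentred velocity is $\bb^+-A\bx^{\mathrm{ideal}}(t)$ during active phases and $-(\bb-\bb^+)$ during idle ones. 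Second, even if patched, this kind of potential argument bounds the \emph{squared} residual by $O(\poly/t)$, i.e. $\|A\bx^{\mathrm{ideal}}(t)-\bb^+\|_2=O(1/\sqrt{t})$, forcing $t=\Omega(1/\epsilon^2)$ and therefore never reaching the theorem's threshold $t\geq\sqrt{\lambda_{\max}\cdot n}/(\epsilon\cdot\lambda_{\min}\cdot\|\bb\|_2)$. Third, your final step silently assumes that the windowed firing rate satisfies $A\bx^{(t_0,t]}\approx A\bx^{\mathrm{ideal}}$; none of the lemmas you import relates the actual firing rate to the ideal solution in this $\ell_2$/image sense, whereas the paper's argument never needs such a transfer because $\bv^+(t)/t$ \emph{is} the image residual of the true firing rate.
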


The proof follows from similar idea of ideal coupling. With the dual SNN view from Section~\ref{sec:dual SNN}, we can use Lemma~\ref{lem:ideal SNN unchaged} to control the behavior of dual SNN and thus have a good control on its dynamics..

\begin{proof}[Proof of Theorem~\ref{thm:quadratic program}]
	Given $A\in\R^{m\times n}$ and $\bb\in\R^m$, we first define the \textit{conic projection} of $\bb$ on the cone spanned by the column of $A$ as follows.
	\begin{equation*}
	\bx^+ = \argmin_{\bx\in\R^n_{\geq0}} \|A\bx-\bb\|_2^2.
	\end{equation*}
	Here, $\bx^+$ is the optimal solution of~\eqref{op:quadratic program proof} and we let $\bb^+=A\bx^+$ which is the conic projection of $\bb$ on the cone spanned by the column of $A$. Note that $\bx^+$ is also the optimal solution of the following optimization program with minimum value to be $0$.
	\begin{equation}\label{op:quadratic shift}
	\min_{\bx}\|A\bx-\bb^+\|_2^2.
	\end{equation}
	Given a simple SNN with $C=\ATA$ and $\bI=A^\top\bb$, the dual SNN as defined in Section~\ref{sec:dual SNN} would be
	\begin{equation*}
	\bv(t) = t\cdot\left(\bb-A\bx(t)\right).
	\end{equation*}
	Define $\bv^+(t)=\bv(t)-t\cdot\left(\bb-\bb^+\right)=t\cdot\left(\bb^+-A\bx(t)\right)$. It turns out that $\|\frac{\bv^+(t)}{t}\|_2^2=\|\bb^+-A\bx(t)\|_2^2$ is the residual error of $\bx(t)$ in solving~\eqref{op:quadratic shift}. That is, to prove Theorem~\ref{thm:quadratic program}, it suffices to show that $\|\bv^+(t)\|_2^2$ converges to $0$. We put this into the lemma below.
	\begin{lemma}\label{lem:quadratic dual bounded}
		With the conditions stated in Theorem~\ref{thm:quadratic program informal}, for any $t\geq0$, we have
		$$
		\bv^+(t)\in\left\{\bv\in\R^m:\ A_i^\top\bv\leq1,\ \forall i\in[n]\right\}\cap\left\{\bv=\sum_{i\in[n]}\alpha_iA_i:\ \alpha_i\geq0,\ \forall i\in[n] \right\}.
		$$
		Especially, we have $\|\bv^+(t)\|_2\leq\frac{\sqrt{\lambda_{\max}\cdot n}}{\lambda_{\min}}$.
	\end{lemma}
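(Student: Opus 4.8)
The plan is to reduce the lemma to the geometric containment statement and then read off the norm bound. First I would record three facts. (1) By construction $\bv^+(t)=t(\bb^+-A\bx(t))=A\big(t(\bx^+-\bx(t))\big)$, so $\bv^+(t)$ always lies in the column space of $A$; and its dynamics is the SNN dynamics driven by the \emph{shifted} charging $\bb^+$ together with the \emph{same} spike train $\bs(t)$ as the original SNN, i.e. $\bv^+(t+dt)=\bv^+(t)-\alpha A\bs(t)+\bb^+\,dt$, where neuron $i$ fires iff $A_i^\top\bv(t)>1$. (2) Writing $\Lambda:=\operatorname{supp}(\bx^+)$, the KKT conditions of the conic projection $\bb^+=A\bx^+$ give $A^\top(\bb^+-\bb)\succeq 0$ with equality on $\Lambda$, hence for every $i$ and $t$, $A_i^\top\bv^+(t)=A_i^\top\bv(t)+t\,A_i^\top(\bb^+-\bb)\ge A_i^\top\bv(t)$; in particular, if neuron $i$ fires at time $t$ then $A_i^\top\bv^+(t)\ge A_i^\top\bv(t)>1$. (3) Since $\bb^+=\sum_{j\in\Lambda}\bx^+_j A_j\in\operatorname{span}(A_\Lambda)$, the ``ideal drift'' $\bb^+-\Pi_{A_\Gamma}\bb^+$ vanishes once the active wall set $\Gamma$ contains $\Lambda$. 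Given these, the lemma is equivalent to showing (a) $A_i^\top\bv^+(t)\le1$ for all $i\in[n]$ and (b) $\bv^+(t)\in\cone(A)$, after which the norm bound is a bounded-polytope computation.

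For the containment I would run the ideal-coupling machinery of Section~\ref{sec:ideal coupling} directly on $\bv^+(t)$, with respect to the one-sided polytope $\mathcal P=\{\bv: A_i^\top\bv\le 1,\ \forall i\in[n]\}$ and its $\tau$-shrinkage, associating to $\bv^+(t)$ an ideal process $\bv^{+,\mathrm{ideal}}(t)\in\mathcal P_{A,1-\tau}$. The crucial observation is that the only feature of the spike train used in the proof of Lemma~\ref{lem:ideal SNN unchaged} is that a neuron $i$ fires only when the dual vector is strictly past wall $W_i$; by fact (2) this survives verbatim for $\bv^+$. Consequently the distance–contraction estimate of that proof ($\|\bv^+(t)-\alpha A\bs(t)-\bv^{+,\mathrm{ideal}}(t)\|_2\le\|\bv^+(t)-\bv^{+,\mathrm{ideal}}(t)\|_2$) and the ``the coordinate $\bz_i$ of a firing neuron exceeds $\alpha$'' claim go through with the same choices of $\alpha$ and $\tau$, so $\bv^+(t)$ stays in the cell $S_{\bv^{+,\mathrm{ideal}}(t)}$, which forces $A_i^\top\bv^+(t)\le 1$ for every active $i$ automatically and keeps $\bv^+(t)$ within the niceness-controlled diameter of $\bv^{+,\mathrm{ideal}}(t)$. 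By fact (3) and Lemma~\ref{lem:ideal SNN dynamics} the ideal process freezes at a point $\bv^{+,\ast}$ whose active set contains $\Lambda$ (indeed, for NNLS its active set never escapes $\Lambda$), and since $\bv^{+,\ast}\in\mathcal P_{A,1-\tau}$ one gets $A_i^\top\bv^{+,\ast}\le 1-\tau$ for \emph{all} $i$; combined with the diameter bound this yields $A_i^\top\bv^+(t)\le 1$ for the remaining (non-active) $i$ once $\tau$ is taken small in terms of the niceness of $A$. The usual circularity — the partition is only defined on $\mathcal P$ — I would dissolve with a ``first time of violation'' argument: up to the first instant at which $\bv^+$ would leave $\mathcal P$ all the above estimates are valid and they already give $A_i^\top\bv^+<1$ strictly, so no violation ever occurs. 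Cone membership (b) I would extract from the same coupling: $\bv^+(t)=\bv^{+,\mathrm{ideal}}(t)+A_{\Gamma(\bv^{+,\mathrm{ideal}}(t))}\bz$ with $\bz\ge 0$ and $\Gamma(\bv^{+,\mathrm{ideal}}(t))\subseteq[n]$, while the ideal trajectory, being an integral of drifts emanating from $\bb^+\in\cone(A_\Lambda)$, stays in $\cone(A)$; equivalently, (a) shows that no neuron $i\notin\Lambda$ ever fires, so $t(\bx^+-\bx(t))$ is a non-negative representation of $\bv^+(t)$.

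With the containment in hand the norm bound is routine convex geometry. Using Carathéodory for cones, write $\bv^+(t)=A_S\bbeta$ with $\bbeta>0$ and $\{A_i\}_{i\in S}$ linearly independent, $|S|\le m$. The wall inequalities give $\|\bv^+(t)\|_2^2=\sum_{i\in S}\bbeta_i\,(A_i^\top\bv^+(t))\le\|\bbeta\|_1\le\sqrt m\,\|\bbeta\|_2$, and $\|\bbeta\|_2=\|A_S^{\dagger}\bv^+(t)\|_2\le\sigma_{\min}(A_S)^{-1}\|\bv^+(t)\|_2$, where the $\gamma$-non-degeneracy from Definition~\ref{def:nice} bounds $\sigma_{\min}(A_S)$ below (equivalently, lower-bounds $\lambda_{\min}$ of the relevant Gram submatrix). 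Cancelling one factor of $\|\bv^+(t)\|_2$ and using $\|A_i\|_2^2\le\lambda_{\max}$, $|S|\le n$ gives $\|\bv^+(t)\|_2\le\sqrt{\lambda_{\max}\,n}/\lambda_{\min}$ (the parameters here are not optimized).

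The hard part is the containment, and within it two points: verifying rigorously that the estimates behind Lemma~\ref{lem:ideal SNN unchaged} transfer to $\bv^+$ when the spike train is inherited from $\bv$ (which rests entirely on fact (2)), and pinning down the non-active walls of the frozen ideal point $\bv^{+,\ast}$ so that the diameter slack does not push $A_i^\top\bv^+$ above $1$ — this is where the choice of $\tau$ and the niceness of $A$ do the work, and it is the step I expect to be the most delicate; the cone-membership claim needs a short separate argument of the same flavour. The norm bound, once the containment set is fixed, is essentially a one-line computation.
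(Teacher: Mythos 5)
Your overall route is the same as the paper's: the paper proves this lemma by a short induction over time, writing $\bv^+(t+dt)=\bv^+(t)-\alpha A\bs(t)+\bb^+dt$, invoking Lemma~\ref{lem:ideal SNN unchaged} to argue that the inherited spikes keep $\bv^+$ inside $\{\bv:\ A_i^\top\bv\leq1,\ \forall i\in[n]\}\cap\{\sum_i\alpha_iA_i:\ \alpha_i\geq0\}$, using $\bb^+\in\cone(A)$ to handle the drift term, and then reading off the norm bound by comparing with the largest $\ell_2$ norm in the dual polytope (restricted to the range of $A$, cf.\ Lemma~\ref{lemma:proofofidealalgorithm-vbound}). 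Your fact (2) --- the KKT conditions of the conic projection give $A_i^\top(\bb^+-\bb)\geq0$, hence $A_i^\top\bv^+(t)\geq A_i^\top\bv(t)$, so every neuron that fires is past its wall for $\bv^+$ as well --- is correct and is precisely what is needed to transfer the ``firing coordinate exceeds $\alpha$'' estimates of Claim~\ref{claim:ideal SNN stay after fire coordinate} to the shifted trajectory; the paper leaves this bridge implicit, so making it explicit is a genuine improvement in rigor. Your Carath\'eodory computation for the norm bound is also fine in spirit (with unoptimized constants), merely different from the paper's comparison with the dual polytope.

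The gap is in your treatment of the walls that are not active, i.e.\ the constraints with strict dual slack $A_i^\top(\bb^+-\bb)>0$. For these walls fact (2) cuts the wrong way: since $A_i^\top\bv^+(t)\geq A_i^\top\bv(t)$, the shifted trajectory reaches wall $i$ \emph{no later} than $\bv$ does, so it can approach or cross that wall while $\bv$ is still strictly inside and no spike of neuron $i$ is triggered; the inherited spike train simply does not guard these walls, and keeping $A_i^\top\bv^+(t)\leq1$ there is really the heart of the lemma (it encodes the $O(1/t)$ convergence of $\|\bb^+-A\bx(t)\|_2$). Your proposed patch does not supply this: (i) the estimate ``$A_i^\top\bv^{+,\ast}\leq1-\tau$ plus a diameter slack of order $\tau\cdot n\cdot\lambda_{\max}$, then take $\tau$ small'' cannot yield $A_i^\top\bv^+(t)\leq1$, because the available slack and the incurred error are both linear in $\tau$, with the error carrying a factor that is in general at least $1$; (ii) the claims that the ideal active set ``never escapes $\Lambda$'' and that the ideal process freezes at a point whose active set contains $\Lambda$ are asserted, not proved (neurons outside $\Lambda$ can fire during the transient), and in any case a freezing argument says nothing about small $t$, whereas the containment must hold for every $t\geq0$. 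To be fair, the paper's own proof is extremely terse at exactly this point --- it invokes Lemma~\ref{lem:ideal SNN unchaged} inside the induction without separately addressing the unguarded walls --- but your write-up, which explicitly flags this as the delicate step, does not close it either.
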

	
	From Lemma~\ref{lem:quadratic dual bounded}, we have $\|\bb^+-A\bx(t)\|_2\leq\frac{\sqrt{\lambda_{\max}\cdot n}}{\lambda_{\min}\cdot t}$. Let $\bx^*$ be the optimal solution of~\eqref{op:quadratic program proof} (which is also the optimal solution of~\eqref{op:quadratic shift} as we argued before), we have $\|\bb^+-A\bx^*\|_2=0$. By triangle inequality, we have $\|A\bx(t)-A\bx^*\|_2\leq\frac{\sqrt{\lambda_{\max}\cdot n}}{\lambda_{\min}\cdot t}$. When $t\geq\frac{\sqrt{\lambda_{\max}\cdot n}}{\epsilon\cdot\lambda_{\min}\cdot\|\bb\|_2}$, we have $\|A\bx-A\bx^*\|_2\leq\epsilon\cdot\|\bb\|_2$.
	This completes the proof of Theorem~\ref{thm:quadratic program}.
\end{proof}

\subsection{Proof of Lemma~\ref{lem:quadratic dual bounded}}
\begin{proof}[Proof of Lemma~\ref{lem:quadratic dual bounded}]
	The proof is based on induction on $t\geq0$. For the base case where $t=0$, the lemma is trivially true. Suppose the lemma holds for some $t\geq0$, consider $t+dt$. Note that
	$$
	\bv^+(t+dt) = \bv^+(t) - \alpha A\bs(t) + \bb^+dt.
	$$
	By Lemma~\ref{lem:ideal SNN unchaged}, we have
	$$
	\bv^+(t) - \alpha A\bs(t)\in\left\{\bv\in\R^m:\ A_i^\top\bv\leq1,\ \forall i\in[n]\right\}\cap\left\{\bv=\sum_{i\in[n]}\alpha_iA_i:\ \alpha_i\geq0,\ \forall i\in[n] \right\}
	$$
	from the induction hypothesis. As $\bb^+\in\left\{\bv=\sum_{i\in[n]}\alpha_iA_i:\ \alpha_i\geq0,\ \forall i\in[n] \right\}$ and $A_i^\top\left(\bv^+(t) - \alpha A\bs(t)\right)<1$ due to the spiking rule, we have
	$$
	\bv^+(t+dt)\in\left\{\bv\in\R^m:\ A_i^\top\bv\leq1,\ \forall i\in[n]\right\}\cap\left\{\bv=\sum_{i\in[n]}\alpha_iA_i:\ \alpha_i\geq0,\ \forall i\in[n] \right\}.
	$$
	Note that the largest $\ell_2$ norm in the above intersection is at most the largest $\ell_2$ norm in the dual polytope $\{\bv:\ \|A^\top\bv\|_\infty\leq1\}$. Thus, $\|\bv^+(t)\|_2\leq\frac{\sqrt{\lambda_{\max}\cdot n}}{\lambda_{\min}}$.
\end{proof}

\subparagraph*{Acknowledgements.}
The authors would like to thank Tsung-Han Lin, Zhenming Liu, Luca  Trevisan, Richard Peng, Yin-Hsun Huang, and Tao Xiao for useful discussions related to this paper. We are also thankful to the anonymous reviewer from ITCS 2019 for various useful comments and pointing out the inverse quasi-polynomial/exponential upper bound for the $\gamma$ of matrix sampled from RSM.

\bibliographystyle{alpha}
\bibliography{mybib}

\newcommand{\etalchar}[1]{$^{#1}$}
\begin{thebibliography}{FTHVVB03}

\bibitem[Abe91]{abeles1991corticonics}
Moshe Abeles.
\newblock {\em Corticonics: Neural circuits of the cerebral cortex}.
\newblock Cambridge University Press, 1991.

\bibitem[AS94]{allen1994evaluation}
Christina Allen and Charles~F Stevens.
\newblock An evaluation of causes for unreliability of synaptic transmission.
\newblock {\em Proceedings of the National Academy of Sciences},
  91(22):10380--10383, 1994.

\bibitem[Ban16]{banerjee2016learning}
Arunava Banerjee.
\newblock Learning precise spike train--to--spike train transformations in
  multilayer feedforward neuronal networks.
\newblock {\em Neural computation}, 28(5):826--848, 2016.

\bibitem[BBNM11]{buesing2011neural}
Lars Buesing, Johannes Bill, Bernhard Nessler, and Wolfgang Maass.
\newblock Neural dynamics as sampling: a model for stochastic computation in
  recurrent networks of spiking neurons.
\newblock {\em PLoS Comput Biol}, 7(11):e1002211, 2011.

\bibitem[BDM13]{barrett2013firing}
David~G Barrett, Sophie Den{\`e}ve, and Christian~K Machens.
\newblock Firing rate predictions in optimal balanced networks.
\newblock In {\em Advances in Neural Information Processing Systems}, pages
  1538--1546, 2013.

\bibitem[BIP15]{binas2015spiking}
Jonathan Binas, Giacomo Indiveri, and Michael Pfeiffer.
\newblock Spiking analog vlsi neuron assemblies as constraint satisfaction
  problem solvers.
\newblock {\em arXiv preprint arXiv:1511.00540}, 2015.

\bibitem[BL03]{brunel2003firing}
Nicolas Brunel and Peter~E Latham.
\newblock Firing rate of the noisy quadratic integrate-and-fire neuron.
\newblock {\em Neural Computation}, 15(10):2281--2306, 2003.

\bibitem[BMF{\etalchar{+}}17]{bengio2017stdp}
Yoshua Bengio, Thomas Mesnard, Asja Fischer, Saizheng Zhang, and Yuhuai Wu.
\newblock Stdp-compatible approximation of backpropagation in an energy-based
  model.
\newblock {\em Neural computation}, 29(3):555--577, 2017.

\bibitem[BMV12]{bonifaci2012physarum}
Vincenzo Bonifaci, Kurt Mehlhorn, and Girish Varma.
\newblock Physarum can compute shortest paths.
\newblock {\em Journal of Theoretical Biology}, 309:121--133, 2012.

\bibitem[BPLG16]{biswas2016simple}
Anmol Biswas, Sidharth Prasad, Sandip Lashkare, and Udayan Ganguly.
\newblock A simple and efficient snn and its performance \& robustness
  evaluation method to enable hardware implementation.
\newblock {\em arXiv preprint arXiv:1612.02233}, 2016.

\bibitem[BRVSW91]{bialek1991reading}
William Bialek, Fred Rieke, RR~De~Ruyter Van~Steveninck, and David Warland.
\newblock Reading a neural code.
\newblock {\em Science}, 252(5014):1854--1857, 1991.

\bibitem[BS98]{bonnans1998optimization}
J~Fr{\'e}d{\'e}ric Bonnans and Alexander Shapiro.
\newblock Optimization problems with perturbations: A guided tour.
\newblock {\em SIAM review}, 40(2):228--264, 1998.

\bibitem[BT09]{beck2009fast}
Amir Beck and Marc Teboulle.
\newblock A fast iterative shrinkage-thresholding algorithm for linear inverse
  problems.
\newblock {\em SIAM journal on imaging sciences}, 2(1):183--202, 2009.

\bibitem[BtN05]{booij2005gradient}
Olaf Booij and Hieu tat Nguyen.
\newblock A gradient descent rule for spiking neurons emitting multiple spikes.
\newblock {\em Information Processing Letters}, 95(6):552--558, 2005.

\bibitem[BV04]{boyd2004convex}
Stephen Boyd and Lieven Vandenberghe.
\newblock {\em Convex optimization}.
\newblock Cambridge university press, 2004.

\bibitem[CDS01]{chen2001atomic}
Scott~Shaobing Chen, David~L Donoho, and Michael~A Saunders.
\newblock Atomic decomposition by basis pursuit.
\newblock {\em SIAM review}, 43(1):129--159, 2001.

\bibitem[Cha09]{chazelle2009natural}
Bernard Chazelle.
\newblock Natural algorithms.
\newblock In {\em Proceedings of the twentieth Annual ACM-SIAM Symposium on
  Discrete Algorithms}, pages 422--431. Society for Industrial and Applied
  Mathematics, 2009.

\bibitem[Cha12]{chazelle2012natural}
Bernard Chazelle.
\newblock Natural algorithms and influence systems.
\newblock {\em Communications of the ACM}, 55(12):101--110, 2012.

\bibitem[DC15]{diehl2015unsupervised}
Peter~U Diehl and Matthew Cook.
\newblock Unsupervised learning of digit recognition using
  spike-timing-dependent plasticity.
\newblock {\em Frontiers in computational neuroscience}, 9:99, 2015.

\bibitem[Fit61]{fitzhugh1961impulses}
Richard FitzHugh.
\newblock Impulses and physiological states in theoretical models of nerve
  membrane.
\newblock {\em Biophysical journal}, 1(6):445--466, 1961.

\bibitem[FSW08]{faisal2008noise}
A~Aldo Faisal, Luc~PJ Selen, and Daniel~M Wolpert.
\newblock Noise in the nervous system.
\newblock {\em Nature reviews neuroscience}, 9(4):292--303, 2008.

\bibitem[FTHVVB03]{fourcaud2003spike}
Nicolas Fourcaud-Trocm{\'e}, David Hansel, Carl Van~Vreeswijk, and Nicolas
  Brunel.
\newblock How spike generation mechanisms determine the neuronal response to
  fluctuating inputs.
\newblock {\em Journal of Neuroscience}, 23(37):11628--11640, 2003.

\bibitem[Ger95]{gerstner1995time}
Wulfram Gerstner.
\newblock Time structure of the activity in neural network models.
\newblock {\em Physical review E}, 51(1):738, 1995.

\bibitem[GM08]{gollisch2008rapid}
Tim Gollisch and Markus Meister.
\newblock Rapid neural coding in the retina with relative spike latencies.
\newblock {\em science}, 319(5866):1108--1111, 2008.

\bibitem[Hei91]{heiligenberg1991neural}
Walter Heiligenberg.
\newblock {\em Neural nets in electric fish}.
\newblock MIT press Cambridge, MA, 1991.

\bibitem[HH52]{hodgkin1952quantitative}
Alan~L Hodgkin and Andrew~F Huxley.
\newblock A quantitative description of membrane current and its application to
  conduction and excitation in nerve.
\newblock {\em The Journal of physiology}, 117(4):500, 1952.

\bibitem[Hop95]{hopfield1995pattern}
John~J Hopfield.
\newblock Pattern recognition computation using action potential timing for
  stimulus representation.
\newblock {\em Nature}, 376(6535):33, 1995.

\bibitem[HR84]{hindmarsh1984model}
James~L Hindmarsh and RM~Rose.
\newblock A model of neuronal bursting using three coupled first order
  differential equations.
\newblock {\em Proc. R. Soc. Lond. B}, 221(1222):87--102, 1984.

\bibitem[I{\etalchar{+}}03]{izhikevich2003simple}
Eugene~M Izhikevich et~al.
\newblock Simple model of spiking neurons.
\newblock {\em IEEE Transactions on neural networks}, 14(6):1569--1572, 2003.

\bibitem[JHM14]{jonke2014theoretical}
Zeno Jonke, Stefan Habenschuss, and Wolfgang Maass.
\newblock A theoretical basis for efficient computations with noisy spiking
  neurons.
\newblock {\em arXiv preprint arXiv:1412.5862}, 2014.

\bibitem[JHM16]{jonke2016solving}
Zeno Jonke, Stefan Habenschuss, and Wolfgang Maass.
\newblock Solving constraint satisfaction problems with networks of spiking
  neurons.
\newblock {\em Frontiers in neuroscience}, 10, 2016.

\bibitem[KGH97]{kistler1997reduction}
Werner~M Kistler, Wulfram Gerstner, and J~Leo~van Hemmen.
\newblock Reduction of the hodgkin-huxley equations to a single-variable
  threshold model.
\newblock {\em Neural computation}, 9(5):1015--1045, 1997.

\bibitem[KGM16]{kheradpisheh2016bio}
Saeed~Reza Kheradpisheh, Mohammad Ganjtabesh, and Timoth{\'e}e Masquelier.
\newblock Bio-inspired unsupervised learning of visual features leads to robust
  invariant object recognition.
\newblock {\em Neurocomputing}, 205:382--392, 2016.

\bibitem[KS93]{kuwabara1993delay}
Nobuyuki Kuwabara and Nobuo Suga.
\newblock Delay lines and amplitude selectivity are created in subthalamic
  auditory nuclei: the brachium of the inferior colliculus of the mustached
  bat.
\newblock {\em Journal of neurophysiology}, 69(5):1713--1724, 1993.

\bibitem[Lap07]{lapicque1907recherches}
Louis Lapicque.
\newblock Recherches quantitatives sur l’excitation {\'e}lectrique des nerfs
  trait{\'e}e comme une polarisation.
\newblock {\em J. Physiol. Pathol. Gen}, 9(1):620--635, 1907.

\bibitem[LM18]{LM18}
Nancy Lynch and Cameron Musco.
\newblock A basic compositional model for spiking neural networks.
\newblock {\em arXiv preprint arXiv:1808.03884}, 2018.

\bibitem[LMP17a]{LMP17BDA}
Nancy Lynch, Cameron Musco, and Merav Parter.
\newblock Spiking neural networks: An algorithmic perspective.
\newblock In {\em Workshop on Biological Distributed Algorithms (BDA), July
  28th, 2017, Washington DC, USA}, 2017.

\bibitem[LMP17b]{LMP17ITCS}
Nancy~A. Lynch, Cameron Musco, and Merav Parter.
\newblock Computational tradeoffs in biological neural networks:
  Self-stabilizing winner-take-all networks.
\newblock In {\em 8th Innovations in Theoretical Computer Science Conference,
  {ITCS} 2017, January 9-11, 2017, Berkeley, CA, {USA}}, pages 15:1--15:44,
  2017.

\bibitem[LMP17c]{LMP17DISC}
Nancy~A. Lynch, Cameron Musco, and Merav Parter.
\newblock Neuro-ram unit with applications to similarity testing and
  compression in spiking neural networks.
\newblock In {\em 31st International Symposium on Distributed Computing, {DISC}
  2017, October 16-20, 2017, Vienna, Austria}, pages 33:1--33:16, 2017.

\bibitem[LP16]{livnat2016sex}
Adi Livnat and Christos Papadimitriou.
\newblock Sex as an algorithm: the theory of evolution under the lens of
  computation.
\newblock {\em Communications of the ACM}, 59(11):84--93, 2016.

\bibitem[LPR{\etalchar{+}}14]{livnat2014satisfiability}
Adi Livnat, Christos Papadimitriou, Aviad Rubinstein, Gregory Valiant, and
  Andrew Wan.
\newblock Satisfiability and evolution.
\newblock In {\em Foundations of Computer Science (FOCS), 2014 IEEE 55th Annual
  Symposium on}, pages 524--530. IEEE, 2014.

\bibitem[LT18]{lin2018dictionary}
Tsung-Han Lin and Ping Tak~Peter Tang.
\newblock Dictionary learning by dynamical neural networks.
\newblock {\em arXiv preprint arXiv:1805.08952}, 2018.

\bibitem[Maa96]{maass1996lower}
Wolfgang Maass.
\newblock Lower bounds for the computational power of networks of spiking
  neurons.
\newblock {\em Neural computation}, 8(1):1--40, 1996.

\bibitem[Maa97a]{maass1997fast}
Wolfgang Maass.
\newblock Fast sigmoidal networks via spiking neurons.
\newblock {\em Neural Computation}, 9(2):279--304, 1997.

\bibitem[Maa97b]{maass1997networks}
Wolfgang Maass.
\newblock Networks of spiking neurons: the third generation of neural network
  models.
\newblock {\em Neural networks}, 10(9):1659--1671, 1997.

\bibitem[Maa99]{maass19992}
Wolfgang Maass.
\newblock Computing with spiking neurons.
\newblock {\em Pulsed neural networks}, 85, 1999.

\bibitem[Maa15]{maass2015spike}
Wolfgang Maass.
\newblock To spike or not to spike: That is the question.
\newblock {\em Proceedings of the IEEE}, 103(12):2219--2224, 2015.

\bibitem[MB01]{maass2001pulsed}
Wolfgang Maass and Christopher~M Bishop.
\newblock {\em Pulsed neural networks}.
\newblock MIT press, 2001.

\bibitem[ML81]{morris1981voltage}
Catherine Morris and Harold Lecar.
\newblock Voltage oscillations in the barnacle giant muscle fiber.
\newblock {\em Biophysical journal}, 35(1):193--213, 1981.

\bibitem[MMI15]{mostafa2015event}
Hesham Mostafa, Lorenz~K M{\"u}ller, and Giacomo Indiveri.
\newblock An event-based architecture for solving constraint satisfaction
  problems.
\newblock {\em Nature communications}, 6, 2015.

\bibitem[NYT00]{nakagaki2000intelligence}
Toshiyuki Nakagaki, Hiroyasu Yamada, and {\'A}gota T{\'o}th.
\newblock Intelligence: Maze-solving by an amoeboid organism.
\newblock {\em Nature}, 407(6803):470--470, 2000.

\bibitem[OF96]{olshausen1996emergence}
Bruno~A Olshausen and David~J Field.
\newblock Emergence of simple-cell receptive field properties by learning a
  sparse code for natural images.
\newblock {\em Nature}, 381(6583):607, 1996.

\bibitem[PMB12]{paugam2012computing}
H{\'e}lene Paugam-Moisy and Sander Bohte.
\newblock Computing with spiking neuron networks.
\newblock In {\em Handbook of natural computing}, pages 335--376. Springer,
  2012.

\bibitem[RJBO08]{rozell2008sparse}
Christopher~J Rozell, Don~H Johnson, Richard~G Baraniuk, and Bruno~A Olshausen.
\newblock Sparse coding via thresholding and local competition in neural
  circuits.
\newblock {\em Neural computation}, 20(10):2526--2563, 2008.

\bibitem[RT01]{rullen2001rate}
Rufin~Van Rullen and Simon~J Thorpe.
\newblock Rate coding versus temporal order coding: what the retinal ganglion
  cells tell the visual cortex.
\newblock {\em Neural computation}, 13(6):1255--1283, 2001.

\bibitem[RW99]{rieke1999spikes}
Fred Rieke and David Warland.
\newblock {\em Spikes: exploring the neural code}.
\newblock MIT press, 1999.

\bibitem[SN94]{shadlen1994noise}
Michael~N Shadlen and William~T Newsome.
\newblock Noise, neural codes and cortical organization.
\newblock {\em Current opinion in neurobiology}, 4(4):569--579, 1994.

\bibitem[SRH13]{shapero2013configurable}
Samuel Shapero, Christopher Rozell, and Paul Hasler.
\newblock Configurable hardware integrate and fire neurons for sparse
  approximation.
\newblock {\em Neural Networks}, 45:134--143, 2013.

\bibitem[SS17]{shrestha2017robust}
Sumit~Bam Shrestha and Qing Song.
\newblock Robust learning in spikeprop.
\newblock {\em Neural Networks}, 86:54--68, 2017.

\bibitem[Ste65]{stein1965theoretical}
Richard~B Stein.
\newblock A theoretical analysis of neuronal variability.
\newblock {\em Biophysical Journal}, 5(2):173, 1965.

\bibitem[SZHR14]{shapero2014optimal}
Samuel Shapero, Mengchen Zhu, Jennifer Hasler, and Christopher Rozell.
\newblock Optimal sparse approximation with integrate and fire neurons.
\newblock {\em International journal of neural systems}, 24(05):1440001, 2014.

\bibitem[Tan16]{tang2016convergence}
Ping Tak~Peter Tang.
\newblock Convergence of lca flows to (c) lasso solutions.
\newblock {\em arXiv preprint arXiv:1603.01644}, 2016.

\bibitem[TDVR01]{thorpe2001spike}
Simon Thorpe, Arnaud Delorme, and Rufin Van~Rullen.
\newblock Spike-based strategies for rapid processing.
\newblock {\em Neural networks}, 14(6-7):715--725, 2001.

\bibitem[TFM96]{thorpe1996speed}
Simon Thorpe, Denis Fize, and Catherine Marlot.
\newblock Speed of processing in the human visual system.
\newblock {\em nature}, 381(6582):520, 1996.

\bibitem[TKN07]{tero2007mathematical}
Atsushi Tero, Ryo Kobayashi, and Toshiyuki Nakagaki.
\newblock A mathematical model for adaptive transport network in path finding
  by true slime mold.
\newblock {\em Journal of theoretical biology}, 244(4):553--564, 2007.

\bibitem[TLD17]{tang2017sparse}
Ping Tak~Peter Tang, Tsung-Han Lin, and Mike Davies.
\newblock Sparse coding by spiking neural networks: Convergence theory and
  computational results.
\newblock {\em arXiv preprint arXiv:1705.05475}, 2017.

\bibitem[TMS14]{teka2014neuronal}
Wondimu Teka, Toma~M Marinov, and Fidel Santamaria.
\newblock Neuronal spike timing adaptation described with a fractional leaky
  integrate-and-fire model.
\newblock {\em PLoS computational biology}, 10(3):e1003526, 2014.

\bibitem[ZMD11]{zylberberg2011sparse}
Joel Zylberberg, Jason~Timothy Murphy, and Michael~Robert DeWeese.
\newblock A sparse coding model with synaptically local plasticity and spiking
  neurons can account for the diverse shapes of v1 simple cell receptive
  fields.
\newblock {\em PLoS computational biology}, 7(10):e1002250, 2011.

\end{thebibliography}

\appendix
\section{Missing proofs for Theorem~\ref{thm:l1}}\label{sec:missing proofs}
\subsection{Proofs for the properties of ideal and auxiliary SNN}\label{sec:missing proofs ideal auxiliary SNN}
\begin{proof}[Proof of Lemma~\ref{lem:ideal partition}]
	Let us start with an observation on Definition~\ref{def:ideal partition} about the points on the boundary of  the ideal polytope $\mathcal{P}_{A,1-\tau}$.
	\begin{claim}\label{claim:ideal active PD}
		If $A$ is non-degenerate, then for any $\bv^\text{ideal}\in\partial\mathcal{P}_{A,1-\tau}$, $\text{rank}(A_{\Gamma(\bv^\text{ideal})})=|\Gamma(\bv^\text{ideal})|$. Thus, $A_{\Gamma(\bv^\text{ideal})}^\top A_{\Gamma(\bv^\text{ideal})}$ is positive definite.
	\end{claim}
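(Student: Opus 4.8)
The plan is to recast the claim as the statement that the columns of $A$ selected by $\Gamma(\bv^\text{ideal})$ are linearly independent, and to derive this from non-degeneracy once two structural facts are in place: (a) $\Gamma(\bv^\text{ideal})$ never contains an anti-parallel pair $\{i,-i\}$, and (b) $|\Gamma(\bv^\text{ideal})|\le m$, i.e.\ at most $m$ walls are tight at a boundary point. Granting (a), the vectors $\{A_j:j\in\Gamma(\bv^\text{ideal})\}$ are, up to negating some of them (which changes neither the column span nor the rank), a set of distinct columns of $A$; granting (b), there are at most $m$ of them, so by non-degeneracy they are linearly independent, which is exactly $\text{rank}(A_{\Gamma(\bv^\text{ideal})})=|\Gamma(\bv^\text{ideal})|$. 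The positive-definiteness of $A_{\Gamma(\bv^\text{ideal})}^\top A_{\Gamma(\bv^\text{ideal})}$ is then immediate: for $\bz\neq\mathbf{0}$ we have $\bz^\top A_{\Gamma(\bv^\text{ideal})}^\top A_{\Gamma(\bv^\text{ideal})}\bz=\|A_{\Gamma(\bv^\text{ideal})}\bz\|_2^2>0$ since $A_{\Gamma(\bv^\text{ideal})}$ has trivial kernel. So the whole content lies in (a) and (b).

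Fact (a) is a one-liner: if $i,-i\in\Gamma(\bv^\text{ideal})$ then $1-\tau=\langle A_i,\bv^\text{ideal}\rangle=-\langle A_{-i},\bv^\text{ideal}\rangle=-(1-\tau)$, forcing $\tau=1$ and contradicting $\tau\in(0,1)$. Using (a) I would rewrite the active walls as coming from distinct indices $i_1,\dots,i_{|\Gamma|}\in[n]$ with signs $\sigma_1,\dots,\sigma_{|\Gamma|}\in\{-1,1\}$, so that $A_{i_k}^\top\bv^\text{ideal}=\sigma_k(1-\tau)$ for every $k$.

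Fact (b) is the step I expect to be the main obstacle. The intended route: suppose $|\Gamma|\ge m+1$ and take the first $m$ indices $S=\{i_1,\dots,i_m\}$; by non-degeneracy $A_S$ is invertible, so $\bv^\text{ideal}=(1-\tau)(A_S^\top)^{-1}\boldsymbol{\sigma}$ with $\boldsymbol{\sigma}=(\sigma_1,\dots,\sigma_m)$, and the $(m{+}1)$-st active-wall equation then forces this particular vertex to lie on yet another wall; writing $A_{i_{m+1}}=A_S\bc$ it becomes the affine relation $\bc^\top\boldsymbol{\sigma}=\sigma_{m+1}$ among $m{+}1$ columns of $A$. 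The honest difficulty is that bare non-degeneracy --- and even the full niceness of Definition~\ref{def:nice} --- does not by itself exclude such a configuration: one really needs the family of walls $\{W_j\}_{j\in[\pm n]}$ to be in general position, meaning no $m{+}1$ of them are concurrent. I would therefore either add this general-position property as a hypothesis of Claim~\ref{claim:ideal active PD} or fold it into a strengthened notion of niceness, noting that it holds for generic $A$ and in particular almost surely when $A$ is drawn from the rotational symmetry model (cf.\ Lemma~\ref{lem:gamma lb of RSM}), which is the regime that matters for the rest of the paper. Under that property $|\Gamma(\bv^\text{ideal})|\le m$ follows, and combining it with the first paragraph completes the proof and feeds the case analysis in the proof of Lemma~\ref{lem:ideal partition}.
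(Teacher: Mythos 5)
Your argument is sound, and it is worth noting up front that the paper itself offers no proof of Claim~\ref{claim:ideal active PD} at all: it is asserted as an ``observation'' inside the proof of Lemma~\ref{lem:ideal partition} and never justified. So your write-up supplies a missing argument rather than an alternative to an existing one. Your reduction is the natural one: (a) no antipodal pair $\{i,-i\}$ can be active since $\tau\in(0,1)$, so the active columns are distinct columns of $A$ up to sign; given (b) $|\Gamma(\bv^\text{ideal})|\leq m$, non-degeneracy (every $m\times m$ submatrix has full rank, hence any $\leq m$ columns are independent) gives the rank statement, and positive definiteness of $A_{\Gamma(\bv^\text{ideal})}^\top A_{\Gamma(\bv^\text{ideal})}$ follows from the trivial kernel. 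You are also right that (b) is the real content and that \emph{non-degeneracy alone does not imply it}: for $m\geq3$ one can place more than $m$ unit columns on the sphere $\{\ba:\ \ba^\top\bv_0=1-\tau\}$ so that $m+1$ walls are concurrent at a boundary point while every $m\times m$ submatrix is still invertible, and then the claim as literally stated fails. So the hypothesis of the claim is too weak, and your proposal to strengthen it is the right instinct.

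One refinement, though: you do not need to introduce a new ``general position'' hypothesis, because the paper's own niceness condition is evidently intended to carry it. Condition (2) of Definition~\ref{def:nice} asks that any two \emph{distinct vertices} of $\mathcal{P}_{A,1}$ be at distance at least $\gamma$, where a vertex is defined by a pair $(S,\bb)$ with $|S|=m$, $\bb\in\{-1,1\}^m$; under the reading that vertices with different defining data count as distinct, $\gamma(A)>0$ forbids two such solution points from coinciding, which is exactly the statement that no $m+1$ walls $\{A_j^\top\bv=1\}$, $j\in[\pm n]$, are concurrent (and the same holds for the rescaled walls of $\mathcal{P}_{A,1-\tau}$). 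That this is the intended reading is confirmed by the unnamed uniqueness lemma in Section~\ref{sec:nice}, where the paper deduces $|S|\leq m$ for the set of tight dual constraints directly from $\gamma(A)>0$. Since every application of Claim~\ref{claim:ideal active PD} (via Lemma~\ref{lem:ideal partition}) occurs under the assumption that $A$ is nice, the correct fix is simply to replace ``non-degenerate'' by ``$\gamma(A)>0$'' in the claim's hypothesis and cite condition (2) for your step (b); your appeal to genericity/RSM (Lemma~\ref{lem:gamma lb of RSM}) is then unnecessary. With that substitution your proof is complete and fills a genuine hole in the paper.
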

	
	Next, let us show that for $\bv^\text{ideal}_1\neq\bv^\text{ideal}_2\in\mathcal{P}_{A,1-\tau}$, $S_{\bv^\text{ideal}_1}\cap S_{\bv^\text{ideal}_2}=\emptyset$. It is trivially true when at least one of them does not lie on the boundary\footnote{Note that $\bv^\text{ideal}$ does not lie on the boundary of $\mathcal{P}_{A,1-\tau}$ if and only if $\Gamma(\bv^\text{ideal})=\emptyset$.} of $\mathcal{P}_{A,1-\tau}$. Now, consider the case where both of them lie on the boundary of $\mathcal{P}_{A,1-\tau}$ and denote their active set as $\Gamma_1=\Gamma(\bv^\textit{ideal}_1)$ and $\Gamma_2=\Gamma(\bv^\textit{ideal}_2)$. To prove from contradiction, suppose there exists $\bv\in S_{\bv^\text{ideal}_1}\cap S_{\bv^\text{ideal}_2}$. By definition, we have
	\begin{align*}
	\bv &= \bv^\text{ideal}_1 + A_{\Gamma_1}^\top\bz_1\\
	&=\bv^\text{ideal}_2 + A_{\Gamma_2}^\top\bz_2,
	\end{align*}
	where $\bz_1,\bz_2\geq0$. Let $\Gamma=\Gamma_1\cap\Gamma_2$. Consider the following cases.
	\begin{itemize}
		\item ($\Gamma=\Gamma_1=\Gamma_2$) By Definition~\ref{def:ideal partition}, we have $A_\Gamma^\top\bv^\text{ideal}_1=A_\Gamma^\top\bv^\text{ideal}_2=\mathbf{1}$ and thus
		\begin{equation*}
		(\bz_2-\bz_1)^\top A_\Gamma^\top A_\Gamma(\bz_2-\bz_1)=(\bz_2-\bz_1)^\top A_\Gamma^\top(\bv^\text{ideal}_1-\bv^\text{ideal}_2)=0.
		\end{equation*}
		As $A_\Gamma^\top A_\Gamma$ is positive definite by Claim~\ref{claim:ideal active PD}, we have $\bz_1=\bz_2$ and $\bv^\text{ideal}_1=\bv^\text{ideal}_2$, which is a contradiction.
		
		\item ($\Gamma_1\neq\Gamma_2$) Without loss of generality, assume $\Gamma_1\backslash\Gamma\neq\emptyset$ and $\bz_1\neq\mathbf{0}$. By Definition~\ref{def:ideal partition}, we have
		\begin{align*}
		A_{\Gamma_1\backslash\Gamma_2}^\top\left(\bv^\text{ideal}_1-\bv^\text{ideal}_2\right)&>\mathbf{0},\\
		A_{\Gamma_2\backslash\Gamma_1}^\top\left(\bv^\text{ideal}_1-\bv^\text{ideal}_2\right)&\leq\mathbf{0},\\
		A_{\Gamma}^\top\left(\bv^\text{ideal}_1-\bv^\text{ideal}_2\right)&=\mathbf{0}.
		\end{align*}
		As $\bz_1\neq\mathbf{0}$, we then have
		\begin{align*}
		\|A_{\Gamma_2}\bz_2-A_{\Gamma_1}\bz_1\|_2^2 &= \left(A_{\Gamma_2}\bz_2-A_{\Gamma_1}\bz_1\right)^\top\left(\bv^\text{ideal}_1-\bv^\text{ideal}_2\right)\\
		&=\left(-A_{\Gamma_1\backslash\Gamma}\bz_1|_{\Gamma_1\backslash\Gamma}\right)^\top\left(\bv^\text{ideal}_1-\bv^\text{ideal}_2\right)\\
		&+\left(A_{\Gamma_2\backslash\Gamma}\bz_2|_{\Gamma_2\backslash\Gamma}\right)^\top\left(\bv^\text{ideal}_1-\bv^\text{ideal}_2\right)\\
		&+\left(A_{\Gamma}\bz_2|_{\Gamma}-A_{\Gamma}\bz_1|_{\Gamma}\right)^\top\left(\bv^\text{ideal}_1-\bv^\text{ideal}_2\right)\\
		&<0.
		\end{align*}
		Note that the reason why the last inequality holds is because $\left(-A_{\Gamma_1\backslash\Gamma}\bz_1|_{\Gamma_1\backslash\Gamma}\right)^\top\left(\bv^\text{ideal}_1-\bv^\text{ideal}_2\right)<0$.
	\end{itemize}
	Finally, it is easy to see that $\{S_{\bv^\text{ideal}}\}_{\bv^\text{ideal}\in\mathcal{P}_{A,1-\tau}}$ covers $\mathcal{P}_{A,1}$ and thus we conclude that it is indeed a partition for $\mathcal{P}_{A,1}$.
\end{proof}

\subsection{Proofs for the convergent analysis of solving $\ell_1$ minimization}\label{sec:missing proofs convergent analysis l1}

\begin{proof}{Lemma~\ref{lemma:idealalgorithm-OPTbounds}}
	For any $t\geq0$, define the following perturbed program of (\ref{op:basispursuit}) and its dual.
	\vspace{3mm}
	\begin{minipage}{\linewidth}
		\begin{minipage}{0.45\linewidth}
			\begin{equation}\label{op:basispursuit-ideal-perturbed}
			\begin{aligned}
			& \underset{\bx}{\text{minimize}}
			& & \|\bx\|_1 \\
			& \text{subject to}
			& & A\bx-A\bx^{\text{ideal}}(t)=0
			\end{aligned}
			\end{equation}
		\end{minipage}
		\begin{minipage}{0.45\linewidth}
			\begin{equation}\label{op:basispursuit-ideal-perturbed-dual}
			\begin{aligned}
			& \underset{\bv\in\R^m}{\text{maximize}}
			& & (A\bx^{\text{ideal}}(t))\top\bv \\
			& \text{subject to}
			& & \|A^\top\bv\|_{\infty}\leq1.
			\end{aligned}
			\end{equation}
		\end{minipage}
	\end{minipage}
	
	Note that $\bx^{\text{ideal}}(t)$ is treated as a given constant to the optimization program. It turns out that the ideal algorithm optimizes this primal-dual perturbed program at time $t$ with the following parameters.
	\begin{lemma}\label{lemma:proofofidealalgorithm-KKT}
		For any $t\geq0$, $(\bx^*,\bv^*) = (\bx^{\text{ideal}}(t),\bv^{\text{ideal}}(t))$ is the optimal solutions of (\ref{op:basispursuit-ideal-perturbed}).
	\end{lemma}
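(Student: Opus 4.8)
The plan is to certify the pair $(\bx^\text{ideal}(t),\bv^\text{ideal}(t))$ as primal-dual optimal for the perturbed basis-pursuit program~\eqref{op:basispursuit-ideal-perturbed} and its dual~\eqref{op:basispursuit-ideal-perturbed-dual} by exhibiting it as a matched feasible primal-dual pair: since $\ell_1$ minimization is (equivalent to) a linear program, a primal feasible point together with a dual feasible point whose objective values coincide are automatically both optimal. Concretely, I would verify the three KKT conditions of the perturbed program: primal feasibility of $\bx^\text{ideal}(t)$, dual feasibility of $\bv^\text{ideal}(t)$, and complementary slackness (equivalently, that $A^\top\bv^\text{ideal}(t)$ is a subgradient of $\|\cdot\|_1$ at $\bx^\text{ideal}(t)$, so that $\langle\bx^\text{ideal}(t),A^\top\bv^\text{ideal}(t)\rangle=\|\bx^\text{ideal}(t)\|_1$).

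The first two checks are short. Primal feasibility is immediate since $A\bx^\text{ideal}(t)-A\bx^\text{ideal}(t)=0$ by construction. For dual feasibility I would invoke the ideal coupling (Definition~\ref{def:ideal coupling}) together with Lemma~\ref{lem:ideal SNN unchaged} and Lemma~\ref{lem:ideal SNN dynamics}: these guarantee that $\bv^\text{ideal}(t)$ never leaves the ideal polytope $\mathcal P_{A,1-\tau}\subseteq\mathcal P_{A,1}=\{\bv:\|A^\top\bv\|_\infty\le1\}$, which is exactly the feasible region of~\eqref{op:basispursuit-ideal-perturbed-dual}. (Here one must carry the factor $1-\tau$ carefully — the ideal SNN saturates walls at level $1-\tau$ rather than $1$ — so strictly one works with the $(1-\tau)$-rescaled dual variable, or equivalently states the slackness identity with a $1-\tau$ on the right-hand side; I would flag this as the one genuinely fiddly point.)

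The heart of the proof is complementary slackness, which reduces to a sign-matched tightness statement: for every $i$ in the support of $\bx^\text{ideal}(t)$ one needs $\mathrm{sgn}(\bx^\text{ideal}_i(t))\cdot A_i^\top\bv^\text{ideal}(t)=1-\tau$. I would read this off the fact that $\bx^\text{ideal}(t)$ is, by Definition~\ref{def:ideal solution}, the conic projection of $\bb$ onto $\mathcal C_{A,\Gamma(\bv^\text{ideal}(t))}$: its support is precisely $\{|j|:j\in\Gamma^*(\bv^\text{ideal}(t))\}$, and for each $j\in\Gamma^*(\bv^\text{ideal}(t))$ the coordinate $\bx^\text{ideal}_{|j|}(t)$ carries the sign of $j$ (using that a valid active set cannot contain both $i$ and $-i$, since a wall at $1-\tau$ and its mirror cannot be simultaneously active for $\tau\in(0,1)$). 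Combining this with the defining equality $A_j^\top\bv^\text{ideal}(t)=1-\tau$ for all $j\in\Gamma(\bv^\text{ideal}(t))\supseteq\Gamma^*(\bv^\text{ideal}(t))$ yields the desired tightness, hence $\langle\bx^\text{ideal}(t),A^\top\bv^\text{ideal}(t)\rangle=(1-\tau)\|\bx^\text{ideal}(t)\|_1$ and the matched objective values. The KKT stationarity/slackness relations for the conic projection already derived inside the proof of Lemma~\ref{lem:ideal solution properties} — namely $(A\bx^\text{ideal}(t))^\top(\bb-A\bx^\text{ideal}(t))=0$ and the complementary slackness with the conic-projection dual variable — are what pin down $\Gamma^*$ and the coordinate signs, so I would reuse them rather than re-derive them.

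I expect the main obstacle to be purely bookkeeping: keeping the normalization consistent between the SNN dynamics, which live in $\mathcal P_{A,1-\tau}$, and the perturbed dual program, whose feasible polytope is $\mathcal P_{A,1}$. A careless identification of $\bv^\text{ideal}(t)$ with the dual optimum of the $\|A^\top\bv\|_\infty\le1$ program produces a spurious $\tau\|\bx^\text{ideal}(t)\|_1$ duality gap; once the rescaling is threaded through, the statement is a direct LP-optimality verification with no hard inequality to establish.
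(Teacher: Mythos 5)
Your proposal is correct and takes essentially the same route as the paper: both certify $(\bx^{\text{ideal}}(t),\bv^{\text{ideal}}(t))$ by checking the KKT conditions of the perturbed program~\eqref{op:basispursuit-ideal-perturbed} --- trivial primal feasibility, dual feasibility because the ideal SNN never leaves the dual polytope, and the $\ell_1$-subgradient/sign-tightness condition, handled by exactly the same case split between coordinates in the active set and coordinates with $\bx^{\text{ideal}}_i(t)=0$. The one point where you go beyond the paper is the $1-\tau$ normalization: the paper's proof asserts $A_i^\top\bv^{\text{ideal}}(t)=1$ for active $i$ even though Definition~\ref{def:ideal partition} places the active walls at level $1-\tau$, so your explicit rescaling of the dual variable (or stating tightness at $1-\tau$) patches a genuine sloppiness in the paper's own argument rather than adding an unnecessary complication, and it only perturbs the downstream bound in Lemma~\ref{lemma:idealalgorithm-OPTbounds} by a harmless factor $1/(1-\tau)$.
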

	\begin{proof}{Proof of Lemma~\ref{lemma:proofofidealalgorithm-KKT}}
		We simply check the KKT condition. Since the program can be rewritten as a linear program, it satisfies the regularity condition of the KKT condition.
		
		First, the primal and the dual feasibility can be verified by the dynamics of ideal algorithm. That is, $A\bx^*-A\bx^{\text{ideal}}(t)=0$ and $\|A^\top\bv^*\|_{\infty}\leq1$. Next, consider the Lagrangian of~\eqref{op:basispursuit-ideal-perturbed} as follows.
		\begin{align*}
		\mathcal{L}(\bx,\bv) &= \|\bx\|_1 - \bv^\top (A\bx-A\bx^{\text{ideal}}(t)),\\
		\nabla_{\bx}\mathcal{L}(\bx,\bv) &= \nabla\|\bx\|_1-A^\top \bv.
		\end{align*}
		Now, let's verify that the gradient of the Lagrangian over $\bx$ is vanishing at $(\bx^*,\bv^*) = (\bx^{\text{ideal}}(t),\bv^{\text{ideal}}(t))$. That is,  $0\in\nabla_{\bx}\mathcal{L}(\bx^*,\bv^*)=\nabla\|\bx\|_1-A^\top \bv$. Consider two cases as follows. For any $i\in[n]$,
		\begin{enumerate}[label=(\arabic*)]
			\item When $i,-i\notin\Gamma^{\text{ideal}}(t)$. We have $\Big(\bx^{\text{ideal}}(t) \Big)_i=0$, \ie the sub-gradient of the $i$th coordinate of $\|\bx^{\text{ideal}}(t)\|_1$ lies in $[-1,1]$. As $A_i\top\bv^{\text{ideal}}(t)\in[-1,1]$, we have $A_i\top\bv^{\text{ideal}}(t)\in\partial_{\bx_i}\|\bx^{\text{ideal}}(t)\|_1$.
			\item When $i\in\Gamma^{\text{ideal}}(t)$ (or $-i\in\Gamma^{\text{ideal}}(t)$). We have $A_i\top\bv^{\text{ideal}}(t)=1$ (or $A_i\top\bv^{\text{ideal}}(t)=-1$). As $\text{sgn}\Big(\bx^{\text{ideal}}(t) \Big)_i=1$ (or $\text{sgn}\Big(\bx^{\text{ideal}}(t) \Big)_i=-1$), we have $A_i\top\bv^{\text{ideal}}(t)=\text{sgn}\Big(\bx^{\text{ideal}}(t) \Big)_i=\partial_{\bx_i}\|\bx^{\text{ideal}}(t)\|_1$.
		\end{enumerate}
		Finally, the complementary slackness is satisfied because $A\bx^*-A\bx^{\text{ideal}}(t)=0$.
		As a result, we conclude that $(\bx^*,\bv^*)$ is the optimal solution of (\ref{op:basispursuit-ideal-perturbed}).
	\end{proof}
	
	Next, we are going to use the perturbation lemma in the Chapter 5.6 of~\cite{boyd2004convex} stated as follows.
	
	\begin{lemma}[perturbation lemma]
		Given the following two optimization programs.\\
		\begin{minipage}{\linewidth}
			\begin{minipage}{0.45\linewidth}
				\begin{equation}\label{op:perturbation-original}
				\begin{aligned}
				& \underset{\bx}{\text{minimize}}
				& & f(\bx) \\
				& \text{subject to}
				& & h(\bx)=\mathbf{0}.
				\end{aligned}
				\end{equation}
			\end{minipage}
			\begin{minipage}{0.45\linewidth}
				\begin{equation}\label{op:perturbation-perturbed}
				\begin{aligned}
				& \underset{\bx}{\text{minimize}}
				& & f(\bx) \\
				& \text{subject to}
				& & h(\bx)=\by.
				\end{aligned}
				\end{equation}
			\end{minipage}
		\end{minipage}
		Let $\OPT^{\text{original}}$ be the optimal value of the original program~\eqref{op:perturbation-original} and $\OPT^{\text{perturbed}}$ be the optimal value of the perturbed program~\eqref{op:perturbation-perturbed}. Let $\bv^*$ be the optimal dual value of the perturbed program~\eqref{op:perturbation-perturbed}. We have
		\begin{equation}
		\OPT^{\text{original}}\geq\OPT^{\text{perturbed}}+\by^\top\bv^*.
		\end{equation}
	\end{lemma}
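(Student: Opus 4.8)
The plan is to derive the stated inequality directly from weak duality for the original program together with strong duality (and dual attainment) for the perturbed program; this is exactly the global sensitivity argument of Chapter~5.6 of~\cite{boyd2004convex}, specialized to the case of equality constraints. Write $L(\bx,\bv)=f(\bx)+\bv^{\top}\!\bigl(h(\bx)-\by\bigr)$ for the Lagrangian of the perturbed program~\eqref{op:perturbation-perturbed}, and let $g(\bv)=\inf_{\bx}L(\bx,\bv)$ be its dual function. We only apply the lemma in situations where the perturbed program is a linear program with a finite optimum and a feasible dual point (namely the programs~\eqref{op:basispursuit-ideal-perturbed}), so strong duality holds and the dual optimum is attained at $\bv^{*}$; hence $\OPT^{\text{perturbed}}=g(\bv^{*})=\inf_{\bx}L(\bx,\bv^{*})$.

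The key step is to evaluate this infimum at an arbitrary point $\bx$ that is feasible for the \emph{original} program. If $h(\bx)=\mathbf{0}$, then $L(\bx,\bv^{*})=f(\bx)-\by^{\top}\bv^{*}$, and since $g(\bv^{*})$ is the infimum of $L(\cdot,\bv^{*})$ over all $\bx$ we obtain $\OPT^{\text{perturbed}}\le f(\bx)-\by^{\top}\bv^{*}$ for every such $\bx$. Taking the infimum of the right-hand side over all $\bx$ with $h(\bx)=\mathbf{0}$ gives $\OPT^{\text{perturbed}}\le \OPT^{\text{original}}-\by^{\top}\bv^{*}$, which rearranges to $\OPT^{\text{original}}\ge \OPT^{\text{perturbed}}+\by^{\top}\bv^{*}$, as claimed. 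Note this last step uses only infima, so it remains valid even if the original program has no optimal solution (and is trivial if it is infeasible, since then $\OPT^{\text{original}}=+\infty$).

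There is no real obstacle here: the argument is a one-line manipulation of the Lagrangian, and the only points requiring care are (i) the sign convention for equality multipliers — we adopt the convention under which the dual of $\min f(\bx)$ subject to $h(\bx)=\by$ has Lagrangian $f(\bx)+\bv^{\top}(h(\bx)-\by)$, which is consistent with the dual program~\eqref{op:basispursuit-ideal-perturbed-dual} written later — and (ii) checking that strong duality and dual attainment actually hold for the perturbed program. In our application the latter is immediate because~\eqref{op:basispursuit-ideal-perturbed} is a linear program whose optimum is finite, and an explicit optimal dual solution is furnished by Lemma~\ref{lemma:proofofidealalgorithm-KKT}. Everything else reduces to plugging a feasible point of the original program into the Lagrangian of the perturbed one.
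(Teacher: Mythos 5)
Your proof is correct and is essentially the argument the paper itself relies on: the paper states this lemma without proof, importing it from Chapter~5.6 of~\cite{boyd2004convex} (cf.\ Theorem~\ref{thm:perturbation}), and your weak-duality/strong-duality manipulation of the Lagrangian of the perturbed program is exactly that textbook derivation, including the correct observation that strong duality and dual attainment for the perturbed program (an LP with finite optimum in the application) are what is actually needed. One small caveat: under your convention $L(\bx,\bv)=f(\bx)+\bv^{\top}\left(h(\bx)-\by\right)$ --- which is indeed the convention under which the stated inequality $\OPT^{\text{original}}\geq\OPT^{\text{perturbed}}+\by^{\top}\bv^{*}$ holds --- the dual of~\eqref{op:basispursuit-ideal-perturbed} comes out as maximizing $-(A\bx^{\text{ideal}}(t))^{\top}\bv$ over $\|A^{\top}\bv\|_{\infty}\leq1$, i.e.\ the negation of~\eqref{op:basispursuit-ideal-perturbed-dual} as written (which was derived from the Lagrangian with a minus sign), so your claimed consistency is off by a sign flip $\bv\mapsto-\bv$; this is harmless for the lemma and for the downstream use, since the final bound only invokes $|\by^{\top}\bv^{*}|$ via Cauchy--Schwarz.
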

	
	Now, think of~\eqref{op:basispursuit} as the original program and~\eqref{op:basispursuit-ideal-perturbed} as the perturbed program. Namely, $f(\bx)=\|\bx\|_1$, $h(\bx)=A\bx-\bb$, and $\by=A\bx^{\text{ideal}}(t)-\bb$. By the perturbation lemma, we have
	\begin{align*}
	\OPT^{\ell_1}&\geq \|\bx^{\text{ideal}}(t)\|_1 +  \Big(A\bx^{\text{ideal}}(t)-\bb\Big)^\top\bv^{\text{ideal}}(t).
	\end{align*}
	As a result, the following upper bound holds.
	\begin{equation}\label{eq:proofofidealalgorithm-1}
	\|\bx^{\text{ideal}}(t)\|_1\leq\OPT^{\ell_1} + \|\bv^{\text{ideal}}(t)\|_2\cdot\|\bb-A\bx^{\text{ideal}}(t)\|_2.
	\end{equation}
	Finally, as $\bv^{\text{ideal}}(t)$ lies in the feasible region $\{\bv:\ A^\top \bv\|_{\infty}\leq1 \}$ and the range space of $A$, we can upper bound the $\|\bv^{\text{ideal}}(t)\|_2$ term in~\eqref{eq:proofofidealalgorithm-1} as follows.
	\begin{lemma}\label{lemma:proofofidealalgorithm-vbound}
		For any $\bv$ in the range space of $A$ and $\|A^\top \bv\|_{\infty}\leq1$, $\|\bv\|_2\leq\sqrt{\frac{n}{\lambda_{\min}}}$.
	\end{lemma}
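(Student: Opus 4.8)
The plan is to derive the bound by chaining two elementary inequalities: a norm–conversion step $\|A^\top\bv\|_2\le\sqrt{n}\,\|A^\top\bv\|_\infty$, which holds simply because $A^\top\bv$ lives in $\R^n$, together with a reverse spectral estimate $\|A^\top\bv\|_2^2\ge\lambda_{\min}\|\bv\|_2^2$, which is valid precisely because $\bv$ is restricted to lie in the range space of $A$. Multiplying these together immediately yields $\lambda_{\min}\|\bv\|_2^2\le n$, i.e. $\|\bv\|_2\le\sqrt{n/\lambda_{\min}}$.

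To make the spectral estimate precise, I would pass to the singular value decomposition $A=U\Sigma V^\top$, where $U\in\R^{m\times r}$ and $V\in\R^{n\times r}$ have orthonormal columns, $\Sigma=\mathrm{diag}(\sigma_1,\dots,\sigma_r)$ with $\sigma_1\ge\cdots\ge\sigma_r>0$, and $r=\mathrm{rank}(A)$. The range space of $A$ is the column space of $U$, so any $\bv$ in the range space can be written $\bv=U\bc$ with $\|\bv\|_2=\|\bc\|_2$; then $A^\top\bv=V\Sigma\bc$, and since $V$ has orthonormal columns, $\|A^\top\bv\|_2^2=\|\Sigma\bc\|_2^2=\sum_i\sigma_i^2\bc_i^2\ge\sigma_r^2\|\bc\|_2^2=\lambda_{\min}\|\bv\|_2^2$, using that $\sigma_r^2$ is exactly the smallest nonzero eigenvalue $\lambda_{\min}$ of $C=\ATA$. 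Combining with $\|A^\top\bv\|_2^2\le n\|A^\top\bv\|_\infty^2\le n$ (from the hypothesis $\|A^\top\bv\|_\infty\le1$) finishes the argument.

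There is essentially no real obstacle here; the one point that deserves care is that the reverse spectral inequality genuinely needs $\bv$ to lie in the range space of $A$ — it fails for arbitrary $\bv\in\R^m$ because $A^\top A$ has a nontrivial kernel — and it is exactly on that subspace that $AA^\top$ (equivalently, $\ATA$) has minimum nonzero eigenvalue $\lambda_{\min}$. This is why Lemma~\ref{lemma:proofofidealalgorithm-vbound} is stated with the hypothesis that $\bv$ lies in the range space of $A$, and why it can be applied to $\bv^{\text{ideal}}(t)$, which lies in that subspace by construction of the dual SNN.
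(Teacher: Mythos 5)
Your proof is correct and follows exactly the paper's argument: combine $\|A^\top\bv\|_2\leq\sqrt{n}\,\|A^\top\bv\|_\infty\leq\sqrt{n}$ with $\|A^\top\bv\|_2\geq\sqrt{\lambda_{\min}}\|\bv\|_2$, the latter valid because $\bv$ lies in the range space of $A$. Your SVD computation merely spells out the spectral inequality that the paper asserts in one line, so the two proofs are essentially identical.
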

	\begin{proof}{Proof of Lemma~\ref{lemma:proofofidealalgorithm-vbound}}
		As $\bv$ lies in the range space of $A$, we have $\|A^\top\bv\|_2\geq\sqrt{\lambda_{\min}}\|\bv\|_2$. Also, because $\|A^\top\bv\|_{\infty}\leq1$, we have $\|A^\top\bv\|_2\leq\sqrt{n}$. As a result, 
		\begin{equation*}
		\|\bv\|_2\leq\frac{\|A^\top\bv\|_2}{\sqrt{\lambda_{\min}}}\leq\sqrt{\frac{n}{\lambda_{\min}}}.
		\end{equation*}
	\end{proof}
	By~\eqref{eq:proofofidealalgorithm-1} and Lemma~\ref{lemma:proofofidealalgorithm-vbound}, Lemma~\ref{lemma:idealalgorithm-OPTbounds} holds.		
\end{proof}

\section{An inverse quasi-polynomial upper bound for the $\gamma$ of RSM}\label{sec:quasi poly ub for gamma of RSM}

In Lemma~\ref{lem:gamma lb of RSM}, we saw that $\gamma(A)>0$ with high probability when $A$ is sampled from the rotational symmetry model (RSM). As the choice of parameters (\textit{e.g.,} the spiking strength $\alpha$ and the discretization size $\Dt$) in Theorem~\ref{thm:l1} has a polynomial dependency on $\gamma(A)$, it would be nice if $\gamma(A)$ is as large as possible. However, in this section, we are going to show that for $A$ sampled from RSM, $\gamma(A)$ is upper bounded by inverse quasi-polynomial in $m$ if $n\geq\polylog(m)\cdot m$ and is upper bounded by inverse exponential in $m$ if $n\geq m^{1+\Omega(1)}$. We thank the anonymous reviewer from ITCS 2019 for pointing out the analysis of these upper bounds.

\begin{lemma}\label{lem:gamma ub of RSM}
	For any $m\in\N$ large enough, $0<\tau\leq\frac{m}{4}$, and $n\geq(2\log m/e^{-\tau})\cdot m$. Let $A\in\R^{m\times n}$ be a random matrix samples from RSM. Then, $\gamma(A)\leq e^{-\Omega(\tau\cdot\log m)}$ with high probability.
\end{lemma}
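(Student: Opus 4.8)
The plan is to show that, with probability $1-o(1)$, the matrix $A$ already violates the first (non-degeneracy) clause of Definition~\ref{def:nice} at scale $\gamma_0 := m^{-\tau/4}$. Since $\gamma(A)$ is by definition the largest $\gamma$ for which $A$ is simultaneously $\gamma$-non-degenerate and satisfies clauses (2)--(3), any such violation forces $\gamma(A)\le\gamma_0 = e^{-\Omega(\tau\log m)}$. Concretely, it suffices to exhibit (with high probability) a set $S\subseteq\{2,\dots,n\}$ of $m-1$ column indices with $\mathrm{dist}\big(A_1,\mathrm{span}\{A_j:j\in S\}\big)\le\gamma_0$: then $\Gamma := S\cup\{1\}$ has $|\Gamma|=m$, and the index $i=1$ witnesses $\|A_i-\Pi_{A_{\Gamma\setminus\{i\}}}A_i\|_2 = \mathrm{dist}(A_1,\mathrm{span}\{A_j:j\in S\})\le\gamma_0$.

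I would build such a near-dependency by a sequential ``greedy with threshold'' process. Fix $A_1$ as the target, initialise the subspace $V:=\{\mathbf{0}\}$ (the span of the columns accepted so far) and the residual $r:=\Pi_{V^\perp}A_1$, and scan $A_2,A_3,\dots,A_n$ in order. On reaching $A_j$, compute $\beta := \langle r/\|r\|,\ \widehat{\Pi_{V^\perp}A_j}\rangle$ (the normalised overlap of $r$ with the part of $A_j$ orthogonal to $V$); if $|\beta|>t_{\dim V+1}$, where $t_{\cdot}$ is a threshold depending only on the current dimension (to be chosen, possibly $0$), then \emph{accept} $A_j$ --- append it to $V$, recompute $r$ --- otherwise \emph{discard} it; stop as soon as $\dim V=m-1$, and declare failure if the columns run out first. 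The key structural fact is that, conditioned on everything revealed before $A_j$ is examined, $\widehat{\Pi_{V^\perp}A_j}$ is uniform on the unit sphere of $V^\perp$: $A_j$ is independent of $A_1$ and of the previously-scanned columns (hence of $V$ and $r$), and $V^\perp$ is a fixed subspace given the past, so rotational symmetry applies. Thus $\beta$ is distributed as the first coordinate of a uniform unit vector in $\R^{\dim V^\perp}$, independently of the past; so the numbers of columns consumed in successive phases are independent geometric variables, and if the $k$-th accepted column (when $\dim V^\perp = m-k+1 =: d_k$) has threshold $t_k$ then the final residual obeys $\mathrm{dist}(A_1,\mathrm{span}\,S)^2 = \prod_k(1-\beta_k^2)\le\prod_k(1-t_k^2)$.

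It remains to choose the thresholds. For the phases with $d_k$ large (the first $m-1-\ell$ of them, $\ell$ chosen below) take $t_k=0$: such accepts cost essentially one column apiece and make no progress, but they are cheap. For the last $\ell$ phases --- where $d_k$ runs through $2,3,\dots,\ell+1$ --- pick $t_k$ so that the per-column acceptance probability equals $q:=8\ell(\ln m)/n$; using the standard tail bound $\Pr[|X_d|>t]\ge c'(1-t^2)^{(d-1)/2}/\sqrt{d}$ for the first coordinate $X_d$ of a uniform unit vector in $\R^d$, this forces $\ln\frac{1}{1-t_k^2}\ge\frac{2}{d_k-1}\ln\frac{c'}{\sqrt{d_k}\,q}$, so $\sum_k\ln\frac{1}{1-t_k^2}\ge 2\ln\!\big(\tfrac{c'}{\sqrt{\ell+1}\,q}\big)\sum_{j=1}^{\ell}\tfrac1j\ge 2\ln\!\big(\tfrac{c'}{\sqrt{\ell+1}\,q}\big)\ln\ell$. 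Choosing $\ell\asymp\min(n^{1/3},m-1)$ and invoking $n\ge 2e^{\tau}m\log m$ (so $\ln n\ge\tau+\log m$), a short computation --- with a minor case split according to whether $n^{1/3}\lessgtr m$ --- makes this at least $\Omega\big((\ln n)^2\big)=\Omega\big((\tau+\log m)^2\big)\ge\tfrac12\tau\log m$; hence $\prod_k(1-t_k^2)\le m^{-\tau/2}$ and $\mathrm{dist}(A_1,\mathrm{span}\,S)\le m^{-\tau/4}=\gamma_0$. Meanwhile the expected number of columns scanned is at most $(\text{\#trivial phases})+\ell/q\le m+\tfrac{n}{8\ln m}$, and a per-phase Chernoff bound on the independent geometrics ($\Pr[T_k>\tfrac{4\ln m}{q}]\le m^{-4}$) together with a union bound over the $O(m)$ phases shows that, with probability $\ge 1-m^{-3}$, the process terminates at $\dim V=m-1$ after scanning fewer than $n-1$ columns; so with high probability a valid $S$ (hence $\Gamma,i$) is produced, which proves the lemma.

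The hard part is not any single estimate but the overall architecture. The naive attempts --- ``two columns are nearly parallel'' or ``some column is close to the hyperplane spanned by $m-1$ random others'' --- both fail badly, because among only $\tilde O(m)$ random unit vectors in $\R^m$ no near-dependency of strength $m^{-\tau}$ is forced at a single scale; one is only guaranteed a near-dependency of strength $e^{-\Omega(\tau)}$ that way. One must instead extract a little progress at \emph{every} dimension $d$ --- about $\Theta(1/d)$ units of log-residual-reduction --- so that the harmonic sum $\Theta(\log m)$ of factors, each of size roughly $e^{-\Omega(\tau)}$, multiplies up to $m^{-\Omega(\tau)}$. The two delicate points are (i) preserving exact conditional independence across phases, which is why rejected columns are discarded rather than revisited, and (ii) realising that one should \emph{not} try to make progress in the high-dimensional phases (where the acceptance probability one would need is far too small for the available column budget) but instead race through them with threshold $0$ and concentrate the budget on the low-dimensional phases --- this is precisely what upgrades a bare $\Omega(\log m)$ in the exponent to the required $\Omega(\tau\log m)$.
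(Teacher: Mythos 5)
Your proposal is correct (modulo the routine estimates you defer), and its skeleton is the same as the paper's: both arguments build, column by column, a set $S$ of $m-1$ columns with $\mathrm{dist}\left(A_1,\mathrm{span}\{A_j:j\in S\}\right)\leq e^{-\Omega(\tau\log m)}$, using exactly the two ingredients you isolate --- conditional rotational invariance (a fresh column, projected onto the orthogonal complement of the current span, is uniform on the sphere of that complement) and the multiplicative identity $\|r_{\mathrm{new}}\|_2^2=\|r\|_2^2(1-\beta^2)$ --- together with the spherical-cap tail for one coordinate of a uniform unit vector (the paper's Claim on $\langle\bv_1,\bv_2\rangle^2$). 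Where you differ is the scheduling. The paper splits the remaining columns into $m$ buckets of size $k=\lfloor n/m\rfloor$ and, at every dimension $d$, asks only for squared overlap $\tau/d$, which a single random column achieves with probability about $e^{-\tau}$; the exponent $\Omega(\tau\log m)$ then comes from the harmonic sum in $\prod_d(1-\tau/d)$, and the hypothesis $n\geq 2e^{\tau}m\log m$ enters only via $ke^{-\tau}\geq 2\log m$ in the union bound. You instead race through the high dimensions with threshold $0$ and spend the whole column budget on the last $\ell\asymp\min(n^{1/3},m)$ dimensions, where acceptance probability $q\approx\ell\log m/n$ buys the larger per-step reduction $(1-t_d^2)\leq(q\sqrt{d}/c')^{2/(d-1)}$. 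Both work, but your closing claim that progress at high dimensions is unaffordable is not right: with threshold $\tau/d$ on the squared overlap the per-column acceptance probability is $\approx e^{-\tau}$ uniformly in $d$, and the per-dimension budget $n/m\geq 2e^{\tau}\log m$ is exactly enough --- that is the paper's simpler, balanced route. Conversely, your lopsided schedule is somewhat stronger in certain regimes: it gives an exponent of order $(\log n)^2$ (resp.\ $\log n\cdot\log m$ when $n\geq m^3$), so for constant $\tau$ it already yields an inverse quasi-polynomial bound, which the paper's statement reaches only by taking $\tau=\Theta(\log m)$.
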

\begin{proof}[Proof of Lemma~\ref{lem:gamma ub of RSM}]
	The high-level idea of the analysis is iteratively looking at the correlation between the first column of $A$ and the other columns. In particular, divide the rest of columns of $A$ into $m$ buckets each of size $k=\floor*{\frac{n}{m}}$. This will give us $m$ buckets of $k$ independent unit vectors in $\R^m$. The idea is then projecting $A_1$ to the subspace spanned by each bucket one by one and argue that the length of the projection decreases by a non-trivial factor. Before doing the formal analysis, let us first prove the following claim about the distribution of the inner product of two random unit vectors in $\R^m$.
	
	\begin{claim}\label{claim:inner product of random unit vector}
		Let $\bv_1,\bv_2$ be two independent random unit vector in $\R^m$. For any $m$ large enough and $z\in[0,\frac{1}{4}]$, $\Pr[\langle\bv_1,\bv_2\rangle^2\leq z]\leq1-e^{-mz}$. 
	\end{claim}
	\begin{proof}[Proof of Claim~\ref{claim:inner product of random unit vector}]
		Let $Z=\langle\bv_1,\bv_2\rangle^2$, the probability of $Z= z$ for any $z\in[0,1]$ can be computed by the equation for the surface area on an unit ball in $\R^m$. Concretely, $\Pr[Z= z]$ is proportional to $(1-z)^{\frac{m-3}{2}}$. Thus, the probability of $Z\leq z$ is
		$$
		\Pr[Z\leq z] = \frac{\int_0^z (1-t)^{\frac{m-3}{2}}dt}{\int_0^1(1-t)^{\frac{m-3}{2}}dt} = 1-(1-z)^{\frac{m-1}{2}}.
		$$
		
		When $z\in[0,\frac{1}{4}]$, we can use the approximation $e^{-2z}\leq(1-z)\leq e^{-z}$ and get $\Pr[Z\leq z]\leq1-e^{-mz}$.
	\end{proof}
	
	Now, let us start with the first bucket of $k$ independent random unit vectors in $\R^m$. From Claim~\ref{claim:inner product of random unit vector}, we know that the probability of existing $\bv_1$ in the bucket such that $\langle A_1,\bv_1\rangle^2>\frac{\tau}{m}$ is at least $1-(1-e^{-\tau})^k$. Let $\Gamma_1=\{\bv_1\}$, with probability at least $1-(1-e^{-\tau})^k$, 
	$$\|A_1-\Pi_{\Gamma_1}A_1\|_2^2\leq(1-\langle A_1,\bv)\rangle^2)\cdot\|A_1\|_2^2\leq(1-\frac{\tau}{m})\cdot\|A_1\|_2^2.$$
	For the second bucket, we consider the subspace of $\R^m$ orthogonal to $\Gamma_1$. Using the same argument, we can find $\bv_2$ in the second bucket such that with probability at least $1-(1-e^{-m\cdot\frac{\tau}{m-1}})^k$, $\frac{\langle \Pi_{\Gamma_1}A_1,\bv_2\rangle}{\|\Pi_{\Gamma_1}A_1\|_2\cdot\|\bv_2\|_2}>\frac{\tau}{m-1}$ and thus 
	$$\|A_1-\Pi_{\Gamma_2}A_1\|_2^2\leq(1-\frac{\tau}{m-1})\cdot\|\Pi_{\Gamma_1}A_1\|_2^2\leq(1-\frac{\tau}{m-1})\cdot(1-\frac{\tau}{m})\cdot\|A_1\|_2^2.$$
	Repeat the above argument for $m-1$ times and apply union bound, we have
	$$
	\|A_1-\Pi_{\Gamma_s}A_1\|_2^2\leq\prod_{i=1}^s(1-\frac{\tau}{m-i+1})\cdot\|A_1\|_2^2\leq e^{-\Omega(\tau\cdot\log m)}\cdot\|A_1\|_2^2
	$$
	with probability at least
	$$1-\sum_{i=1}^{m-1}(1-e^{-m\cdot\frac{\tau}{m-i+1}})^k\geq1-(m-1)\cdot e^{-ke^{-\tau}}\geq1-e^{-ke^{-\tau}+\log m}.$$
	By our choice of $\tau$ and $n$, we have $\|A_1-\Pi_{\Gamma_s}A_i\|_2\leq e^{-\Omega(\tau\cdot\log m)}\cdot\|A_1\|_2$ with probability $1-o(1)$.
\end{proof}

\end{document}